\newcommand{\norm}{\@ifstar\@opnorms\@opnorm}
\newcommand{\@opnorms}[1]{%
  \left|\mkern-1.5mu\left|\mkern-1.5mu\left|
   #1
  \right|\mkern-1.5mu\right|\mkern-1.5mu\right|
}
\newcommand{\@opnorm}[2][]{%
  \mathopen{#1|\mkern-1.5mu#1|\mkern-1.5mu#1|}
  #2
  \mathclose{#1|\mkern-1.5mu#1|\mkern-1.5mu#1|}
}
\newcommand{\SG}{\mathcal{S}}
\newcommand{\BG}{\mathcal{B}}
\newcommand{\sparsityProb}{p}
\renewcommand{\dim}{K}
\newcommand{\coef}{\bm{\alpha}} 
\newcommand{\dict}{\bm{D}}
\newcommand{\signal}{\bm{x}} 
\newcommand{\refdict}{\bm{D}^*}
\newcommand{\samplesize}{n}
\newcommand{\size}{K}
\newcommand{\E}{\mathbb{E}}
\newcommand{\N}{\mathcal{N}}
\newcommand{\inv}[1]{#1^{-1}}
\newcommand{\sgn}{\mathrm{sign}}
\newcommand{\BASIS}[1]{\mathbb{B}(#1)}
\newcommand{\R}{\mathbb{R}}
\newcommand{\I}{\mathbb{I}}
\newcommand{\1}{\bm{1}}
\newcommand{\tr}{\mathrm{tr}}
\begin{document}
\title{Unique Sharp Local Minimum in $\ell_1$-minimization Complete Dictionary Learning}
\author{
\name Yu Wang
\email wang.yu@berkeley.edu\\
\addr Department of Statistics\\
University of California, Berkeley\\
Berkeley, CA 94720-1776, USA
\AND 
\name Siqi Wu *
\email siqi@stat.berkeley.edu\\
\addr Citadel Securities\\
131 South Dearborn\\
Chicago, IL 60603, USA
\AND 
\name Bin Yu
\email binyu@berkeley.edu\\
\addr Department of Statistics and EECS\\
University of California, Berkeley\\
Berkeley, CA 94720-1776, USA}  \footnote{The views expressed in this paper reflect those of Siqi Wu and should not be interpreted to represent the views of Citadel Securities or its personnel.}
\editor{}
\maketitle
\begin{abstract}
We study the problem of globally recovering a dictionary from a set of signals via $\ell_1$-minimization. We assume that the signals are generated as {\it i.i.d.} random linear combinations of the $K$ atoms from a complete reference dictionary $\refdict\in \R^{\dim\times\dim}$, where the linear combination coefficients are from either a Bernoulli type model or exact sparse model. First, we obtain a necessary and sufficient norm condition for the reference dictionary $\refdict$ to be a sharp local minimum of the expected $\ell_1$ objective function. Our result substantially extends that of \cite{Wu2015}  and allows the combination coefficient to be non-negative. Secondly, we obtain an explicit bound on the region within which the objective value of the reference dictionary is minimal. Thirdly, we show that the reference dictionary is the unique sharp local minimum, thus establishing  the first known global property of $\ell_1$-minimization dictionary learning. Motivated by the theoretical results, we introduce a perturbation based test to determine whether a dictionary is a sharp local minimum of the objective function. In addition, we also propose a new dictionary learning algorithm based on Block Coordinate Descent, called DL-BCD, which is guaranteed to have monotonic convergence. Simulation studies show that DL-BCD has competitive performance in terms of recovery rate compared to many state-of-the-art dictionary learning algorithms. 
\end{abstract}
\begin{keywords}
dictionary learning, $\ell_1$-minimization, local and global identifiability, non-convex optimization, sharp local minimum
\end{keywords}
\section{Introduction}
Dictionary learning is a class of unsupervised learning algorithms that learn a data-driven representation from signals such as images, speech, and video. It has been widely used in many applications ranging from image imputation to texture synthesis \citep{Rubinstein2010,Mairal2009a,Peyre2009}. Compared to pre-defined dictionaries, data-driven dictionaries exhibit enhanced performance in blind source separation, image denoising and matrix completion. See, e.g., \cite{Zibulevsky2001,Kreutz-Delgado2003,Lesage2005,Elad2006,Aharon2006,
Mairal2009,Qiu2014} and the references therein. 

Dictionary learning can extract meaningful and interpretable patterns from scientific data \citep{Olshausen1996,Olshausen1997,Brunet2004,Wu2016}. The pioneering work of \cite{Olshausen1996} proposed a dictionary learning formulation that minimizes the sum of squared errors with the $\ell_1$ penalty on the linear coefficients to promote sparsity \citep{Tibshirani1991}. Using patches of natural images as signals, their learned dictionary is spatially localized, oriented, and bandpass, similar to the properties of receptive fields found in the primal visual cortex cells. Besides sparsity penalty, researchers also use non-negative constraints, which can efficiently discover part-based representations, leading to a line of research called non-negative matrix factorization (NMF) \citep{Lee2001}. \cite{Brunet2004} demonstrated that NMF has better performance in clustering context-dependent patterns in complex biological systems compared to self-organizing maps and other methods. 
\cite{Wu2016} combined NMF with a stability model selection criteria (staNMF) to analyze {\it Drosphila} early stage embryonic gene expression images. The learned patterns provide a biologically interpretable representation of gene expression patterns and are used to generate local transcription factor regulatory networks. 


{Despite many successful applications, dictionary learning formulations and algorithms are generally hard to analyze due to their non-convex nature. With different initial inputs, a dictionary learning algorithm typically outputs different dictionaries as a result of this non-convexity. 
For those who use the dictionary as a basis for downstream analyses, the choice of the dictionary may significantly impact the final conclusions. Therefore, a natural question to ask is: which dictionary should be used among different output dictionaries of an algorithm?}


Theoretical properties of dictionary learning have been studied under certain data generating models. In a number of recent works, the signals are generated as linear combinations of the columns of the true reference dictionary \citep{Gribonval2010,Geng2014,Jenatton2014}. Specifically, denote by  $\refdict\in\R^{d \times \dim}$ the reference dictionary and $\signal^{(i)}\in \R^d, i = 1,\ldots, \samplesize$ the signal vectors, we have:
\begin{align}\label{Eq:signal_model}
\signal^{(i)} \approx \refdict \coef^{(i)},
\end{align}
where $\coef^{(i)}\in \R^\dim$ denotes the sparse coefficient vector. If $K = d$, the dictionary is called \emph{complete}. If the matrix has more columns than rows, i.e., $K > d$, the dictionary is \emph{overcomplete}. 
Under the model \eqref{Eq:signal_model}, for any reasonable dictionary learning objective function, 
the reference dictionary $D^*$ ought to be equal or close to a local minimum.
This wellposedness requirement, also known as {\it local identifiability} of dictionary learning, turns out to be nontrivial. For a complete dictionary and noiseless signals, {\cite{Gribonval2010} studies the following $\ell_1$-minimization formulation:

\begin{align}\label{Eq:L1_original}
\mathrm{minimize}_{\dict, \{\bm \beta^{(i)}\}_{i = 1}^{\samplesize} } &  \sum_{i=1}^\samplesize \|\bm \beta^{(i)}\|_1.\\
\mathrm{subject~to~} & \|\dict_j\|_2 \leq 1, j = 1,\ldots, \size, \nonumber\\
& \signal^{(i)} = \dict\bm \beta^{(i)}, ~ i = 1,\ldots, \samplesize.
\end{align}}

\noindent They proved a sufficient condition for local identifiability under the Bernoulli-Gaussian model. A more refined analysis by \cite{Wu2015} gave a sufficient and almost necessary condition. The sufficient local identifiability condition in  \cite{Gribonval2010} was extended  
to the over-complete case \citep{Geng2014} and the noisy case \citep{Jenatton2014}. 

As most of dictionary learning formulations are solved by alternating minimization, local identifiability does not guarantee that the output dictionary is the reference dictionary. There are only limited results on how to choose an appropriate initialization. \cite{Arora2015} showed that their initialization algorithm guarantees that the output dictionary is within a small neighborhood of the reference dictionary when certain $\mu$-incoherence condition is met. In practice, initialization is usually done by using a random matrix or randomly selecting a set of signals as columns \citep{Mairal2014SPAMSV2.5}. These algorithms are typically run for multiple times and the dictionary with the smallest objective value is selected. 

{The difficulty of initialization is a major challenge of establishing the recovery guarantee that under some generative models, the output dictionary of an algorithm is indeed the reference dictionary.
This motivates the study of {\it global identifiability}. There are two versions of global identifiability. In the first version, we say that the reference dictionary $\refdict$ is globally identifiable with respect to an objective function $L(\cdot)$ if $\refdict$ is a global minimum of $L$. The second version, which is somewhat stricter, is that $\refdict$ is globally identifiable if all local minima of $L$ are the same as $\refdict$ up to sign permutation. If the second version of global identifiability holds, all local minima are global minimum and we do not need to worry about how to initialize.} Thus any algorithm capable of converging to a local minimum can recover the reference dictionary. For some matrix decomposition tasks such as low rank PCA \citep{Srebro2003} and matrix completion \citep{Ge2017}, despite the fact that the objective function is non-convex, the stricter version of global identifiability holds under certain conditions.   
For dictionary learning, several papers proposed new algorithms with theoretical recovery guarantees that ensure the output is close or equal to the reference dictionary. For the complete and noiseless case, \cite{Spielman2013} proposed a linear programming based algorithm that provably recovers the reference dictionary when the coefficient vectors are generated from a Bernoulli Gaussian model and contain at most $O(\sqrt{\dim})$ nonzero elements. 
\cite{Sun2017a,Sun2017b} improved the sparsity tolerance to $O(\dim)$ using a Riemannian trust region method. 
For over-complete dictionaries, \cite{Arora2014a} proposed an algorithm which performs an overlapping clustering followed by an averaging algorithm or a K-SVD type algorithm. 
Additionally, there is another line of research that focuses on the analysis of alternating minimization algorithms, including \cite{Agarwal2013, Agarwal2013a, Arora2014, Arora2015, Chatterji2017}.  
\cite{Barak2014} proposed an algorithm based on sum-of-square semi-definite programming hierarchy and proved its desirable theoretical performance with relaxed assumptions on coefficient sparsity under a series of moment assumptions.

Despite numerous studies of global recovery in dictionary learning, there are no global identifiability results for the $\ell_1$-minimization problem. As we illustrate in Section \ref{Sec:3}, the reference dictionary may not be the global minimum even for a simple data generation model. This motivates us to consider a different condition to distinguish the reference dictionary from other local minima. 
In this paper, we obtain a uniqueness characterization of the reference dictionary. We show that the reference dictionary is the unique ``sharp" local minimum (see Definition \ref{Def:sharplocalmin}) of the $\ell_1$ objective function when certain conditions are met -- in other words, there are no other sharp local minima  than the reference dictionary. 

Based on this new characterization and the observation that a sharp local minimum is more resilient to small perturbations, we propose a method to empirically test the sharpness of objective function at the reference dictionary. Furthermore, we also design a new algorithm to solve the $\ell_1$-minimization problem using Block Coordinate Descent (DL-BCD) and the re-weighting scheme inspired by \cite{Candes2008}. Our simulations demonstrate that  
the proposed method compares favorably with other state-of-the-art algorithms in terms of recovery rate.


Our work differs from other recent studies in two main aspects. Firstly, instead of proposing new dictionary learning formulations, we study the global property of the existing $\ell_1$-minimization problem that is often considered difficult in previous studies \citep{Wu2015,Mairal2009}. Secondly, our data generation models are novel and cover several important cases not studied by prior works, e.g., non-negative linear coefficients. Even though there is a line of research that focuses on non-negative dictionary learning in the literature \citep{Aharon2005,Hoyer2002,Arora2014}, the reference dictionary and the corresponding coefficients therein are both non-negative. 
In comparison, we allow the dictionary to have arbitrary values but only constrain the reference coefficients to be non-negative. This non-negative coefficient case is difficult to analyze and does not satisfy the recovery conditions in previous studies, for instance \cite{Barak2014,Sun2017a,Sun2017b}. 


The rest of this paper is organized as follows. Section \ref{Sec:2} introduces notations and basic assumptions. 
Section \ref{Sec:3} presents main theorems and discusses their implications. 
Section \ref{Sec:4} proposes the sharpness test and the block coordinate descent algorithm for dictionary learning (DL-BCD). Simulation results are provided in Section \ref{Sec:5}.


\section{Preliminaries}\label{Sec:2}

For a vector $\bm w\in \R^m$, denote its $j$-th element by $w_j$. For an arbitrary matrix $A\in\R^{m\times n}$, let $A[k,], A_j, A_{k,j}$ denote its $k$-th row, $j$-th column, and the $(k,j)$-th element respectively. Denote by $A[k, -j]\in \R ^{n-1}$ the $k$-th row of $A$ without its $j$-th entry. Let $\I$ denote the identity matrix of size $\dim$ and for $k\in \{1,\ldots, \dim\}$, $\I_k$ denotes $\I$'s $k$-th column, whose $k$-th entry is one and zero elsewhere. For a positive semi-definite square matrix $X\in \R^{\dim\times \dim}$, $X^{1/2}$ denotes its positive semi-definite square root.  We use $\|\cdot\|$ to denote vector norms and $\norm{\cdot}$ to denote matrix (semi-)norms. In particular, $\norm{\cdot}_F$ denotes matrix's Frobenius norm, whereas $\norm{\cdot}_2$ denotes the spectral norm.
For any two real functions $w(t), q(t):\R \rightarrow \R$, we denote $w(t) = \Theta (q(t))$ if there exist constants $c_1,c_2>0$ such that for any $t\in \R$, $c_1 < \frac{w(t)}{q(t)} < c_2$. If $q(t) > 0$ and $\lim_{q(t)\rightarrow 0} \frac{w(t)}{q(t)} = 0$, then we write $w(t) = o(q(t))$.
Define the indicator and the sign functions as
\[
\1(x = 0) = 
\left\{
\begin{array}{ll}
1 & x = 0\\
0 & x \neq 0
\end{array}
\right.,~
\sgn(x) =  
\left\{
\begin{array}{ll}
1 & x > 0\\
0 & x = 0\\
-1 & x < 0
\end{array}
\right..
\]

In dictionary learning, a dictionary is represented by a matrix $\dict\in\R^{d \times \dim}$. We call a column of the dictionary matrix an atom of the dictionary. In this paper, we consider complete dictionaries, that is, the dictionary matrix is square ($K=d$) and invertible. We also assume noiseless signal generation: denote by $\refdict$ the reference dictionary, the signal vector $\signal$ is generated from a linear model without noise: $\signal = \refdict \coef$.

Define $L$ to be our $\ell_1$ objective function for a complete dictionary $\dict$:

\begin{equation}\label{Eq:L}
L(\dict) = \frac{1}{n}\sum_{i=1}^\samplesize \|\dict^{-1}\signal^{(i)}\|_1.
\end{equation}
{When the dictionary is complete and invertible, it can be shown that the $\ell_1$-minimization formulation \eqref{Eq:L1_original} is equivalent to the following optimization problem:}
\begin{align}\label{Eq:main_finitesample}
\textrm{minimize}_{\dict\in \BASIS{\R^\dim}} L(\dict),
\end{align}
where $\BASIS{\R^\dim}$ is the set of all feasible dictionaries: 
\begin{align}\label{Eq:basis}
\BASIS{\R^\dim} \triangleq \left\{\dict \in \R^{\dim\times \dim}\Big| \|\dict_{1}\|_2 = \ldots = \| \dict_{\dim}\|_2 =  1,~\mathrm{rank}(\dict)=\dim\right\}.
\end{align}



{\noindent Here are several commonly used terminologies in dictionary learning.}
\begin{itemize}
\item {\em Sign-permutation ambiguity.} In most dictionary learning formulations, the order of the dictionary atoms as well as their signs do not matter. Let $P\in\R^{\dim \times \dim}$ be a permutation matrix and $\Lambda\in\R^{\dim \times \dim}$ a diagonal matrix with $\pm 1$ diagonal entries. The matrix $\dict' = \dict P \Lambda$ and $\dict$ essentially represent the same dictionary but $\dict' \neq \dict$ element-wise. 

\item {\em Local identifiability.} The reference dictionary $\refdict\in \BASIS{\R^\dim}$ is \emph{locally identifiable} with respect to $L$ if $\refdict$ is a local minimum of $L$.
Local identifiability is a minimal requirement for recovering the reference dictionary. It has been extensively studied under a variety of dictionary learning formulations \citep{Gribonval2010, Geng2014, Jenatton2014, Wu2015,Agarwal2013a,Schnass2014b}. 

\item{\em Global identifiability.} The reference dictionary $\refdict\in \BASIS{\R^\dim}$ is \emph{globally identifiable} with respect to $L$ if $\refdict$ is a global minimum of $L$. Clearly, whether global identifiability holds depends on the objective function and the signal generation model.
If the objective function is $\ell_0$ and the linear coefficients are generated from the Bernoulli Gaussian model, the reference dictionary is globally identifiable (see Theorem 3 in \cite{Spielman2013}). However, if the objective function is $\ell_1$, global identifiability might not hold. In Section \ref{Sec:3}, we give an example where the reference dictionary is only a local minimum but not a global minimum. This implies that global identifiability of $\ell_1$-minimization requires more stringent conditions. Therefore, we consider a variant of global identifiability, which is to show that, under certain conditions, the reference dictionary $\refdict$ is the \emph{unique sharp local minimum} of the dictionary learning objective function. In other words, no dictionary other than $\refdict$ is a sharp local minimum. Other dictionaries can still be local minima but cannot be {\em sharp} at the same time. This property allows us to globally distinguish the reference dictionary from other spurious local minima and can be used as a criterion to select the best dictionaries from a set of algorithm outputs.
Sharp local minimum, which is defined in Definition \ref{Def:sharplocalmin}, is a common concept in the field of optimization \citep{Dhara2011,polyak79}. However, to the best of our knowledge, we are the first to connect dictionary learning theory with sharp local minimum and use it to distinguish the reference dictionary from other spurious local minima.


\end{itemize}

\begin{definition}[Sharp local minimum] Let $L(\dict):\BASIS{\R^\dim} \rightarrow \R$ be a dictionary learning objective function. 
A dictionary $\dict^0\in \BASIS{\R^\dim}$ is a sharp local minimum of $L(\cdot)$ with sharpness $\epsilon$ if there exists $\delta>0$ such that for any $\dict \in \{\dict: \norm{\dict - \dict^0}_F < \delta\}$:
\[
L(\dict) - L(\dict^0) \geq \epsilon \norm{\dict - \dict^0 }_{F} + o(\norm{\dict^0 - \dict}_{F}).
\]
\label{Def:sharplocalmin}
\end{definition}

{\noindent\textbf{Remarks:}
The definition here can be viewed as a matrix analog of the sharp minimum in the one dimensional case. For a function $f:\R\rightarrow \R$, $v^0$ is a sharp local minimum of $f$ is $f(v) - f(v^0) \geq \epsilon |v - v^0| + o(|v - v^0|)$.
Note that the definition of sharp local minimum is different from the definition of strict local minimum, which means there is no other local minimum in its neighborhood. Sharp local minimum is always a strict local minimum but not vice versa. For example, $x = 0$ is a strict local minimum of the $\ell_q$ function $|x|^q$ for any $q \geq 0$ and is a sharp local minimum when $q \leq 1$. However, it is not a sharp local minimum when $q > 1$. }\\






For any $\dict\in \BASIS{\R^{\dim}}$, define $M(\dict) = \dict^T\dict$ as the collinearity matrix of $\dict$. For example, if the dictionary is an orthogonal matrix, $M(\dict) = \I$ is the identity matrix. If all the atoms in the dictionary has collinearity 1, then $M(\dict) = \1\1^T$ is a matrix whose elements are all ones. When the context is clear, we use $M$ instead of $M(\dict)$ for notation ease. Denote by $M^*$ the collinearity matrix for the reference dictionary $\refdict$. Also, define the matrix $B(\coef, M) \in \R^{\dim\times\dim}$ as
\[
(B(\coef, M))_{k, j} \triangleq \E \coef_j\sgn (\coef_k) - M_{j, k}\E |\coef_j|\quad \text{for $k, j = 1,\ldots, \dim$}.
\]
For any random vector $\coef$, define the semi-norm $\norm{\cdot}_{\coef}$ induced by $\coef$ as:
\[
\norm{A}_{\coef} \triangleq \sum_{k=1}^\dim \E |\sum_{j=1}^\dim A_{k, j} \coef_j|\1(\coef_k = 0).
\]

$\norm{\cdot}_{\coef}$ is a semi-norm but not a norm because $\norm{A}_{\coef} = 0$ does not imply $A = 0$. Actually, for any nonzero diagonal matrix $A\neq 0$, $\norm{A}_{\coef} = 0$ because $\sum_{k=1}^\dim \E |A_{k,k}\coef_k|\1(\coef_k = 0) = 0$. Note that it could be unsettling for readers at first why we define $B$ and $\norm{\cdot}_{\coef}$ this way. That is because these quantities appear naturally in the first order optimality condition of $\ell_1$-minimization. 
Hopefully, the motivation of defining these definitions will become clear later.

\subsection{Assumptions and models}\label{Sec:assumptions_models}
In this subsection, we first introduce two assumptions on the generation scheme for the linear coefficient $\coef$. 
This class includes two widely used models in the dictionary learning literature \citep{Gribonval2010,Wu2015}: Bernoulli Gaussian and sparse Gaussian distributions (Figure \ref{Fig:diagram}).
\begin{figure}[ht]
    \centering
    \includegraphics[width=.5\textwidth]{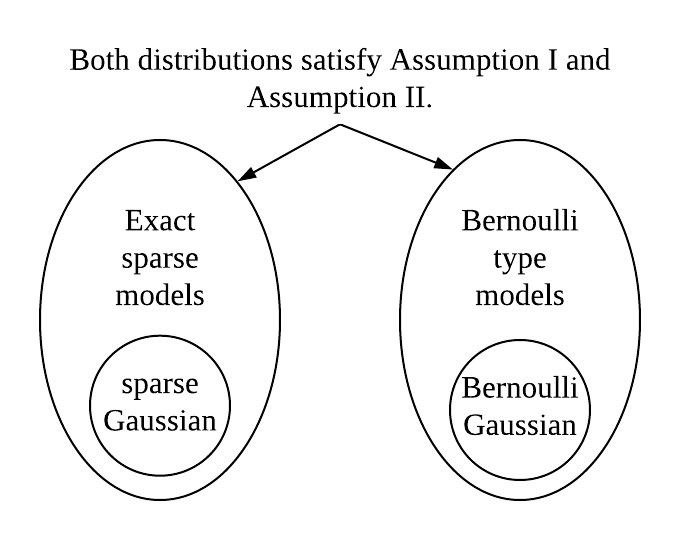}
    \caption{Both exact sparse models and Bernoulli type models satisfy Assumption I and II. Sparse Gaussian distribution is a special case of exact sparse models, while Bernoulli Gaussian distribution is a special case of Bernoulli type models. }
    \label{Fig:diagram}
\end{figure}

\paragraph{Assumption I } $\norm{\cdot}_{\coef}$ is $c_{\coef}$-regular: 
for any matrix $A\in H^\dim$, where $H^{\dim} = \{A\in\R^{\dim\times\dim} \ | \ A_{i,i}=0 \text{ for all } 1\leq i\leq \dim \}$, $\norm{A}_{\coef}$ is bounded below by $A$'s Frobenious norm: $\norm{A}_{\coef} \geq c_{\coef}\norm{A}_F$.
\vspace*{5mm}

{\noindent Assumption I has several implications. First, it ensures that the coefficient vector $\coef$ does not lie in a linear subspace of $\R^\dim$. Otherwise, we can make rows of $A$ orthogonal to $\coef$ and show that $\norm{\cdot}_{\coef}$ is not regular. Second, it also guarantees that the coefficient vector $\coef$ must have some level of sparsity. To see why this is the case, suppose there exists some coordinate $k'$ such that the coefficient $\coef_{k'} \neq 0$ almost surely. If that happens, we can select $A$ such that all of its elements are zero except the $k'$-th row. Then, $\norm{A}_{\coef} = \E |\sum_{j=1}^\dim A_{k',j}\coef_j|\1(\coef_{k'} = 0) = 0,$ but $\norm{A}_{F} > 0$. In this case, the $k'$-th reference coefficient is not sparse at all and the problem becomes ill-conditioned. Third, the regularity of $\norm{\cdot}$ implies that the corresponding dual norm is bounded by the Frobenius norm. Define the dual of $\norm{\cdot}_{\coef}$ to be 
\[
\norm{X}_{\coef}^* = \sup_{A\neq 0, A\in H^\dim}\frac{\tr(X^TA)}{\norm{A}_{\coef}},\quad \text{for~$X\in \R^{\dim\times \dim}$}.
\]
If $\norm{\cdot}_{\coef}$ is $c_{\coef}$-regular, we have $\norm{X}_{\coef}^* \leq \frac{1}{c_{\coef}}\norm{X}_F$, which means its dual semi-norm is upper bounded by the Frobenious norm. This assumption is crucial when we study the local identifiability. As can be seen later in Theorems \ref{Thm:local} and \ref{Thm:region}, regularity of $\norm{\cdot}_{\coef}$ is indispensable in determining the sharpness of the local minimum corresponding to the reference dictionary $\refdict$ as well as the bounding region.}

\paragraph{Assumption II} For any fixed constants $c_1,\ldots, c_\dim\in \R$, the following statement holds almost surely
\[
\sum_{l=1}^\dim c_l\coef_l = 0 \Longrightarrow c_l\coef_l = 0 ~\forall~l=1,\ldots, \dim,
\]
or equivalently, for any fixed $c_1,\ldots, c_\dim$,
\[
P(\sum_{l=1}^\dim c_l\coef_l = 0,  \sum_{l=1}^\dim c_l^2\coef_l^2 > 0) = 0.
\]
Assumption II controls the sparsity of any coefficient vector $\bm \beta$ under a general dictionary $\dict$. For the noiseless signal $\signal = \refdict \coef$, its $j$-th coefficient under a dictionary $\dict$ is $\bm \beta_j = \dict^{-1}[j,]\refdict\coef = \sum_{l=1}^\dim c_l\coef_l$ where $c_l = \dict^{-1}[j,]\refdict_l$ for $l = 1,\ldots, \dim$. Hence the resulting coefficient $\bm \beta_j$ is a linear combination of the reference coefficients $\coef_l$. Thus, Assumption II implies that under any general dictionary, the resulting coefficient $\bm \beta_j$ is zero if and only if for each $l$, either the reference coefficient is zero ($\coef_l = 0$) or the corresponding constant is zero ($c_l = 0$). In other words, elements in the reference coefficient vector cannot `cancel' with each other unless all the elements are zeros. This assumption looks very similar to the \emph{linear independence} property of random variables \citep{Rodgers1984}: Random variables $\psi_1,\ldots, \psi_\dim$ are linearly independent if $c_1\psi_1 + \ldots + c_\dim \psi_\dim = 0$ a.s. implies $c_1=c_2=\cdots=c_\dim = 0$. It is worth pointing out that Assumption II is a weaker assumption than linear independence. Many distributions of interest, such as Bernoulli Gaussian distributions, are not linearly independent but satisfy Assumption II (Proposition \ref{prop:assumption3_is_general}). This assumption will be essential when we study the uniqueness of the sharp local minimum in Theorem \ref{Thm:unique}.

Below we will show that a number of commonly used models satisfy Assumption I and II. 

\paragraph{Bernoulli Gaussian model.} Let $z\in \R^\dim$ be a random vector from standard Gaussian distribution $z\sim \N(0, \I)$. Let $\bm{\xi}\in \{0, 1\}^\dim$ be a random vector whose coordinates are {\it i.i.d.} Bernoulli variables with success probability $\sparsityProb$, i.e., $P(\xi_j = 1) = \sparsityProb$. Define random variable $\coef \in \R^\dim$ to be the element-wise product of $z$ and $\xi$, i.e. $\coef_j = \xi_j z_j$. We say $\coef$ is drawn from Bernoulli Gaussian model with parameter $\sparsityProb$, or $BG(p)$.

\paragraph{Sparse Gaussian model.} Let $z\in \R^\dim$ be a random vector from standard Gaussian distribution $z\sim \N(0, \I)$. Let $\bm{S}$ be a size-s subset uniformly drawn from all size-s subsets of $1,\ldots, \dim$. Let $\bm{\xi}\in \{0, 1\}^\dim$ be a random vector such that $\xi_j = 1$ if $j \in \bm S$ and $\xi_j = 0$ otherwise. Define random variable $\coef \in \R^\dim$ to be the element-wise product of $z$ and $\xi$, i.e., $\coef_j = \xi_j z_j$. We say $\coef$ is drawn from the sparse Gaussian model with parameter $s$, or $SG(s)$.

\vspace*{5mm}

\noindent {\textbf{Remarks:}} Sparse Gaussian and Bernoulli Gaussian distributions have been extensively studied in dictionary learning \citep{Gribonval2010,Wu2015,Schnass2014a,Schnass2014b}. The advantage of using sparse Gaussian and Bernoulli Gaussian distributions is that they are simple and yet able to capture the most important characteristic of the reference coefficients: sparsity. By using sparse Gaussian and Bernoulli Gaussian distributions, \cite{Wu2015} obtains a sufficient and almost necessary condition for local identifiability. Take sparse Gaussian distribution as an example. Let the maximal collinearity $\mu$ of the reference dictionary $\refdict$ be $\mu = \max_{i \neq j}\Big|\big<\refdict_{i}, \refdict_{j}\big>\Big|$ and let $s$ be the sparsity of the reference coefficient vector in the sparse Gaussian model. They show that local identifiability holds when $\mu < \frac{\dim - s}{\sqrt{s}(\dim - 1)}$. From the formula, we can see a trade-off between the maximal collinearity $\mu$ and the sparsity of the coefficient vector $s$. If the coefficient is very sparse, i.e., $s\ll K$, local identifiability holds for a wide range of $\mu$. Otherwise, local identifiability holds for a narrow range of $\mu$. While sparse/Bernoulli Gaussian models can be used to illustrate this trade-off, they are quite restrictive and have little hope to hold for any real data. Several papers \citep{Spielman2013,Arora2014,Arora2014a,Arora2015,Jenatton2014} considered more general models such as sub-Gaussian models. Here, we consider a different  extension of these two models that have not been studied in other papers. Below, we will introduce the Bernoulli type models and exact sparse models.
 
\paragraph{Bernoulli type model $\BG(\sparsityProb_1,\ldots, \sparsityProb_\dim)$.} Let $\bm{z}\in \mathbb{R}^\dim$ be a random vector whose probability density function exists. Let $\bm{\xi}\in \{0, 1\}^\dim$ be a random 0-1 vector. The coordinates of $\bm{\xi}$ are independent and $\xi_j$ is a Bernoulli random variable with success probability $P(\xi_j = 1) = p_j\in (0, 1)$. Define $\bm{\alpha}\in \R^\dim$ such that 
\begin{equation}\label{Model:BG}
\alpha_j = \xi_j z_j ~\forall ~j = 1,\ldots, \dim.
\end{equation}

\paragraph{Exact sparse model $\SG(s)$.} Let $\bm{z}\in \mathbb{R}^\dim$ be a random vector whose probability density function exists. Let $\bm{S}$ be a size-s subset uniformly drawn from all size-s subsets of $1,\ldots, \dim$. Let $\bm{\xi}\in \{0, 1\}^\dim$ be a random variable such that $\xi_j = 1$ if $j \in \bm S$. Define $\coef \in \R^\dim$ such that 
\begin{equation}\label{Model:SG}
\alpha_j = \xi_j z_j ~\forall ~j = 1,\ldots, \dim.
\end{equation}

In the following propositions, we will show that both Bernoulli type models and exact sparse models satisfy Assumption I and Assumption II.

\begin{proposition}\label{prop:specific_constants} The norm induced by exact sparse models or Bernoulli type models satisfy Assumption I. The regularity constant has explicit forms when the coefficient is from $SG(s)$ or $BG(\sparsityProb)$:
\begin{itemize}
    \item If $\coef$ is from $SG(s)$, the norm $\norm{\cdot}_{\coef}$ is $c_{s}$-regular, where $c_s \geq \frac{
    s(\dim - s)}{\dim(\dim - 1)}\sqrt{\frac{2}{\pi}}$.
    \item If $\coef$ is from $BG(\sparsityProb)$, the norm $\norm{\cdot}_{\coef}$ is $c_{p}$-regular, where $c_p\geq p(1-p)\sqrt{\frac{2}{\pi}}$.
\end{itemize}
\end{proposition}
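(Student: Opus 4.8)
# Proof Proposal for Proposition \ref{prop:specific_constants}

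The plan is to prove regularity by lower-bounding $\norm{A}_{\coef}$ term by term over the rows of $A$. Fix $A \in H^\dim$, so $A_{k,k} = 0$ for all $k$. By definition, $\norm{A}_{\coef} = \sum_{k=1}^\dim \E\big|\sum_{j=1}^\dim A_{k,j}\coef_j\big|\1(\coef_k = 0)$. Because the $k$-th diagonal entry of $A$ vanishes, the inner sum $\sum_j A_{k,j}\coef_j = \sum_{j \neq k} A_{k,j}\coef_j$ does not involve $\coef_k$, so we can condition on the event $\{\coef_k = 0\}$ and on the sparsity pattern (the Bernoulli/support variables $\bm\xi$) and then take the conditional expectation over the Gaussian part $\bm z$. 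Since the $k$-th term is just a univariate absolute-value of a centered Gaussian (a linear combination of the active Gaussians among coordinates $j \neq k$), we get $\E|\sum_j A_{k,j}\coef_j| \mid \text{pattern}, \coef_k = 0] = \sqrt{2/\pi}\cdot \sigma_k$, where $\sigma_k^2 = \sum_{j \in \text{active}, j \neq k} A_{k,j}^2$. Averaging over the random support gives $\E\big|\sum_j A_{k,j}\coef_j\big|\1(\coef_k=0) \geq \sqrt{2/\pi}\cdot \E\big[\sigma_k \1(\coef_k = 0)\big]$, and by Jensen's inequality (concavity of $\sqrt{\cdot}$) or by lower-bounding the number of active off-diagonal terms, I can further bound this below by a constant multiple of $\|A[k,]\|_2$.

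For the general Bernoulli type model and general exact sparse model, I would argue more qualitatively: the probability that coordinate $k$ is inactive and at least one other coordinate $j$ with $A_{k,j} \neq 0$ is active is strictly positive (using $p_j \in (0,1)$, or $1 \leq s \leq \dim - 1$ so both a zero and a nonzero pattern have positive probability), and on that event the conditional expectation of the absolute value is a strictly positive multiple of the relevant partial $\ell_2$-norm of the row. Summing over $k$ yields $\norm{A}_{\coef} \geq c_{\coef}\norm{A}_F$ for some $c_{\coef} > 0$ depending on the model; here I only need existence of the constant, not its sharp value. Since the density of $\bm z$ exists, one should be careful that the relevant Gaussian-type argument is replaced by: conditioned on the active set $S \ni j$, $k \notin S$, the random variable $\sum_{j \in S} A_{k,j} z_j$ has a density and nonzero variance whenever some $A_{k,j} \neq 0$, so its expected absolute value is positive; a compactness/continuity argument over the unit sphere of rows then delivers a uniform constant.

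For the two explicit cases I would compute the constant directly. For $BG(\sparsityProb)$: coordinate $k$ is inactive with probability $1-p$, independently each other coordinate $j$ is active with probability $p$, and given the active set the Gaussian linear combination has variance $\sum_{j \in S} A_{k,j}^2$; so $\E|\sum_j A_{k,j}\coef_j|\1(\coef_k=0) = (1-p)\sqrt{2/\pi}\, \E_S\big[\sqrt{\sum_{j \in S, j\neq k} A_{k,j}^2}\big]$. Bounding $\E_S \sqrt{\cdot} \geq$ (something like) $p \cdot \|A[k,]\|_2$ — more precisely using that each term $A_{k,j}^2$ appears with probability $p$ and concavity — gives the claimed $c_p \geq p(1-p)\sqrt{2/\pi}$ after summing over $k$ and recognizing $\sum_k \|A[k,]\|_2^2 = \norm{A}_F^2$ (with a Cauchy–Schwarz or per-coordinate bookkeeping step to pass from $\sum_k \|A[k,]\|_2$ to $\norm{A}_F$). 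For $SG(s)$: coordinate $k \notin S$ with probability $(\dim - s)/\dim$, and given that, each other fixed coordinate $j$ is in $S$ with probability $s/(\dim - 1)$; the same Gaussian absolute-value identity plus the combinatorial counting produces the factor $\frac{s(\dim - s)}{\dim(\dim-1)}\sqrt{2/\pi}$.

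The main obstacle I anticipate is the passage from the per-row bound $\E|\cdots|\1(\coef_k=0) \gtrsim \E_S\sqrt{\sum_{j\in S} A_{k,j}^2}$ to a clean linear bound in $\|A[k,]\|_2$: the square root is concave, so Jensen goes the wrong direction for an upper bound but the right direction is needed here — one wants a \emph{lower} bound on $\E\sqrt{X}$ in terms of $\E X$, which is false in general. The fix is not to pull the expectation inside the square root but instead to lower-bound $\sqrt{\sum_{j\in S}A_{k,j}^2} \geq \sum_{j\in S}|A_{k,j}|^2 / \sqrt{\sum_j A_{k,j}^2}$ pointwise — i.e. divide and conquer by $\|A[k,]\|_2$ in the denominator — and then take expectations of the now-linear numerator, which is exactly where the probabilities $p$ (resp. $s/(\dim-1)$) enter multiplicatively. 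This bookkeeping, done carefully and summed over $k$, is what yields the stated constants; everything else (the Gaussian absolute-moment $\E|N(0,\sigma^2)| = \sigma\sqrt{2/\pi}$, independence of support and Gaussian parts, vanishing of the diagonal) is routine.
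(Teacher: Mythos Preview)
Your proposal is correct, and the pointwise inequality you identified,
\[
\sqrt{\sum_{j\in S}A_{k,j}^2}\ \ge\ \frac{\sum_{j\in S}A_{k,j}^2}{\sqrt{\sum_{j}A_{k,j}^2}},
\]
is exactly the right device: taking expectations over the random support turns the numerator into $\sum_{j\neq k}P(j\in S,\,k\notin S)\,A_{k,j}^2$, and those inclusion probabilities are $p(1-p)$ for $BG(p)$ and $\frac{s(K-s)}{K(K-1)}$ for $SG(s)$, giving the stated constants after the $\ell_1\ge\ell_2$ step $\sum_k\|A[k,]\|_2\ge\norm{A}_F$.

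The paper's route to the explicit constants is genuinely different. Instead of your pointwise linearization, it invokes two combinatorial monotonicity lemmas from \cite{Wu2015} (their Lemmas 6.5 and 6.6), which compare uniform averages of $\sqrt{\sum_{j\in S}z_j^2}$ over size-$l$ subsets against size-$m$ subsets. These let the paper push the subset average all the way up to the full $\ell_2$-norm $\sqrt{\sum_j A_{k,j}^2}$ in one step. Your argument is more elementary and fully self-contained, avoiding the external lemmas; the paper's argument is slightly sharper in principle (the Wu--Yu lemmas control the whole subset average rather than lower-bounding term by term) but yields the same final constants here. For the general Bernoulli-type and exact sparse models, both you and the paper use essentially the same idea---positivity of $\norm{\cdot}_{\coef}$ on $H^K\setminus\{0\}$ plus finite-dimensional norm equivalence/compactness---so there is no substantive difference in that part.
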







\begin{proposition}\label{prop:assumption3_is_general}
If the coefficient vector is generated from a Bernoulli type model or an exact sparse model, Assumption II holds.
\end{proposition}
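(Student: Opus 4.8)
The plan is to condition on the (discrete) support pattern of $\coef$ and reduce Assumption II to the elementary fact that a nontrivial hyperplane through the origin has Lebesgue measure zero. Both the Bernoulli type model and the exact sparse model share the structure $\alpha_j = \xi_j z_j$, where $\bm z\in\R^\dim$ has a density with respect to Lebesgue measure, $\bm\xi\in\{0,1\}^\dim$ is independent of $\bm z$, and $\bm\xi$ takes values in a finite set $\Xi$ of $0$-$1$ vectors (all of $\{0,1\}^\dim$ in the Bernoulli type case, the indicator vectors of size-$s$ subsets in the exact sparse case). Fix arbitrary constants $c_1,\ldots,c_\dim\in\R$; by the law of total probability it suffices to show
\[
P\Big(\textstyle\sum_{l=1}^\dim c_l\coef_l = 0,\ \sum_{l=1}^\dim c_l^2\coef_l^2 > 0 \ \Big|\ \bm\xi = \bm s\Big) = 0
\]
for each fixed $\bm s\in\Xi$, since summing this over the finitely many $\bm s$ weighted by $P(\bm\xi=\bm s)$ yields the claimed conclusion.

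Next I would analyze the conditional event. Write $S = \{j : s_j = 1\}$ and $S' = \{l\in S : c_l\neq 0\}$. Conditioned on $\bm\xi = \bm s$ we have $\coef_l = z_l$ for $l\in S$ and $\coef_l = 0$ otherwise, and by independence of $\bm z$ and $\bm\xi$ the conditional law of $\bm z$ is still its unconditional, absolutely continuous, law. Hence $\sum_l c_l\coef_l = \sum_{l\in S'} c_l z_l$ and $\sum_l c_l^2\coef_l^2 = \sum_{l\in S'} c_l^2 z_l^2$. If $S' = \emptyset$ the second quantity is identically $0$, so the conditional event is empty. If $S'\neq\emptyset$, then since $\bm z$ has a density we have $z_l\neq 0$ almost surely for every $l$, so $\sum_{l\in S'} c_l^2 z_l^2 > 0$ almost surely, and the conditional event reduces to $\{\sum_{l\in S'} c_l z_l = 0\}$.

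Finally I would invoke absolute continuity. The subvector $(z_l)_{l\in S'}$ has a density on $\R^{|S'|}$, being a marginal of an absolutely continuous distribution (integrate out the remaining coordinates). The set $\{\bm y\in\R^{|S'|} : \sum_{l\in S'} c_l y_l = 0\}$ is a hyperplane with nonzero normal vector $(c_l)_{l\in S'}$, hence has Lebesgue measure zero, hence probability zero under any absolutely continuous law. This gives $P(\sum_{l\in S'} c_l z_l = 0 \mid \bm\xi = \bm s) = 0$, completing the per-support bound and therefore the proposition. (The same computation applies verbatim to $SG(s)$ and $BG(\sparsityProb)$ as special cases.)

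There is no serious obstacle here: the argument is a routine conditioning plus measure-zero estimate. The only points requiring a little care are (i) justifying that conditioning on $\bm\xi = \bm s$ does not disturb the density of $\bm z$, which is exactly the independence assumption in the two models; and (ii) correctly handling the degenerate cases $S' = \emptyset$ and "$c_l = 0$ for all $l$", so that the $\sum_l c_l^2\coef_l^2 > 0$ clause is treated properly — these are precisely the configurations the formulation of Assumption II is written to exclude, which is why the event carries the extra constraint $\sum_l c_l^2\coef_l^2 > 0$ rather than simply asking for $\bm\alpha$ to lie on a hyperplane.
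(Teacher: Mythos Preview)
Your proof is correct and follows essentially the same approach as the paper: condition on the support pattern of $\bm\xi$, then use that $\bm z$ has a density so any nontrivial linear constraint $\sum_{l\in S'} c_l z_l = 0$ has probability zero. Your version is in fact more carefully written---you explicitly handle the degenerate case $S'=\emptyset$ and spell out why the marginal of $\bm z$ on $S'$ is absolutely continuous---whereas the paper compresses the argument into a single inequality and the line ``Because $z$ has density, $P(\sum_{l\in S} c_l z_l = 0,\ \sum_{l\in S} c_l^2 > 0)=0$.''
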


\noindent\textbf{Remarks:} (1) Note that our assumptions include many more general distributions beyond sparse/Bernoulli Gaussian models. For example, our model allows $z$ to be from the Laplacian distribution or non-negative, such as Gamma/Beta distributions, see below. The non-negativity of the coefficients breaks the popular expectation condition: $\E \coef_j = 0$, which was used in many previous papers, such as \cite{Gribonval2010,Jenatton2014}. 

\noindent (2) Although our models are quite general, we acknowledge that certain distributions considered in other papers do not satisfy our assumptions. A key requirement in Bernoulli type/exact sparse models is that the probability density function of the base random variable $z$ must exist. For instance, the Bernoulli Randemacher model \citep{Spielman2013} does not satisfy Assumption II. To see this, take the following Bernoulli Randemacher model for $\dim = 2$ as an example: suppose $\bm \xi\in \{0, 1\}^2$ where $P(\xi_1 = 1) = p_1$, $P(\xi_2 = 1) = p_2$. The base random vector $z\in \{-1, 1\}^2$ with $P(z_1 = 1) = P(z_2 = 1) = 1/2$. If we take $c_1 = 1$ and $c_2 = -1$, $P(c_1\coef_1 + c_2 \coef_2 = 0,c_1\coef_1\neq 0,c_2\coef_2\neq 0) = P(\coef_1 - \coef_2 = 0, \xi_1\neq 0, \xi_2\neq 0) = P(\xi_1 = 1, \xi_2 = 1, z_1 = z_2) = p_1\cdot p_2 / 2 > 0$. Therefore, the Assumption II is no longer valid in this case.

Besides sparse/Bernoulli Gaussian models, there are several other models of interest that are special cases of exact sparse/Bernoulli type models. 

\paragraph{Non-negative Sparse Gaussian model.} A random vector $\coef$ is from a non-negative sparse Gaussian model $|SG(s)|$ with parameter $s$ if it is the absolute value of a random vector drawn from $SG(s)$. In other words, for $j = 1,\ldots, \dim$, $\coef_j = |\coef'_j|$ where $\coef'\sim SG(s)$.

\paragraph{Sparse Laplacian model.} Let $z\in \R^\dim$ be a random vector from a standard Laplacian distribution whose probability density function is $f(z) = \frac{1}{2^\dim}\exp(-\|z\|_1).$
Let $\bm{S}$ be a size-s subset uniformly drawn from all size-s subsets of $1,\ldots, \dim$. Let $\bm{\xi}\in \{0, 1\}^\dim$ be a random vector such that $\xi_j = 1$ if $j \in \bm S$ and $\xi_j = 0$ otherwise. Define random variable $\coef \in \R^\dim$ to be the element-wise product of $z$ and $\xi$, i.e., $\coef_j = \xi_j z_j$. We say $\coef$ is drawn from the sparse Laplacian model with parameter $s$, or $SL(s)$.

\section{Main Theoretical Results}\label{Sec:3}

Similar to \cite{Wu2015}, we first study the following optimization problem:


\begin{align}\label{Eq:main2}
\mathrm{minimize~~} & \E \|\inv{\dict} \signal\|_1\\
\mathrm{subject~to~~} & \dict\in \BASIS{\R^\dim}\nonumber
\end{align}


Here, the notation $\E$ is the expectation with respect to $\signal$ under a probabilistic model. Therefore, this optimization problem is equivalent to the case when we have infinite number of samples. As we shall see, the analysis of this population level problem gives us significant insight into the identifiability properties of dictionary learning. We also consider the finite sample case \eqref{Eq:main_finitesample} in Theorem \ref{Thm:global}. 

\subsection{Local identifiability} \label{Sec:3.1}
In this subsection, we establish a sufficient and necessary condition for the reference dictionary to be a sharp local minimum. Theorem \ref{Thm:local} is closely related to the local identifiability result in \cite{Wu2015}, which proves a similar result for sparse/Bernoulli Gaussian models. 


\begin{theorem}[Local identifiability]\label{Thm:local} Suppose the $\ell_1$ norm of the reference coefficient vector $\coef$ has bounded first order moment: $\E \|\coef\|_1 < \infty$. If and only if 
\begin{equation}\label{Eq:local_sharp_condition}
\norm{B(\coef, M^*)}_{\coef}^* < 1,
\end{equation}
$\refdict$ is a sharp local minimum of Formulation \eqref{Eq:main2} with sharpness at least $\frac{c_{\coef}}{\sqrt{2}\norm{\refdict}_2^2}(1 - \norm{B(\coef, M^*)}_{\coef}^*)$.
If $\norm{B(\coef, M^*)}_{\coef}^* > 1,$ $\refdict$ is not a local minimum.
\end{theorem}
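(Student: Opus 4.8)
The plan is a first-order expansion of $L$ on the manifold $\BASIS{\R^\dim}$ around $\refdict$, carried out in the change of variables $G\triangleq\dict^{-1}\refdict$, so that $L(\dict)=\E\|G\coef\|_1$ and $G=\I$ corresponds to $\dict=\refdict$. Writing $G=\I+E$ gives $\dict=\refdict(\I+E)^{-1}$, hence $\dict-\refdict=-\refdict E+O(\norm{E}_F^2)$, and since $\refdict$ is invertible $\norm{\dict-\refdict}_F=\Theta(\norm{E}_F)$; the unit-norm constraint $\|\dict_j\|_2=1$, rewritten as $(G^{-T}M^*G^{-1})_{j,j}=1$, expands to $(M^*E)_{j,j}=O(\norm{E}_F^2)$. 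Expanding $\|G\coef\|_1-\|\coef\|_1$ coordinatewise via $|a+b|-|a|=\sgn(a)\,b+r(a,b)$, where $r(a,b)=0$ whenever $|b|<|a|$ and $|r(a,b)|\le 2|b|$ always, the coordinates with $\coef_k=0$ contribute exactly $|(E\coef)_k|=\big|\sum_{j\ne k}E_{k,j}\coef_j\big|$, whose expectation is the semi-norm $\norm{E}_{\coef}=\norm{E_{\mathrm{off}}}_{\coef}$ ($E_{\mathrm{off}}$ denoting the zero-diagonal part of $E$), while the coordinates with $\coef_k\ne 0$ contribute the linear term $\sum_{k,j}E_{k,j}\,\E[\coef_j\sgn(\coef_k)]=\tr\!\big(E_{\mathrm{off}}^{\,T}B(\coef,M^*)\big)+\sum_j\E|\coef_j|\,(M^*E)_{j,j}$, where I used $\E[\coef_k\sgn(\coef_k)]=\E|\coef_k|=M^*_{k,k}\E|\coef_k|$ (so $B(\coef,M^*)\in H^\dim$) and the symmetry of $M^*$. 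The constraint kills the middle sum ($o(\norm{\dict-\refdict}_F)$), and the remainder $\E\sum_k|r_k|$ is at most $2\dim\,\norm{E}_F\,\E\big[\|\coef\|_2\,\1(\norm{E}_F\|\coef\|_2\ge\min_{k:\coef_k\ne 0}|\coef_k|)\big]$, which is $o(\norm{E}_F)$ by dominated convergence using $\E\|\coef\|_1<\infty$. This yields the key identity
\[
L(\dict)-L(\refdict)=\norm{E_{\mathrm{off}}}_{\coef}+\tr\!\big(E_{\mathrm{off}}^{\,T}B(\coef,M^*)\big)+o\big(\norm{\dict-\refdict}_F\big),\qquad E_{\mathrm{off}}\in H^\dim .
\]

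\textbf{Sufficiency.} Assume $\norm{B(\coef,M^*)}_{\coef}^*<1$. Applying the definition of the dual semi-norm to $-E_{\mathrm{off}}\in H^\dim$ gives $\tr(E_{\mathrm{off}}^{\,T}B(\coef,M^*))\ge -\norm{B(\coef,M^*)}_{\coef}^*\,\norm{E_{\mathrm{off}}}_{\coef}$, and Assumption I gives $\norm{E_{\mathrm{off}}}_{\coef}\ge c_{\coef}\norm{E_{\mathrm{off}}}_F$, so the leading term is $\ge (1-\norm{B(\coef,M^*)}_{\coef}^*)\,c_{\coef}\norm{E_{\mathrm{off}}}_F$. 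It then remains to lower bound $\norm{E_{\mathrm{off}}}_F$ by a multiple of $\norm{\dict-\refdict}_F$: from $(M^*E)_{j,j}=O(\norm{E}_F^2)$ one has $E_{j,j}=-(M^*E_{\mathrm{off}})_{j,j}+O(\norm{E}_F^2)$, and since $|(M^*E_{\mathrm{off}})_{j,j}|=|\refdict_j^{\,T}\refdict(E_{\mathrm{off}})_j|\le\norm{\refdict}_2\|(E_{\mathrm{off}})_j\|_2$ (Cauchy--Schwarz, using $\|\refdict_j\|_2=1$), it follows that $\norm{E_{\mathrm{diag}}}_F\le\norm{\refdict}_2\norm{E_{\mathrm{off}}}_F+o(\norm{E_{\mathrm{off}}}_F)$, whence $\norm{E}_F\le\sqrt{1+\norm{\refdict}_2^2}\,\norm{E_{\mathrm{off}}}_F+o(\cdot)\le\sqrt2\,\norm{\refdict}_2\norm{E_{\mathrm{off}}}_F+o(\cdot)$ (using $\norm{\refdict}_2\ge 1$), and finally $\norm{\dict-\refdict}_F\le\norm{\refdict}_2\norm{E}_F+o(\cdot)\le\sqrt2\,\norm{\refdict}_2^2\norm{E_{\mathrm{off}}}_F+o(\cdot)$. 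Substituting back gives $L(\dict)-L(\refdict)\ge \frac{c_{\coef}}{\sqrt2\,\norm{\refdict}_2^2}\big(1-\norm{B(\coef,M^*)}_{\coef}^*\big)\norm{\dict-\refdict}_F+o(\norm{\dict-\refdict}_F)$, the asserted sharpness.

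\textbf{Necessity, and the case $\norm{B(\coef,M^*)}_{\coef}^*>1$.} Suppose $\norm{B(\coef,M^*)}_{\coef}^*\ge 1$. For any $\eta\in(0,1)$ choose $A\in H^\dim$, $A\ne 0$, normalized by $\norm{A}_{\coef}=1$, with $\tr(B(\coef,M^*)^{\,T}A)>1-\eta$ (and $>1$ outright when $\norm{B(\coef,M^*)}_{\coef}^*>1$). I would realize the tangent vector $\Delta=-\refdict E$ with $E_{\mathrm{off}}=-A$ and $E_{j,j}=(M^*A)_{j,j}$ (so $(M^*E)_{j,j}=0$, i.e. $\refdict_j^{\,T}\Delta_j=0$) by a smooth curve $\dict(t)\in\BASIS{\R^\dim}$ with $\dict(0)=\refdict$, $\dot\dict(0)=\Delta$ --- admissible because near $\refdict$ the set $\BASIS{\R^\dim}$ is a product of unit spheres intersected with the open invertibility condition, so every such tangent vector is realized. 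Along this curve $\dict(t)^{-1}\refdict=\I+tE+o(t)$, so the identity above gives $L(\dict(t))-L(\refdict)=t\big(\norm{A}_{\coef}-\tr(B(\coef,M^*)^{\,T}A)\big)+o(t)<t\eta+o(t)$, while $\norm{\dict(t)-\refdict}_F$ is bounded below by $t$ times a positive multiple of $\norm{A}_F$ (invertibility of $\refdict$), and $\norm{A}_F$ is bounded below thanks to the elementary reverse bound $\norm{\cdot}_{\coef}\le C\norm{\cdot}_F$ (a finite semi-norm on a finite-dimensional space, finite since $\E\|\coef\|_1<\infty$). Hence the directional ratio $\big(L(\dict(t))-L(\refdict)\big)/\norm{\dict(t)-\refdict}_F$ tends, as $t\downarrow 0$, to at most a fixed multiple of $\eta$, which is below any prescribed $\epsilon>0$ once $\eta$ is small; since this contradicts Definition \ref{Def:sharplocalmin} for every $\epsilon>0$, $\refdict$ is not a sharp local minimum. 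When $\norm{B(\coef,M^*)}_{\coef}^*>1$, taking $\tr(B(\coef,M^*)^{\,T}A)>\norm{A}_{\coef}$ makes $L(\dict(t))-L(\refdict)<0$ for all small $t>0$, so $\refdict$ is not even a local minimum.

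\textbf{Main obstacle.} The crux is making the coordinatewise Taylor expansion rigorous with a \emph{uniform} $o(\norm{\dict-\refdict}_F)$ remainder despite the nonsmoothness of $\ell_1$: one must separate, for small $E$, the smooth coordinates ($\coef_k\ne 0$, producing the $B$-term) from the nonsmooth ones ($\coef_k=0$, producing the semi-norm), show the "crossover" event $\{\,|(E\coef)_k|\ge|\coef_k|$ for some $k$ with $\coef_k\ne 0\,\}$ has probability $o(1)$ as $\norm{E}_F\to 0$, and integrate using $\E\|\coef\|_1<\infty$; a minor additional bookkeeping point is keeping the second-order contribution of the sphere constraints, $(M^*E)_{j,j}=O(\norm{E}_F^2)$, inside the $o(\cdot)$ throughout. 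Everything else --- the dual-norm and Assumption I inequalities, and the geometric passage between $\norm{E_{\mathrm{off}}}_F$, $\norm{E}_F$, and $\norm{\dict-\refdict}_F$ --- is routine once the expansion is in hand.
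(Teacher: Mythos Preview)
Your proposal is correct and follows essentially the same route as the paper: a first-order expansion of $L$ in the variable $G=\dict^{-1}\refdict$, splitting each coordinate into the $\coef_k=0$ case (producing $\norm{\cdot}_{\coef}$) and the $\coef_k\neq 0$ case (producing the $B$-term), absorbing the diagonal via the unit-norm constraint, and controlling the crossover remainder by dominated convergence. The paper packages this as a sandwich inequality (its Lemma~\ref{lemma:3}) built from the decomposition $\dict^{-1}\refdict=\I+\Delta+\Lambda$ with $\Lambda_{j,j}=-\tfrac12\|\dict_j-\refdict_j\|_2^2$ chosen so that $M[j,\cdot]\Delta_j=0$ holds \emph{exactly} (with $M=\dict^T\dict$ rather than $M^*$), whereas you use the approximate constraint $(M^*E)_{j,j}=O(\norm{E}_F^2)$; this is only a cosmetic difference, since both reduce the diagonal contribution to $o(\norm{\dict-\refdict}_F)$.
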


\paragraph{Remarks:} \cite{Wu2015} studies the local identifiability problem when the coefficient vector $\coef$ is from Bernoulli Gaussian or sparse Gaussian distributions. They gave a sufficient and almost necessary condition that ensures the reference dictionary to be a local minimum. Theorem \ref{Thm:local} substantially extends their result in two aspects:
\begin{itemize}
    \item The reference coefficient distribution can be exact sparse models and Bernoulli type models, which is more general than sparse/Bernoulli Gaussian models.
    \item In addition to showing that the reference dictionary $\refdict$ is a local minimum, we show that $\refdict$ is actually a sharp local minimum with an explicit bound on the sharpness.
\end{itemize}

To prove Theorem \ref{Thm:local}, we need to calculate how objective function changes along any direction in the neighborhood of the reference dictionary. The major challenge of this calculation is that the objective function is neither convex nor smooth, which prevents us from using sub-gradient or gradient to characterize the its local structure.
We obtain a novel sandwich-type inequality of the $\ell_1$ objective function (Lemma \ref{lemma:3}). With the help of this inequality, we are able to carry out a more fine-grained analysis of the $\ell_1$-minimization objective. The detail proof of Theorem \ref{Thm:local} can be found in \nameref{Sec:Appendix_C}.


\paragraph{Interpretation of the condition \eqref{Eq:local_sharp_condition}:} 
Intuitively, \eqref{Eq:local_sharp_condition} holds if $B(\coef, M^*)$ is small under the dual norm, i.e., $\norm{B(\coef, M^*)}_{\coef}^*<1$. By the definition of $B(\coef, M^*)$, the quantity is the difference between two matrices
\begin{align*}
& B(\coef, M^*) = B_1(\coef) - B_2(\coef, M^*),
\end{align*}
where $(B_1(\coef))_{k, j} = \E \coef_j \sgn(\coef_k)$ and $B_2(\coef, M^*)_{k, j} =  M_{j, k}^* \E |\coef_j|$. Roughly speaking, the first matrix measures the ``correlation" between different coordinates of the coefficients while the second matrix measures the collinearity of the atoms in the reference dictionary. 
For instance, when the coordinates of $\coef$ are independent and mean zero, $B_1(\coef) = 0$. When all atoms in the dictionary are orthogonal, i.e., $M^* = \I$, $B_2(\coef, M^*) = 0$. In that extreme case, the reference dictionary is for sure a local minimum.

\vspace*{5mm}

Theorem \ref{Thm:local} gives the condition under which the reference dictionary is a sharp local minimum. In other words, under that condition, the objective $\E L(\refdict)$ is the smallest within a neighborhood of $\refdict$.
The below Theorem \ref{Thm:region} gives an explicit bound of the size of the region. To the best of authors' knowledge, this is the first result about the region where local identifiability holds for $\ell_1$-minimization.

\begin{theorem}\label{Thm:region}
Under notations in Theorem \ref{Thm:local}, if $\norm{B(\coef, M^*)}_{\coef}^* < 1$, for any $\dict$ in the set $S = \left\{\dict\in \BASIS{\R^\dim}\Big| \norm{\dict}_2 \leq 2\norm{\refdict}_2,
\norm{\dict - \refdict}_F\leq 
\frac{(1 - \norm{B(\coef, M^*)}_{\coef}^*)\cdot  c_{\coef} }{8\sqrt{2}\norm{\refdict}_2^2\max_j \E |\coef_j|}\right\}$, 
we have $\E \|\inv{\dict}\signal\|_1 \geq \E \|\inv{(\refdict)}\signal\|_1$.
\end{theorem}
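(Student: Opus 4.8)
The plan is to reuse the machinery behind Theorem \ref{Thm:local}: the key object is the sandwich-type inequality (Lemma \ref{lemma:3}) that controls $\E\|\dict^{-1}\signal\|_1 - \E\|(\refdict)^{-1}\signal\|_1$ from below in terms of a first-order term plus a controlled remainder, valid not just infinitesimally but on an explicit ball. Concretely, write $\dict = \refdict + \bm{\Delta}$ with $\norm{\bm\Delta}_F$ small and reparametrize in the ``coefficient domain'' by setting $\bm E = (\refdict)^{-1}\dict - \I$, so that $\dict^{-1}\signal = (\I + \bm E)^{-1}\coef$; then $\E\|\dict^{-1}\signal\|_1 - \E\|\coef\|_1$ admits a decomposition into (i) a linear-in-$\bm E$ term whose coefficient is governed by $B(\coef, M^*)$ acting against $\bm E$ through the semi-norm $\norm{\cdot}_\coef$ and its dual, and (ii) a quadratic/higher-order remainder. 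The first step is to make this expansion rigorous with explicit (non-asymptotic) constants rather than $o(\cdot)$ terms — this is exactly where the hypothesis $\norm{\dict}_2 \le 2\norm{\refdict}_2$ enters, since it lets us bound $\norm{\dict^{-1}}_2$, $\norm{\bm E}_F$, and the Neumann-series tail $\|(\I+\bm E)^{-1} - \I + \bm E\|$ uniformly on the region $S$.

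Next I would lower-bound the linear term. By the regularity Assumption I and the duality bound $\norm{X}_\coef^* \le \frac{1}{c_\coef}\norm{X}_F$, combined with $\norm{B(\coef, M^*)}_\coef^* < 1$, the linear term dominates $(1 - \norm{B(\coef,M^*)}_\coef^*)$ times a positive multiple of $\norm{\bm E}_F$ (this is the computation already carried out inside the proof of Theorem \ref{Thm:local}, giving the stated sharpness constant $\frac{c_\coef}{\sqrt 2 \norm{\refdict}_2^2}$ after translating from the $\bm E$-norm back to the $\dict$-norm). The remainder term, by the Neumann estimate, is bounded by something like $C\,\max_j\E|\coef_j|\cdot\norm{\bm E}_F^2$ for an explicit absolute constant $C$ arising from $\E\|\coef\|_1 < \infty$ and the submultiplicativity of the spectral norm. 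Then the difference $\E\|\dict^{-1}\signal\|_1 - \E\|(\refdict)^{-1}\signal\|_1$ is at least
\[
\frac{(1 - \norm{B(\coef,M^*)}_\coef^*)\,c_\coef}{\sqrt 2\,\norm{\refdict}_2^2}\,\norm{\bm E}_F - C\,\max_j\E|\coef_j|\,\norm{\bm E}_F^2,
\]
which is nonnegative precisely when $\norm{\bm E}_F$ is below the ratio of the linear coefficient to the quadratic coefficient; matching constants ($C = 8$ after accounting for the factor $\sqrt 2$ and the passage between $\norm{\bm\Delta}_F$ and $\norm{\bm E}_F$ via $\norm{\dict - \refdict}_F \le \norm{\refdict}_2\norm{\bm E}_F$) yields exactly the radius $\frac{(1 - \norm{B(\coef,M^*)}_\coef^*)\,c_\coef}{8\sqrt 2\,\norm{\refdict}_2^2\,\max_j\E|\coef_j|}$ in the statement.

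The main obstacle I anticipate is controlling the non-smoothness: the quantity $\E\|(\I+\bm E)^{-1}\coef\|_1$ is piecewise-linear in $\bm E$ with kinks precisely where coordinates of $(\I+\bm E)^{-1}\coef$ vanish, and the first-order term is genuinely a sum over $\1(\coef_k = 0)$ events — this is why $\norm{\cdot}_\coef$ (rather than an ordinary matrix norm) is the right object. Handling this requires the sandwich inequality of Lemma \ref{lemma:3} to absorb the contribution of the coordinates $k$ with $\coef_k \neq 0$ (where $\sgn$ is locally constant, contributing the $B_1 - B_2$ terms and no kink) separately from those with $\coef_k = 0$ (where the absolute value contributes $\norm{\cdot}_\coef$ directly and positively). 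Getting the remainder estimate uniform over the whole ball $S$ — not merely in a shrinking neighborhood — is the second delicate point, and it is precisely the role of the auxiliary constraint $\norm{\dict}_2 \le 2\norm{\refdict}_2$ to keep $\norm{(\I+\bm E)^{-1}}_2$ and all Neumann tails bounded by absolute constants there. Once those two points are in place, the rest is the elementary ``linear beats quadratic on a small enough ball'' argument.
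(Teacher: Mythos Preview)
Your overall plan---lower-bound the difference by a linear term minus a quadratic term, then show linear dominates on the stated ball---is exactly right, and Lemma~\ref{lemma:3}'s lower bound is already the uniform (non-$o(\cdot)$) inequality you want: the remainder is simply $\E\|\Lambda\coef\|_1 \le \max_j\E|\coef_j|\cdot\norm{\dict-\refdict}_F^2$ via Lemma~\ref{lemma:1}(2), so the Neumann-series machinery is unnecessary. However, there is a genuine gap in your treatment of the linear term: Lemma~\ref{lemma:3} gives the bias matrix $B(\coef, M)$ with $M = \dict^T\dict$, \emph{not} $B(\coef, M^*)$. This is because the unit-norm constraint on $\dict$ (encoded in Lemma~\ref{lemma:1}(1) as $M[j,]\Delta_j = 0$) enters the identity $\sum_{k,j}\Delta_{k,j}\E\coef_j\sgn(\coef_k)=\tr(B(\coef,M)^T\inv{\dict}\refdict)$ through $M$, not $M^*$. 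At infinitesimal scale (Theorem~\ref{Thm:local}) one simply uses $M\to M^*$ and absorbs the discrepancy into $o(\cdot)$; at the finite distance of Theorem~\ref{Thm:region} you must pass from $\norm{B(\coef,M)}_\coef^*$ to $\norm{B(\coef,M^*)}_\coef^*$ explicitly, and your proposal omits this step.

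The paper handles it by bounding $\norm{B(\coef,M)-B(\coef,M^*)}_\coef^*\le\frac{1}{c_\coef}\max_j\E|\coef_j|\cdot\norm{M-M^*}_F$ (regularity of the dual norm) together with $\norm{M-M^*}_F\le 3\norm{\refdict}_2\cdot\norm{\dict-\refdict}_F$ (using $\norm{\dict}_2\le 2\norm{\refdict}_2$), and then checking that the stated radius forces $1-\norm{B(\coef,M)}_\coef^*\ge\tfrac{1}{2}\bigl(1-\norm{B(\coef,M^*)}_\coef^*\bigr)$. That factor of $2$, combined with the factor $4$ coming from replacing $\norm{\dict}_2^2$ by $4\norm{\refdict}_2^2$ in the bound $\norm{\inv{\dict}\refdict}_\coef\ge\frac{c_\coef}{\sqrt{2}\norm{\dict}_2^2}\norm{\dict-\refdict}_F$ of Lemma~\ref{lemma:1+}, is what produces the $8$. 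Your accounting (``$C=8$ after \ldots\ passage between $\norm{\bm\Delta}_F$ and $\norm{\bm E}_F$'') conflates different sources and does not address the $M$-versus-$M^*$ issue; without it your linear coefficient is $(1-\norm{B(\coef,M)}_\coef^*)$, which is not a priori positive on all of $S$. Note also that the hypothesis $\norm{\dict}_2\le 2\norm{\refdict}_2$ is used for exactly the two purposes above, not to control $\norm{\dict^{-1}}_2$ or any Neumann tail.
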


\paragraph{Remarks:} First of all, note that the set $S$ we study here is different from what \cite{Agarwal2013a} called ``basin of attraction". The basin of attraction of an iterative algorithm is the set of dictionaries such that if the initial dictionary is selected from that set, the algorithm converges to the reference dictionary $\refdict$. For an iterative algorithm that decreases an objective function in each step, its basin of attraction must be a subset of the region within which $\refdict$ has the minimal objective value. Secondly, Theorem \ref{Thm:region} only tells us that $\refdict$ admits the smallest objective function value within the set $S$. It does not, however, indicates that $\refdict$ is the only local minimum within $S$.





\vspace*{5mm}

In what follows, we study two examples to gain a better understanding of the conditions in Theorem \ref{Thm:local} and \ref{Thm:region} . These examples demonstrate the trade-off between coefficient sparsity, collinearity of atoms in the reference dictionary and signal dimension $\dim$. For simplicity, we set the reference dictionary to be the constant collinearity dictionary with coherence $\mu$: $\refdict(\mu) = ((1 - \mu)\mathbb{I} + \mu \mathit{1}\mathit{1}^T)^{1/2}$ ($\mu > 0$). This simple dictionary class was used to illustrate the local identifiability conditions in \cite{Gribonval2010} and \cite{Wu2015}. The coherence parameter $\mu$ controls the collinearity between dictionary atoms. 
By studying this class of reference dictionaries, we can significantly simplify the conditions and demonstrate how the coherence $\mu$ affects dictionary identifiability.

\begin{corollary} \label{coro:4}
Suppose the reference dictionary $\refdict$ is a constant collinearity dictionary with coherence $\mu$: $\refdict(\mu) = ((1 - \mu)\mathbb{I} + \mu \mathit{1}\mathit{1}^T)^{1/2}$ ($\mu > 0$),
and the reference coefficient vector $\coef$ is from $SG(s)$. If and only if
\[
\mu\sqrt{s} < \frac{\dim - s}{\dim - 1},
\]
$\refdict$ is a sharp local minimum with sharpness at least $\frac{s}{\sqrt{\pi}(1 + \mu (\dim - 1))\dim}\left(\frac{\dim - s}{\dim - 1} - \mu \sqrt{s}\right)$. For any $\dict\in S = \left\{\dict\in \BASIS{\R^\dim}\Big| \norm{\dict}_2\leq 2 \sqrt{1 + \mu (\dim - 1)}, \norm{\dict - \refdict}_F\leq \frac{1}{8\sqrt{2}(1 + \mu (\dim - 1))}\left(\frac{\dim - s}{\dim - 1} - \mu \sqrt{s}\right) \right\}$, we have $\E L(\dict) \geq \E L(\refdict)$. 
\end{corollary}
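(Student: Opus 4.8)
The plan is to specialize Theorems \ref{Thm:local} and \ref{Thm:region} to $\refdict=\refdict(\mu)$ and $\coef\sim SG(s)$ by evaluating, one by one, every quantity those theorems involve, and then substituting and simplifying. The easy ingredients come first. Since $M^*=\refdict^T\refdict=(1-\mu)\I+\mu\1\1^T$ has eigenvalue $1+\mu(\dim-1)$ with multiplicity one and $1-\mu$ with multiplicity $\dim-1$, we get $\norm{\refdict}_2^2=\norm{M^*}_2=1+\mu(\dim-1)$. For $SG(s)$ with $\coef_j=\xi_j z_j$, independence of $z$ and the random support $\bm S$ gives $\E|\coef_j|=P(j\in\bm S)\,\E|z_j|=\frac{s}{\dim}\sqrt{2/\pi}$, hence $\max_j\E|\coef_j|=\frac{s}{\dim}\sqrt{2/\pi}$; and for $j\neq k$, conditioning on everything but $z_j$ (which is independent of $\bm S$ and of $z_k$, with $\E z_j=0$) yields $\E\coef_j\sgn(\coef_k)=0$, while for $j=k$ it equals $\E|\coef_k|$. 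Therefore $B(\coef,M^*)=\E|\coef_1|\,(\I-M^*)=-\frac{s\mu}{\dim}\sqrt{2/\pi}\,(\1\1^T-\I)\in H^\dim$. Finally the regularity constant $c_\coef=c_s\geq\frac{s(\dim-s)}{\dim(\dim-1)}\sqrt{2/\pi}$ is supplied by Proposition \ref{prop:specific_constants}; using this explicit lower bound is enough for the ``at least'' claims in the corollary.

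The crux is computing $\norm{B(\coef,M^*)}_{\coef}^*$ \emph{exactly} --- the bound $\norm{\cdot}_{\coef}^*\leq\frac{1}{c_\coef}\norm{\cdot}_F$ from Assumption I does not suffice, because the corollary asserts an ``if and only if''. Write $B=-c(\1\1^T-\I)$ with $c=\frac{s\mu}{\dim}\sqrt{2/\pi}$. For the lower bound I use the test matrix $A=-(\1\1^T-\I)\in H^\dim$: then $\tr(B^TA)=c\,\dim(\dim-1)$, and since $\{\coef_k=0\}$ coincides (a.s.) with $\{k\notin\bm S\}$, on which $\sum_{j\neq k}\coef_j=\sum_{j\in\bm S}z_j\sim\N(0,s)$, a direct normal-moment computation gives $\norm{A}_{\coef}=\sum_k P(k\notin\bm S)\,\E|\sum_{j\in\bm S}z_j|=(\dim-s)\sqrt{2s/\pi}$; the resulting ratio is $\frac{\mu\sqrt s(\dim-1)}{\dim-s}$. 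For the matching upper bound I exploit that both $B$ and the model $SG(s)$ are invariant under simultaneous coordinate permutations: for any $A\in H^\dim$, averaging $P_\pi AP_\pi^T$ over all permutations $\pi$ leaves $\tr(B^TA)$ unchanged (because $P_\pi^TBP_\pi=B$) and cannot increase $\norm{\cdot}_{\coef}$ (triangle inequality together with $\norm{P_\pi AP_\pi^T}_{\coef}=\norm{A}_{\coef}$, which holds since $\bm S$ is exchangeable); the average lies in $H^\dim$ and is permutation-symmetric, hence is a scalar multiple of $\1\1^T-\I$, so the supremum defining the dual semi-norm is attained there. This gives $\norm{B(\coef,M^*)}_{\coef}^*=\frac{\mu\sqrt s(\dim-1)}{\dim-s}$, so the condition $\norm{B(\coef,M^*)}_{\coef}^*<1$ of Theorem \ref{Thm:local} is precisely $\mu\sqrt s<\frac{\dim-s}{\dim-1}$ (and the regime $>1$ is $\mu\sqrt s>\frac{\dim-s}{\dim-1}$, giving the ``not a local minimum'' half).

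It remains to substitute. By Theorem \ref{Thm:local}, $\refdict$ is a sharp local minimum iff $\mu\sqrt s<\frac{\dim-s}{\dim-1}$, with sharpness at least $\frac{c_\coef}{\sqrt2\norm{\refdict}_2^2}(1-\norm{B(\coef,M^*)}_{\coef}^*)$; plugging in $\norm{\refdict}_2^2=1+\mu(\dim-1)$, $c_\coef=\frac{s(\dim-s)}{\dim(\dim-1)}\sqrt{2/\pi}$, and $1-\norm{B(\coef,M^*)}_{\coef}^*=\frac{\dim-1}{\dim-s}\big(\frac{\dim-s}{\dim-1}-\mu\sqrt s\big)$, and using $\sqrt{2/\pi}/\sqrt2=1/\sqrt\pi$, one obtains the stated $\frac{s}{\sqrt\pi(1+\mu(\dim-1))\dim}\big(\frac{\dim-s}{\dim-1}-\mu\sqrt s\big)$. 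The same substitutions in Theorem \ref{Thm:region}, with $\max_j\E|\coef_j|=\frac{s}{\dim}\sqrt{2/\pi}$, turn the radius $\frac{(1-\norm{B(\coef,M^*)}_{\coef}^*)c_\coef}{8\sqrt2\norm{\refdict}_2^2\max_j\E|\coef_j|}$ into $\frac{1}{8\sqrt2(1+\mu(\dim-1))}\big(\frac{\dim-s}{\dim-1}-\mu\sqrt s\big)$ and $2\norm{\refdict}_2$ into $2\sqrt{1+\mu(\dim-1)}$, which is exactly the set $S$ in the statement. The main obstacle is the exact evaluation of the dual semi-norm in the previous paragraph --- especially the upper bound, where the symmetrization reduction and the Gaussian-tail evaluation of $\norm{\1\1^T-\I}_{\coef}$ must be carried out carefully; everything else is bookkeeping and algebra.
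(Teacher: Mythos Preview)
Your proposal is correct and follows essentially the same route as the paper: compute $M^*$, $B(\coef,M^*)$, $\max_j\E|\coef_j|$, $c_\coef$, and $\norm{\refdict}_2^2$ explicitly for $SG(s)$ and the constant-collinearity dictionary, evaluate $\norm{B(\coef,M^*)}_{\coef}^*$ exactly, then substitute into Theorems \ref{Thm:local} and \ref{Thm:region}. The only stylistic difference is that the paper isolates the dual-norm computation as a separate Lemma (Lemma \ref{lemma:symmetry}), arguing via a symmetric constrained optimization that the extremizer has constant off-diagonal entries, whereas you obtain the same reduction by averaging $P_\pi A P_\pi^T$ over permutations and invoking convexity of $\norm{\cdot}_{\coef}$ together with exchangeability of $SG(s)$; both arguments are equivalent exploitations of the permutation symmetry and yield the identical value $\norm{B(\coef,M^*)}_{\coef}^*=\frac{\mu\sqrt{s}(\dim-1)}{\dim-s}$.
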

Three parameters play important roles for the reference dictionary to be a sharp local minimum: dictionary atom collinearity $\mu$, sparsity $s$ and dimension $\dim$. Since $\mu\sqrt{s} - \frac{\dim - s}{\dim - 1}$ is a monotonically increasing function with respect to $\mu$ and $s$, local identifiability holds when the dictionary is close to an orthogonal matrix and the coefficient vector is sufficiently sparse. Another important observation is that $\mu\sqrt{s} - \frac{\dim - s}{\dim - 1}$ is monotonically decreasing as $\dim$ increases. That means if the number of nonzero elements per signal $s$ is fixed, the local identifiability condition is easier to satisfy for larger $\dim$. If $\dim$ tends to infinity, the condition becomes $s < \frac{1}{\sqrt{\mu}}$. 
Also, the set $S$ shrinks as $s$ or $\mu$ increases, which means that the region is smaller when the coefficients have less sparsity or the atoms in the dictionary have higher correlations. 
When $\mu = 0$, the set $S$ becomes $ \left\{\dict\in \BASIS{\R^\dim}\Big| \norm{\dict}_2\leq 2, \norm{\dict - \refdict}_F\leq \frac{1}{8\sqrt{2}}\frac{\dim - s}{\dim - 1}\right\} $. 


To illustrate our condition for non-negative coefficient distributions, we consider non-negative sparse Gaussian distributions in the following example. Since we do not have the explicit form of the regularity constant $c_{\coef}$ for this type of distribution, we omit the corresponding results for the sharpness and the region bound.
\begin{corollary}\label{coro:5}
Suppose the reference dictionary is a constant collinearity dictionary with coherence $\mu$: $\refdict(\mu) = ((1 - \mu)\mathbb{I} + \mu \mathit{1}\mathit{1}^T)^{1/2},$
and the reference coefficient vector $\coef$ is from non-negative sparse Gaussian distribution $|SG(s)|$. If 
\[
\Big|\mu - \frac{s - 1}{\dim - 1}\Big| < \frac{\dim - s}{\dim - 1},
\]
$\refdict$ is a sharp local minimum.
\end{corollary}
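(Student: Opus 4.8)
The proof reduces, via Theorem \ref{Thm:local}, to verifying the single scalar inequality $\norm{B(\coef, M^*)}_{\coef}^* < 1$ for the non-negative sparse Gaussian coefficient $|SG(s)|$ paired with the constant collinearity dictionary $\refdict(\mu)$ (which is a genuine element of $\BASIS{\R^\dim}$, since the hypothesis forces $\mu\in(0,1)$, hence $(1-\mu)\I+\mu\1\1^T$ is positive definite with unit diagonal). The plan is to (i) evaluate $B(\coef,M^*)$ in closed form, (ii) compute the dual semi-norm $\norm{B(\coef,M^*)}_{\coef}^*$ by a permutation-symmetrization argument, and (iii) observe that the resulting value is below $1$ exactly when $\big|\mu-\frac{s-1}{\dim-1}\big|<\frac{\dim-s}{\dim-1}$. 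For step (i): write $\coef_j=\xi_jz_j\ge 0$ with $z\sim\N(0,\I)$ independent of the size-$s$ support indicator $\bm\xi$. Non-negativity gives $\sgn(\coef_k)=\xi_k$ and $|\coef_j|=\coef_j$; since $\coef_k\sgn(\coef_k)=|\coef_k|$ and $M^*_{k,k}=1$, the diagonal of $B$ vanishes, and for $j\neq k$, $\E\coef_j\sgn(\coef_k)=\E|z_j|\,\E[\xi_j\xi_k]=\sqrt{2/\pi}\cdot\frac{s(s-1)}{\dim(\dim-1)}$ while $M^*_{j,k}\E|\coef_j|=\mu\cdot\frac{s}{\dim}\sqrt{2/\pi}$. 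Hence $B(\coef,M^*)=b(\1\1^T-\I)$ with $b=\sqrt{2/\pi}\,\frac{s}{\dim}\big(\frac{s-1}{\dim-1}-\mu\big)$.

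For step (ii), I would argue that the supremum defining $\norm{B(\coef,M^*)}_{\coef}^*$ over $A\in H^\dim$ is attained on the line $\{a(\1\1^T-\I):a\in\R\}$. The $SG(s)$ model is exchangeable, so $\norm{PAP^T}_{\coef}=\norm{A}_{\coef}$ for every permutation matrix $P$; also $P^TBP=B$, whence $\tr(B^TPAP^T)=\tr(B^TA)$. Averaging over the permutation group $\Pi$ and using the triangle inequality (convexity) of the semi-norm, the average $\bar A=|\Pi|^{-1}\sum_{P\in\Pi}PAP^T$ satisfies $\tr(B^T\bar A)=\tr(B^TA)$ and $\norm{\bar A}_{\coef}\le\norm{A}_{\coef}$, so $\bar A$ does at least as well in the ratio; and $\bar A$, being conjugation-invariant with zero diagonal, is necessarily a multiple of $\1\1^T-\I$. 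On that line the two quantities are elementary: for $A=a(\1\1^T-\I)$ one has $\sum_jA_{k,j}\coef_j=a\sum_{j\ne k}\coef_j\ge 0$, and conditioning on $\coef_k=0$ (the support is then uniform over the other $\dim-1$ coordinates) gives $\norm{A}_{\coef}=|a|\,(\dim-s)\,s\,\sqrt{2/\pi}$, whereas $\tr(B^TA)=ab\,\tr\big((\1\1^T-\I)^2\big)=ab\,\dim(\dim-1)$. Therefore $\norm{B(\coef,M^*)}_{\coef}^*=\frac{|b|\,\dim(\dim-1)}{(\dim-s)\,s\,\sqrt{2/\pi}}=\frac{|\mu(\dim-1)-(s-1)|}{\dim-s}=\frac{|\mu-(s-1)/(\dim-1)|}{(\dim-s)/(\dim-1)}$.

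For step (iii), this last expression is strictly less than $1$ precisely under the stated hypothesis, so Theorem \ref{Thm:local} (whose moment condition $\E\|\coef\|_1=s\sqrt{2/\pi}<\infty$ is clearly met) gives that $\refdict$ is a sharp local minimum; the degenerate case $\mu=\frac{s-1}{\dim-1}$ makes $B=0$ and the conclusion is immediate. The main obstacle is step (ii): one must justify rigorously that the symmetrized competitor $\bar A$ is admissible and lossless — in particular that $\norm{\bar A}_{\coef}>0$ whenever $\bar A\neq 0$, which follows from the $c_{\coef}$-regularity of $\norm{\cdot}_{\coef}$ (Assumption I, with the explicit constant from Proposition \ref{prop:specific_constants}), so that the ratio is well defined throughout the averaging. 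Everything else is routine bookkeeping with Gaussian absolute moments and size-$s$ subset-sampling probabilities.
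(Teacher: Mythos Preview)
Your proof is correct and follows the same outline as the paper's: verify the hypothesis of Theorem~\ref{Thm:local} by computing $B(\coef,M^*)$ in closed form and then evaluating $\norm{B(\coef,M^*)}_{\coef}^*$. The computation of $B$ and the final value $\norm{B(\coef,M^*)}_{\coef}^*=\frac{\dim-1}{\dim-s}\big|\mu-\tfrac{s-1}{\dim-1}\big|$ match the paper exactly.

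The only difference is in how the dual semi-norm is obtained. The paper exploits non-negativity of $\coef$ directly: for any $A\in H^\dim$ with non-negative entries, the absolute value drops out and $\norm{A}_{\coef}=\sqrt{2/\pi}\,\tfrac{s(\dim-s)}{\dim(\dim-1)}\sum_{k}\sum_{j\neq k}A_{k,j}$ becomes linear, from which the dual norm is read off (the paper leaves implicit why the supremum is attained on such $A$; this follows at once from $\E|Y|\ge|\E Y|$, which lower-bounds $\norm{A}_{\coef}$ for \emph{arbitrary} $A$ by the same linear functional of $|{\textstyle\sum_{j\neq k}}A_{k,j}|$). Your permutation-symmetrization argument is a legitimate alternative: it uses only exchangeability of the $|SG(s)|$ coefficients and the permutation invariance of $B$, and is in the same spirit as the paper's Lemma~\ref{lemma:symmetry} used for Corollary~\ref{coro:4}. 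Your route is slightly more general (it does not rely on $\coef\ge 0$ until the final evaluation of $\norm{\1\1^T-\I}_{\coef}$), while the paper's shortcut is more direct for this specific non-negative model.
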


Note that the condition $\frac{\dim - 1}{\dim - s} \cdot \Big|\mu - \frac{s - 1}{\dim - 1}\Big| < 1$ is equivalent to $\frac{2s - \dim - 1}{\dim - 1}< \mu <1$. When $\dim$ tends to infinity, the reference dictionary is a local minimum for $\mu < 1$. Compared to the same bound from Corollary \ref{coro:4}, $\mu < \frac{1}{\sqrt{s}}$ for large $\dim$, the bound for non-negative coefficients is less restrictive. Therefore, the non-negativeness of the coefficient distribution relaxes the requirement for local identifiability. 


Other interesting examples, such as Bernoulli Gaussian coefficients and sparse Laplacian coefficients, can be found in \nameref{Sec:Appendix_A}. 

\subsection{Global identifiability}\label{Sec:3.2}



For $\ell_1$-minimization, multiple local minima do exist: as a result of sign-permutation ambiguity, if $\dict$ is a local minimum, for any permutation matrix $P$ and any diagonal matrix $\Lambda$ with diagonal elements $\pm 1$, $\dict P \Lambda$ is also a local minimum. These local minima are benign in nature since they essentially refer to the same dictionary. Can there be other local minima than the benign ones? If so, how can we distinguish benign local minima from them? In this subsection, we consider the problem of global identifiability: is the reference dictionary a global minimum? First, we give a counter-example to show that the reference dictionary is not necessary a global minimum of the $\ell_1$-minimization problem even for orthogonal dictionary and sparse coefficients.


\paragraph{Counter-example on global identifiability.} Suppose the reference dictionary is the identity matrix $\mathbb{I}\in \R^{2\times 2}$. The coefficients are generated from a Bernoulli-type model $\coef\in\R^2$ such that $\coef_i = z_i \xi_i$ for $i = 1,2$, where $\xi_1$ and $\xi_2$ are Bernoulli variables with success probability $0.67$, and $(z_1,z_2)$ is drawn from the below Gaussian mixture model: 
\[
\frac{1}{2} \N\left(0,\left(\begin{array}{cc}101& -99\\ -99& 101\end{array}\right)\right) + \frac{1}{2}\N\left(0,\left(\begin{array}{cc}101& 99\\ 99& 101\end{array}\right)\right).
\] 
We generate 2000 samples from the model and compute the dictionary learning objective $L(\cdot)$ defined in \eqref{Eq:L} for each candidate dictionary (Fig. \ref{Fig:counter_example}). As can be seen, the reference dictionary is not the global minimum. 
\begin{figure}[ht]
\centering
\includegraphics[scale=.8]{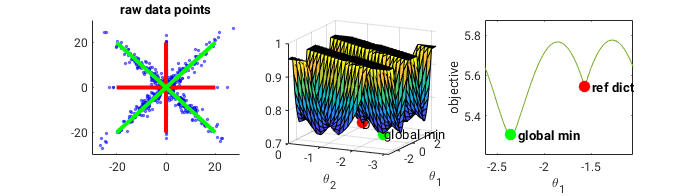}
\caption{The empirical data (Left) and the objective surface plot (Middle). We parameterize a candidate dictionary as $\dict = (a_1, a_2)$, where $a_1 = (\cos(\theta_1), \sin(\theta_1))$, $a_2 = (\cos(\theta_2), \sin(\theta_2))$. The objective of $\dict$ is defined as in \eqref{Eq:L}. Green points indicate global minima, whereas red points are reference dictionary or its sign-permutation equivalents. The Right figure shows the objective curve for all orthogonal dictionaries ($\theta_1 - \theta_2 = \pi/2$). While the reference dictionary is a sharp local minimum, it is not a global minimum.}\label{Fig:counter_example}
\end{figure}

In the above example, although the reference dictionary is not a global minimum, it is still a sharp local minimum and there is no other sharp local minimum. Is this observation true for general cases? The answer is yes. The following theorem shows that the reference dictionary is the unique sharp local minimum of $\ell_1$-minimization up to sign-permutation.

\begin{theorem}[Unique sharp local minimum]\label{Thm:unique}
If $\refdict$ is a sharp local minimum of Formulation \eqref{Eq:main2}, it is the only sharp local minimum in $\BASIS{\R^\dim}$. If it is not a `sharp' local minimum, there will be no sharp local minimum in $\BASIS{\R^\dim}$.
\end{theorem}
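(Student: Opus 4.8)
The plan is to show that any sharp local minimum $\dict^0 \in \BASIS{\R^\dim}$ must be equal to $\refdict$ up to sign-permutation, i.e., $\dict^0 = \refdict P \Lambda$ for some permutation matrix $P$ and sign matrix $\Lambda$. The key idea is to relate a candidate sharp minimum $\dict^0$ back to the reference dictionary through the change of variables $W = (\dict^0)^{-1}\refdict$, so that the signal $\signal = \refdict\coef$ is represented under $\dict^0$ as $(\dict^0)^{-1}\signal = W\coef$. Then $L(\dict^0) = \E\|W\coef\|_1$, and a perturbation $\dict$ of $\dict^0$ corresponds to a perturbation of $W$. The heart of the argument is to apply Theorem \ref{Thm:local} (or rather its proof machinery, the sandwich inequality of Lemma \ref{lemma:3}) \emph{with $\dict^0$ playing the role of the reference dictionary and $W\coef$ playing the role of the coefficient vector}. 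Concretely, if $\dict^0$ is a sharp local minimum, then the first-order optimality/sharpness condition analogous to \eqref{Eq:local_sharp_condition} must hold for the transformed problem; this forces strong structural constraints on $W$.

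First, I would set up the transformed optimality condition. Writing $\widetilde\coef = W\coef$ where $W = (\dict^0)^{-1}\refdict$, sharpness of $\dict^0$ requires $\|B(\widetilde\coef, M(\dict^0))\|_{\widetilde\coef}^* < 1$ (and cannot hold if this quantity exceeds $1$). Next, I would use Assumption II in an essential way: since $\widetilde\coef_k = \sum_l W_{k,l}\coef_l$ is a fixed linear combination of the reference coefficients, Assumption II tells us that $\widetilde\coef_k = 0$ if and only if $W_{k,l}\coef_l = 0$ for every $l$ — that is, the sign/support pattern of $\widetilde\coef$ is rigidly controlled by the support pattern of $\coef$ through the zero/nonzero pattern of the rows of $W$. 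I expect this to be the linchpin: it says the semi-norm $\|\cdot\|_{\widetilde\coef}$ "sees" the sparsity structure of $\coef$ filtered through $W$, and combined with Assumption I's regularity of $\|\cdot\|_{\coef}$, one derives that unless each row of $W$ has exactly one nonzero entry, the transformed $B$-matrix is too large in dual norm for the sharpness condition to hold.

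The main obstacle, and the step I would spend the most care on, is precisely this implication: \emph{sharpness of $\dict^0$ $\Rightarrow$ every row of $W$ is supported on a single coordinate}. Once that is established, $W = P\Lambda'$ for a permutation $P$ and a diagonal matrix $\Lambda'$; the column-norm constraint $\|\dict^0_j\|_2 = 1 = \|\refdict_j\|_2$ in $\BASIS{\R^\dim}$ forces the diagonal entries of $\Lambda'$ to have absolute value $1$, hence $\Lambda' = \Lambda$ is a sign matrix, and $\dict^0 = \refdict W^{-1} = \refdict \Lambda P^{-1}$ is a sign-permutation of $\refdict$. To handle the obstacle I would argue by contradiction: if some row of $W$, say row $k$, has two or more nonzero entries $W_{k,a}, W_{k,b}$, then I would exhibit a perturbation direction $A \in H^\dim$ (or the induced direction in dictionary space) along which the $o(\cdot)$-term dominates because the relevant entries of $B(\widetilde\coef, M(\dict^0))$ acquire contributions from the "off-diagonal" mixing of $\coef_a$ and $\coef_b$, pushing $\|B(\widetilde\coef, M(\dict^0))\|_{\widetilde\coef}^* \ge 1$; this uses the computation of $\E\widetilde\coef_j \sgn(\widetilde\coef_k)$ in terms of the joint law of the $\coef_l$'s, where the assumed existence of a density for the base vector $z$ guarantees the cross terms do not vanish. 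I would also need to separately dispose of the second sentence of the theorem — that if $\refdict$ itself is not sharp, nothing is — which follows immediately from the first part applied contrapositively, since any sharp local minimum would have to coincide with $\refdict$ up to sign-permutation, and sharpness is invariant under sign-permutation.
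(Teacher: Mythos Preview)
Your overall strategy is correct and matches the paper's: argue contrapositively that any $\dict^0$ which is not a sign-permutation of $\refdict$ fails to be a sharp local minimum, set $W=(\dict^0)^{-1}\refdict$ and $\widetilde\coef=W\coef$, and use Assumption~II on the row of $W$ that has at least two nonzero entries. The deduction ``each row of $W$ has a single nonzero $\Rightarrow$ $\dict^0$ is a sign-permutation of $\refdict$'' and the handling of the second sentence are also fine.

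The gap is in your proposed mechanism for the key step. You plan to show that the mixing in row $k$ of $W$ makes the matrix $B(\widetilde\coef,M(\dict^0))$ large enough to force $\norm{B(\widetilde\coef,M(\dict^0))}_{\widetilde\coef}^*\geq 1$, via computations of $\E\widetilde\coef_j\sgn(\widetilde\coef_k)$. That is the wrong lever, and carrying out those cross-moment computations for a general $W$ and general coefficient law is both hard and unnecessary. The paper never touches $B$ here at all. Instead it shows that the \emph{semi-norm} $\norm{\cdot}_{\widetilde\coef}$ itself degenerates: if row $k$ of $W$ has $T\geq 2$ nonzero entries in positions $l_1,\dots,l_T$, then by Assumption~II the event $\{\widetilde\coef_k=0\}$ forces $\coef_{l_1}=\cdots=\coef_{l_T}=0$, so on that event $\widetilde\coef=W\coef$ lies a.s.\ in a linear subspace of dimension at most $\dim-T\leq \dim-2$. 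Since $\widetilde\coef_k$ is already $0$ on that event, one can pick a nonzero vector $(\Delta_{k,j})_{j\neq k}$ (there are $\dim-1$ free parameters versus a subspace of dimension $\leq \dim-2$) with
\[
\E\Big|\sum_{j}\Delta_{k,j}\widetilde\coef_j\Big|\,\1(\widetilde\coef_k=0)=0.
\]
But sharpness of $\dict^0$ implies, by applying the first-order inequality to both $\Delta$ and $-\Delta$, that this expectation must be strictly positive for every nonzero $(\Delta_{k,j})_{j\neq k}$; contradiction. In short, the dual-norm condition fails not because the numerator $\tr(B^T\Delta)$ is large but because the denominator $\norm{\Delta}_{\widetilde\coef}$ can be made zero. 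Replace your ``$B$ gets large'' argument with this degeneracy-of-the-semi-norm argument (the paper packages it as a lemma: sharpness implies $\widetilde\coef\,\1(\widetilde\coef_k=0)$ does not lie in any $(\dim-2)$-dimensional subspace) and the proof goes through cleanly.
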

Note that Theorem \ref{Thm:unique} works for the population case where the sample size is infinite. For the finite sample case, we obtain a similar result with a stronger assumption on the data generation model.
\begin{theorem}[Asymptotic case]\label{Thm:global} Suppose $n$ samples are drawn i.i.d. from a model satisfying Assumptions I and II, $\|\coef\|_\infty$ is bounded by $L< \infty$,  and $\frac{\dim \ln \dim}{n}\rightarrow 0$, then for any fixed $\epsilon, \rho> 0$, 
\[
P\left(\begin{array}{l}\text{there exists $\dict\neq \refdict$ up to sign-permutation s.t.  $\norm{\dict^{-1}}_2 \leq \rho$} \\
\text{and $\dict$ is a sharp local minimum of \eqref{Eq:main_finitesample} with sharpness at least $\epsilon$}\end{array}\right)\rightarrow 0.
\]
\end{theorem}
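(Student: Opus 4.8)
The plan is to transfer the population statement of Theorem~\ref{Thm:unique} to the empirical objective $L$ of \eqref{Eq:L} by a uniform law of large numbers for the \emph{first-order} structure of the $\ell_1$ objective over the compact set $K_\rho:=\{\dict\in\BASIS{\R^\dim}:\norm{\dict^{-1}}_2\leq\rho\}$.

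\emph{Step 1 (sharpness as a bound on directional derivatives).} For $\dict\in K_\rho$ and a unit direction $V$ tangent to $\BASIS{\R^\dim}$ at $\dict$, the piecewise-smooth first-order expansion that underlies Lemma~\ref{lemma:3} and the proof of Theorem~\ref{Thm:local} gives $L(\dict+tV)-L(\dict)=t\,g_n(\dict;V)+o(t)$ as $t\downarrow 0$, where, writing $\bm{\beta}^{(i)}=\dict^{-1}\signal^{(i)}$ and $W=\dict^{-1}V$,
\[
g_n(\dict;V)=\frac1n\sum_{i=1}^n\Big(-\!\!\!\sum_{k:\bm{\beta}^{(i)}_k\neq 0}\!\!\!(W\bm{\beta}^{(i)})_k\,\sgn(\bm{\beta}^{(i)}_k)\;+\!\!\!\sum_{k:\bm{\beta}^{(i)}_k=0}\!\!\!\big|(W\bm{\beta}^{(i)})_k\big|\Big),
\]
and the population objective of \eqref{Eq:main2} has directional derivative $g(\dict;V)$ equal to the expectation of the same bracketed expression (with $\bm{\beta}=\dict^{-1}\signal$). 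As a function of $V$ each of $g_n(\dict;\cdot)$ and $g(\dict;\cdot)$ is positively homogeneous and convex (a linear form plus a sum of absolute values of linear forms), and --- by the same piecewise-smooth argument --- the $o(t)$ remainder is uniform over unit $V$; hence $\dict$ is a sharp local minimum of $L$ (resp.\ of \eqref{Eq:main2}) with sharpness at least $c$ if and only if $g_n(\dict;V)\geq c$ (resp.\ $g(\dict;V)\geq c$) for every unit tangent direction $V$.

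\emph{Step 2 (uniform convergence of $g_n$).} The key step is to establish
\[
\sup_{\dict\in K_\rho,\ \norm{V}_F=1}\big|g_n(\dict;V)-g(\dict;V)\big|\;\xrightarrow{P}\;0 .
\]
On $K_\rho$ the summands defining $g_n$ are bounded in absolute value by a constant depending only on $\dim,\rho,L,\norm{\refdict}_2$ (using $\norm{\coef}_\infty\leq L$ and $\norm{\dict^{-1}}_2\leq\rho$), and $K_\rho$ is compact, so a net over $K_\rho$, a Hoeffding bound at each net point, and a union bound should give the claim, the resulting entropy being controlled by the hypothesis $\frac{\dim\ln\dim}{n}\to 0$. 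I expect this to be the main obstacle: $g_n(\dict;V)$ depends on the data through the sign pattern $\sgn(\bm{\beta}^{(i)}_k)$ and the zero pattern $\1(\bm{\beta}^{(i)}_k=0)$, both of which are discontinuous in $\dict$ and in $\coef$, so the discretization must be built so that these patterns are locally constant on each cell. Assumption~II is exactly what makes this feasible --- almost surely $\bm{\beta}^{(i)}_k=0$ only through the support of $\coef^{(i)}$ and the support of the $k$-th row of $\dict^{-1}\refdict$, of which there are finitely many possibilities --- and the bookkeeping must be tight enough to keep the induced entropy at $O(\dim\ln\dim)$.

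\emph{Step 3 (contradiction with Theorem~\ref{Thm:unique}).} Fix $\epsilon,\rho>0$, let $A_n$ denote the event in the statement, and work on $A_n\cap\{\sup_{\dict\in K_\rho,\norm{V}_F=1}|g_n-g|<\epsilon/2\}$: picking $\dict$ that realizes $A_n$, Step~1 gives $g_n(\dict;V)\geq\epsilon$ for all unit tangent $V$, hence $g(\dict;V)\geq\epsilon/2$ for all such $V$, hence --- by Step~1 again --- $\dict$ is a sharp local minimum of \eqref{Eq:main2} with sharpness at least $\epsilon/2$; since $\dict$ is not equal to $\refdict$ up to sign-permutation, this contradicts Theorem~\ref{Thm:unique}. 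Therefore that intersection is empty, so $A_n\subseteq\{\sup_{\dict\in K_\rho,\norm{V}_F=1}|g_n-g|\geq\epsilon/2\}$ and $P(A_n)\to 0$ by Step~2, which is the desired conclusion.
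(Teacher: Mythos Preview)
Your architecture is reasonable, but Step~2 has a genuine gap that is not merely bookkeeping. A net over $K_\rho$ of mesh $\delta$ has cardinality of order $(C/\delta)^{\dim(\dim-1)}$, since $\BASIS{\R^\dim}$ is a manifold of dimension $\dim(\dim-1)$; the resulting log-covering number is $\Theta(\dim^2\log(1/\delta))$, not $O(\dim\ln\dim)$, so a naive net+Hoeffding argument cannot meet the hypothesis $\dim\ln\dim/n\to 0$. Your proposed fix --- stratify by the support pattern of the rows of $\dict^{-1}\refdict$ --- does not rescue the count (there are $2^\dim$ supports per row), and more importantly does not achieve the local constancy you assert: within the generic open stratum (all rows of $\dict^{-1}\refdict$ with full support), Assumption~II only pins down the \emph{zero} pattern of $\bm\beta^{(i)}$, while the \emph{sign} pattern of the nonzero coordinates $\sgn(\bm\beta^{(i)}_k)$ still flips on sample-dependent codimension-one subvarieties of $K_\rho$ as $\dict$ varies. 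So $g_n(\dict;V)$ is discontinuous in $\dict$ on a data-dependent arrangement, and you would need a further sample-dependent refinement whose complexity you have not controlled. Since Step~3 consumes Step~2 as a black box, the argument does not close.

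The paper takes a different route that sidesteps uniform convergence of $g_n$ entirely. Rather than comparing $g_n$ to $g$, it extracts from ``$\dict\neq\refdict$ is an empirical sharp local minimum with sharpness $\epsilon$ and $\norm{\dict^{-1}}_2\leq\rho$'' a single scalar consequence: some row $c=(\dict^{-1}\refdict)[k,\cdot]$ has at least two nonzero entries, and sharpness forces
\[
\frac{1}{n}\sum_{i=1}^n \1\Big(\textstyle\sum_l c_l\coef^{(i)}_l=0,\ \sum_l c_l^2(\coef^{(i)}_l)^2>0\Big)\;>\;\epsilon'
\]
for some $\epsilon'=\epsilon'(\epsilon,\rho,L)>0$. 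This reduces everything to uniform convergence over the indicator class $\mathcal F=\{f_c:c\in\R^\dim\}$, $f_c(\coef)=\1(\sum_l c_l\coef_l=0,\ \sum_l c_l^2\coef_l^2>0)$, whose population mean is zero by Assumption~II. The key lemma --- replacing your Step~2 --- is a combinatorial argument that the VC dimension of $\mathcal F$ is at most $2\dim$; standard VC theory then gives the desired uniform bound under $n\gg\dim\ln\dim$. What this buys is precisely the drop from $\dim^2$ to $\dim$ in the complexity: the uniformity is over a $\dim$-parameter class of indicators rather than over the $\dim^2$-dimensional dictionary manifold.
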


\paragraph{Remarks:} Theorem \ref{Thm:global} ensures that \emph{no} dictionaries other than $\refdict$ are sharp local minima within a region $\{\dict\neq \refdict\Big|\norm{\inv{\dict}}_2\leq \rho\}$ if the sample size $n$ is larger than $O(\dim \ln \dim)$. However, it does not tell whether or not $\refdict$ is a sharp local minimum. This latter problem is answered in Theorem \ref{Thm:local}, which gives a sufficient and necessary condition to ensure that the reference dictionary is a sharp local minimum. 


\section{Algorithms for checking sharpness and solving $\ell_1$-minimization}\label{Sec:4}
As shown in the previous section, the reference dictionary is the unique sharp local minimum under mild conditions and certain data generation models. This motivates us to use this property as a stopping criterion for $\ell_1$-minimization. If the algorithm finds a sharp local minimum, we know that it is the reference dictionary. To do so we need to address the following practical questions:
\begin{itemize}
\item How to determine numerically if a given dictionary is a sharp local minimum?
\item How to find a sharp local minimum and recover the reference dictionary?
\end{itemize}
In this section, we will first introduce an algorithm to check if a given dictionary is a sharp local minimum. We will then develop an algorithm that aims at recovering the reference dictionary. The latter algorithm is guaranteed to decrease the (truncated) $\ell_1$ objective function at each iteration (Proposition \ref{prop:monotone}).

\subsection{Determining sharp local minimum} \label{Sec:4.1}
Although the concept of a sharp local minimum is quite intuitive, checking whether a given dictionary is a sharp local minimum can be challenging. First of all, the dimension of the problem is very high ($\dim^2$). Secondly, if a dictionary is a sharp local minimum, the objective function is not differentiable at that point, precluding us from using gradients or Hessian matrix to solve the problem. 

We propose a novel algorithm to address these challenges. We decompose the problem into a series of sub-problems each of which is low-dimensional. In Proposition \ref{prop:sharp_is_stable}, we show that a given dictionary is a sharp local minimum in dimension $\dim^2$ if and only if certain vectors are sharp local minima for the corresponding sub-problems of dimension $\dim$. The objective function of each subproblem is strongly convex. To deal with non-existence of gradient or Hessian matrix, we design a perturbation test based on the observation that a sharp local minimum ought to be stable with respect to small perturbations. For instance, $x = 0$ is the sharp local minimum of $|x|$ but is non-sharp local minimum of $x^2$. If we add a linear function as a perturbation, $x = 0$ is still a local minimum of $|x| + \epsilon \cdot x$ for any $\epsilon$ such that $|\epsilon| < 1$ but not so for $x^2 + \epsilon \cdot x$. The choice of the perturbation is crucial. In Proposition \ref{prop:sharp_is_stable}, we show that adding a perturbation to the dictionary collinearity matrix $M$ is sufficient.  
This leads us to the following theorem:
\begin{proposition}\label{prop:sharp_is_stable}
The following three statements are equivalent:
\begin{itemize}
\item[1)] $\dict$ is a sharp local minimum of \eqref{Eq:main_finitesample}.
\item[2)] For any $k = 1,\ldots, \dim$, $\I_k$ is the sharp local minimum of the strongly convex optimization:
\begin{align}\label{Eq:coordinatewise_sharp}
\I_k \in \mathrm{argmin}_{\bm w}~ & \E |\big<\coef, \bm w\big>| + \sum_{h=1,h\neq k}^\dim \sqrt{(w_{h} - M_{k, h})^2 + 1 - M_{k, h}^2}\cdot\E |\coef_h|.\\ \nonumber
\mathrm{subject~to~}& \bm w = [w_1,\ldots, w_\dim]\in \R^\dim,~ w_k = 1.
\end{align}
\item[3)] For a sufficiently small $\rho > 0$ and any $\tilde M$ s.t. $|\tilde M_{k,h} - M_{k,h}|\leq \rho$ for any $k, h = 1,\ldots, \dim$, $\I_k$ is the local minimum of the convex optimization:
\begin{align}\label{Eq:perturbed_version}
\I_k \in \mathrm{argmin}_{\bm w}~ & \E |\big<\coef, \bm w\big>| + \sum_{h=1,h\neq k}^\dim \sqrt{(w_{h} - \tilde M_{k, h})^2 + 1 - \tilde M_{k, h}^2}\cdot \E |\coef_h|.\\\nonumber
\mathrm{subject~to~}& \bm w = [w_1,\ldots, w_\dim]\in \R^\dim,~ w_k = 1.
\end{align}
for $k = 1,\ldots, \dim$.
\end{itemize}
\end{proposition}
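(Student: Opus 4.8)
### Proof proposal

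The plan is to establish the equivalence $1) \Longleftrightarrow 2) \Longleftrightarrow 3)$ in a cycle, with the bulk of the work going into $1) \Longleftrightarrow 2)$, which is a dimensionality-reduction statement, and with $2) \Longleftrightarrow 3)$ being a soft "perturbation stability of sharp minima of convex functions" argument.

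First I would set up the key reduction for $1) \Longleftrightarrow 2)$. The objective $L(\dict) = \frac1n\sum_i\|\dict^{-1}\signal^{(i)}\|_1$ (or its population analogue $\E\|\dict^{-1}\signal\|_1$) is naturally separable: writing $\bm\beta = \dict^{-1}\signal$ we have $\|\bm\beta\|_1 = \sum_{k=1}^\dim |\bm\beta_k|$, and $\bm\beta_k = \dict^{-1}[k,]\signal$ depends only on the $k$-th row of $\dict^{-1}$. The trick — which is exactly the content of the sandwich-type Lemma \ref{lemma:3} referenced earlier and of the first-order analysis behind Theorem \ref{Thm:local} — is to parametrize perturbations of $\dict$ not directly but through the constraint structure $\dict\in\BASIS{\R^\dim}$. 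Near $\dict$, moving a single atom $\dict_h$ on the unit sphere and tracking how $\dict^{-1}[k,]$ changes decouples across $k$; I would show that the local behavior of $L$ at $\dict$ in the Frobenius ball decomposes, to first order plus $o(\norm{\cdot}_F)$, as a sum over $k$ of the local behavior of the $k$-th subproblem objective
\[
g_k(\bm w) = \E|\big<\coef,\bm w\big>| + \sum_{h\neq k}\sqrt{(w_h - M_{k,h})^2 + 1 - M_{k,h}^2}\cdot\E|\coef_h|
\]
at $\bm w = \I_k$, subject to $w_k = 1$. The square-root term is precisely the contribution of the $h$-th atom's $\ell_1$ coefficient when atom $h$ is perturbed on the sphere to stay unit-norm while the relevant inner products change; the change of variables $\signal = \refdict\coef$ turns $\E|\bm\beta_h|$-type quantities into $\E|\coef_h|$ weights and the collinearities into the $M_{k,h}$. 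Because each $g_k$ is a sum of absolute-value and norm terms in $\bm w$ it is convex, and it is strongly convex on the affine slice $\{w_k = 1\}$ under Assumption I (this is exactly where $c_{\coef}$-regularity buys non-degeneracy of the quadratic part away from the kink). Then $\dict$ is a sharp local minimum of $L$ iff every $g_k$ has $\I_k$ as a sharp local minimum, and since each $g_k$ is convex, "sharp local minimum" is equivalent to "$\I_k$ is a minimizer with the subdifferential of $g_k$ at $\I_k$ containing $0$ in its relative interior" — i.e. $\I_k\in\mathrm{argmin}\, g_k$ with strict first-order decrease in every feasible direction. This gives $1)\Longleftrightarrow 2)$.

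For $2)\Longleftrightarrow 3)$ I would use the standard fact that for a convex function, being a \emph{sharp} minimizer is stable under small Lipschitz perturbations and, conversely, robustness of the minimizer under a suitably rich family of perturbations forces sharpness. Concretely: the only non-smooth part of $g_k$ is $\E|\big<\coef,\bm w\big>|$ at $\bm w = \I_k$, whose subdifferential is the convex hull of $\{\E[\coef\,\sgn(\big<\coef,\I_k\big>)\mid\cdot] + \ldots\}$ — in fact it is $\{\E\coef_k + \E[\coef\,\1(\coef_k=0)\,u] : |u|\le \ldots\}$-type object whose projection onto the $w_k=1$ slice has full-dimensional relative interior precisely by Assumption I. The map $\tilde M\mapsto$ (the smooth part $\sum_{h\neq k}\sqrt{(w_h-\tilde M_{k,h})^2+1-\tilde M_{k,h}^2}\,\E|\coef_h|$ and its gradient at $\I_k$) is continuous, so perturbing $\tilde M$ within a $\rho$-box perturbs this gradient within a small ball; $\I_k$ remains the minimizer of \eqref{Eq:perturbed_version} for all such $\tilde M$ iff this small ball of gradient shifts still lies inside the (relative interior of the) subdifferential of the non-smooth part — which is exactly the statement that the unperturbed inclusion holds with slack, i.e. $\I_k$ is a \emph{sharp} minimizer of $g_k$. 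Quantitatively, one needs $\rho$ times the Lipschitz constant of the gradient map (bounded by $\max_h\E|\coef_h|$ times a constant) to be smaller than the margin by which $0$ sits in the relative interior of $\partial(\E|\big<\coef,\cdot\big>|)$ at $\I_k$; this is the role of "sufficiently small $\rho$."

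The main obstacle I anticipate is the first reduction, $1)\Longleftrightarrow 2)$: rigorously justifying that the local change in $L$ decouples across rows $k$ and that the square-root form of the per-atom penalty is the correct one requires carefully expanding $\dict^{-1}$ for $\dict$ near $\refdict$ (or near a general candidate $\dict$), keeping track of the unit-norm constraint on each atom, and showing all cross-terms between different rows and the higher-order terms in the atom perturbations are genuinely $o(\norm{\dict-\dict^0}_F)$ — this is where the sandwich inequality of Lemma \ref{lemma:3} and the bound $\norm{\dict}_2\le 2\norm{\refdict}_2$ control remainders. By contrast the convexity and perturbation-stability pieces ($2)\Longleftrightarrow 3)$, and the strong convexity of $g_k$) are routine once Assumption I is invoked. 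I would therefore structure the proof as: (i) a lemma giving the exact per-row decomposition of the local variation of $L$ with $o(\cdot)$ remainder; (ii) deduce $1)\Longleftrightarrow 2)$; (iii) a convex-analytic lemma on subdifferential-interior characterization of sharp minima of $g_k$; (iv) deduce $2)\Longleftrightarrow 3)$ via continuity of the gradient-of-smooth-part in $\tilde M$.
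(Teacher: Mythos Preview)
Your proposal is correct, but it takes a longer and more self-contained route than the paper does. For $1)\Leftrightarrow 2)$, the paper does not redo any first-order expansion of $L$: it simply invokes Theorem~\ref{Thm:local}, which already packages the sharp-local-minimum property as the single dual-norm inequality $\norm{B(\bm\beta,M)}_{\coef}^*<1$. Since both $\norm{\cdot}_{\coef}$ and $B$ are defined row-by-row, this inequality unravels immediately into $K$ separate strict inequalities, one per $k$, and the paper identifies each of these with the directional derivative of $g_k$ at $\I_k$ along an arbitrary feasible direction. Your plan to establish a per-row decomposition of $L$ directly via Lemma~\ref{lemma:3} would also work, but it repeats the analysis already encapsulated in Theorem~\ref{Thm:local}; the paper's route is shorter precisely because that theorem is available. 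For $2)\Leftrightarrow 3)$, the paper's argument is the same as yours in substance---strict positivity of every directional derivative is equivalent to non-negative directional derivatives persisting under all sufficiently small perturbations of $M$---but it is stated at the level of the directional-derivative inequalities themselves rather than in subdifferential language. One small correction: the strong convexity of $g_k$ on $\{w_k=1\}$ comes from the curvature of the square-root terms (requiring $|M_{k,h}|<1$), not from Assumption~I; Assumption~I is what makes the non-smooth part $\E|\langle\coef,\bm w\rangle|\1(\coef_k=0)$ genuinely kinked in every direction, which is the source of \emph{sharpness}, not of strong convexity.
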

%
Proposition \ref{prop:sharp_is_stable} tells us that, in order to check whether a dictionary is a sharp local minimum, it is sufficient to add a perturbation to the matrix $M = \dict^T\dict$ and check whether the resulting dictionary is the local minimum of the perturbed objective function. Empirically, we can take a small enough $\rho$ and minimize the objective \eqref{Eq:perturbed_version}. If $\I_k$, the $k$-th column vector of the identity matrix, is the local minimum for the perturbed objective, by Proposition \ref{prop:sharp_is_stable} the given dictionary is guaranteed to be a sharp local minimum. We formalize this idea into Algorithm \ref{Alg:test}. We acknowledge that this algorithm might be conservative and classify a sharp local minimum as a non-sharp local minimum if $\rho$ is not small enough as required in Lemma \ref{prop:sharp_is_stable}. There is no good rule-of-thumb in choosing $\rho$ as it has to depend on the specific data. We explore the sensitivity of this algorithm with respect to choice of $\rho$ in the simulation section.

\begin{algorithm}[ht]
\caption{Sharp local minimum test for $\ell_1$-minimization dictionary learning}\label{Alg:test}
\begin{algorithmic}
\REQUIRE Dictionary to be tested $\dict$, samples $\signal^{(1)}, \ldots, \signal^{(n)}$, perturbation level $\rho\in\R^+$, threshold $T\in \R^+$.
\FOR{$i = 1,\ldots, \samplesize$}
\STATE $\bm \beta^{(i)} \gets \inv{\dict}\signal^{(i)}$.
\ENDFOR
\FOR{$j=1,\ldots, \dim$}
\STATE Generate $\epsilon_j \sim \N(0, \rho \cdot \I_{\dim\times\dim})$.
\STATE $\tilde{\dict}_j = \dict_j + \epsilon_j$.
\ENDFOR
\FOR{$k, h = 1, \ldots, \dim$}
\STATE $\tilde{M}_{k, h} \gets \big<\tilde{\dict}_{h},\tilde{\dict}_j\big>$ if $k \neq h$ or 0.
\ENDFOR
\STATE $r \gets 0$
\FOR{$k=1,\ldots, \dim$}
\STATE Solve the strongly convex optimization via BFGS:
\begin{align}\label{Eq:houlai}
\bm w^{(k)}\gets \mathrm{minimize}_{\bm w}~ & \sum_{i = 1}^n |\big<\bm \beta^{(i)}, \bm w\big>| + \sum_{h=1,h\neq k}^{\dim} \sqrt{(w_{h} - \tilde{M}_{h})^2 + 1 - \tilde{m}_{h}^2}\cdot \sum_{i=1}^n |\bm \beta^{(i)}_h|.\\
\mathrm{subject~to~}& \bm w = [w_1,\ldots, w_\dim]\in \R^\dim, \quad w_k = 1.
\end{align}
\STATE $\I_k\gets (0,\ldots, 0, 1, 0, \ldots, 0)$ where only the $k$-th element is 1.
\STATE $r\gets \max(r, \|\bm w^{(k)} - \I_k\|_2^2)$.
\ENDFOR
\IF{$r < T$}
\STATE Output $\dict$ is a sharp local minimum.
\ELSE
\STATE Output $\dict$ is not a sharp local minimum.
\ENDIF
\end{algorithmic}
\end{algorithm}



\vspace*{5mm}

The main component of Algorithm \ref{Alg:test} is solving the strongly convex optimization \eqref{Eq:houlai}. To do so we use Broyden–Fletcher–Goldfarb–Shanno (BFGS) algorithm \citep{Witzgall1989}, which is a second order method that estimates Hessian matrices using past gradient information. Each step of BFGS is of complexity $O(n\dim + \dim^2)$. If we assume the maximum iteration to be a constant, the overall complexity of Algorithm 1 is $O(n\dim^2 + \dim^3)$. Because sample size $n$ is usually larger than the dimension $\dim$, the dominant term in the complexity is $O(n\dim^2)$. In the simulation section, we show that the empirical computation time is in line with the theoretical bound. 




\paragraph{Recovering the reference dictionary.}
We now try to solve formulation \eqref{Eq:main_finitesample}. One of the most commonly used technique in solving dictionary learning is alternating minimization \citep{Olshausen1997,Mairal2009a}, which is to update the coefficients and then the dictionary in a alternating fashion until convergence. This method fails for noiseless $\ell_1$-minimization: when the coefficients are fixed, the dictionary must also be fixed to satisfy all constraints. To allow dictionaries to be updated iteratively, researchers have proposed different ways to relax the constraints \citep{Agarwal2013a, Mairal2014SPAMSV2.5}. However, those workarounds tend to have numerical stability issues if a high precision result is desired \citep{Mairal2014SPAMSV2.5}. This motivates us to propose Algorithm \ref{Alg:bcd}. The algorithm uses the idea from Block Coordinate Descent (BCD). It updates each row of $\inv{\dict}$ and the corresponding row in the coefficient matrix simultaneously. As we update one row of $\inv{\dict}$, we also scale all the other rows of $\inv{\dict}$ by appropriate constants. This is because if we only update one row of $\inv{\dict}$ while keeping the others fixed, columns of the resulting dictionary will not have unit norm. The following lemma gives an admissible parameterization for updating one row of $\inv{\dict}$.

\begin{proposition}\label{prop:param} For any dictionary $\dict\in \BASIS{\R^\dim}$ and any coordinate $k\in 1,\ldots, \dim$, given a vector $w = [w_1,\ldots, w_\dim]\in \R^\dim$ such that $w_k = 1$, we can define a matrix $Q\in \R^{\dim\times \dim}$: 
\begin{align*}
Q[k, ] = & 
\left\{
\begin{array}{ll}
w^T\inv{\dict} & h = k\\
\sqrt{(w_h - M_{k,h})^2 + 1 - M_{k,h}^2}\cdot\inv{\dict}[h, ] &  h \neq k
\end{array}
\right. .
\end{align*}
Then $\inv{Q}\in \BASIS{\R^\dim}$, which means each column of $\inv{Q}$ is of norm 1.
\end{proposition}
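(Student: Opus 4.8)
The idea is to recognize $Q$ as $R\,\inv{\dict}$ for an explicitly invertible ``near-diagonal'' matrix $R$, invert the factorization, and reduce the unit-column requirement to a one-line identity in the entries of $M$. Concretely, set $c_h \triangleq \sqrt{(w_h - M_{k,h})^2 + 1 - M_{k,h}^2}$ for $h\neq k$ and let $R\in\R^{\dim\times\dim}$ have $k$-th row $w^T$ and $h$-th row $c_h\,\I_h^T$ for $h\neq k$. Reading off rows shows $Q = R\,\inv{\dict}$, since the $k$-th row of $R\,\inv{\dict}$ is $w^T\inv{\dict}$ and the $h$-th row is $c_h\,\inv{\dict}[h,]$, matching the definition of $Q$.

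\textbf{Invertibility.} Since the columns of $\dict$ are unit vectors, $|M_{k,h}|\le 1$, with equality only if $\dict_k = \pm\dict_h$, which is excluded by $\mathrm{rank}(\dict)=\dim$; hence $1 - M_{k,h}^2 > 0$ and $c_h > 0$ for every $h\neq k$. The $k$-th column of $R$ equals $\I_k$ (its off-diagonal entries vanish because $h\neq k$ and its $(k,k)$ entry is $w_k=1$), so $\det R = \prod_{h\neq k} c_h \neq 0$. Therefore $Q$ is invertible with $\inv{Q} = \dict\,\inv{R}$, which in particular has rank $\dim$.

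\textbf{Inverting $R$ and checking unit columns.} Solving $R\,\inv{R}=\I$ column by column, the rows $h\neq k$ give $(\inv{R})_{h,j}=\delta_{h,j}/c_h$, and then $w^T(\inv{R})_j=\delta_{k,j}$ with $w_k=1$ pins down the remaining entry; one obtains $(\inv{R})_k = \I_k$ and $(\inv{R})_j = \frac{1}{c_j}\bigl(\I_j - w_j\,\I_k\bigr)$ for $j\neq k$. The $j$-th column of $\inv{Q}=\dict\,\inv{R}$ is $\dict\,(\inv{R})_j$, so $\|(\inv{Q})_j\|_2^2 = (\inv{R})_j^T\dict^T\dict\,(\inv{R})_j = (\inv{R})_j^T M\,(\inv{R})_j$. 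For $j=k$ this is $M_{k,k}=1$; for $j\neq k$ it equals $\frac{1}{c_j^2}\bigl(M_{j,j}-2w_j M_{k,j}+w_j^2 M_{k,k}\bigr)=\frac{w_j^2-2w_j M_{k,j}+1}{c_j^2}$, whose numerator is precisely $c_j^2$ by definition, giving $1$. Together with the rank statement above this yields $\inv{Q}\in\BASIS{\R^\dim}$.

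\textbf{Main obstacle.} There is no genuinely hard step here; the only point needing care is the strict inequality $|M_{k,h}|<1$, which is where full rank of $\dict$ (rather than merely unit-norm columns) is used and which guarantees $c_h\neq 0$, hence the invertibility of $R$ and of $Q$. Everything else is the bookkeeping of the factorization $Q=R\inv{\dict}$ and the two short computations of $\inv{R}$ and of $(\inv{R})_j^T M (\inv{R})_j$.
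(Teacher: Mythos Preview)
Your proof is correct and follows essentially the same route as the paper's: both write $Q=R\,\inv{\dict}$ (the paper calls the factor $\Gamma$), invert to get $\inv{Q}=\dict\,\inv{R}$, compute $\inv{R}$ explicitly, and verify the column norms via the identity $(w_j-M_{k,j})^2+1-M_{k,j}^2=w_j^2-2w_jM_{k,j}+1$. Your version is slightly more careful in that you justify invertibility of $R$ through $|M_{k,h}|<1$, a point the paper leaves implicit.
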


With the parameterization in Proposition \ref{prop:param}, we derive the following subproblems from $\ell_1$-minimization dictionary learning: for $k=1,...,K$,
\begin{align*}
\mathrm{argmin}_{\bm w}~ & \sum_{i=1}^n |\big<\beta^{(i)}, \bm w\big>| + \sum_{h=1,h\neq k}^\dim \sqrt{(\bm w_{h} - M_{k, h})^2 + 1 - M_{k, h}^2}\cdot\sum |\beta_h^{(i)}|.\\ \nonumber
\mathrm{subject~to~}& \bm w = [w_1,\ldots, w_\dim]\in \R^\dim, w_k = 1.
\end{align*}
where $\bm \beta^{(i)} = \inv{\dict}\signal^{(i)}$ for a dictionary $\dict$. This new sub-problem is strongly convex, making it relatively easy to solve. We obtain Algorithm \ref{Alg:bcd} by solving this optimization iteratively for each coordinate $k$. Note that the idea of learning a dictionary from solving a series of convex programs has been studied in other papers. \cite{Spielman2013} reformulates the dictionary learning problem as a series of linear programmings (LP) and construct a dictionary from the LP solutions. Nonetheless, their algorithm is not guaranteed to minimize the $\ell_1$ objective at each iteration. 


In our simulation, when the signal-to-noise ratio is high, $\ell_1$-minimization sometimes ends up with a low quality result. This is commonly due to the fact that  the $\ell_1$-norm over-penalizes large coefficients, which breaks the local identifiability, i.e., the reference dictionary is no longer a local minimum. To further enhance the performance of $\ell_1$-minimization, we use ideas similar to re-weighted $\ell_1$ algorithms in the field of compressed sensing \citep{Candes2008}. The motivation of re-weighted algorithms is to reduce the bias of $\ell_1$-minimization by imposing smaller penalty to large coefficients. In our algorithm, we simply truncate coefficient entries beyond a given threshold $\tau$. The obtained problem is still strongly convex but this trick improves the numerical performance significantly.


\begin{algorithm}[ht]
\caption{Dictionary Learning Block Coordinate Descent (DL-BCD)}\label{Alg:bcd}
\begin{algorithmic}
\REQUIRE Data $\signal^{(1)},\ldots, \signal^{(n)}$, threshold $\tau$.
\STATE Initialize $\dict^{(0,1)}$, $t \gets 0$. $Q\gets \inv{(\dict^{(0,1)})}$.
\WHILE{Stopping criterion not satisfied}
\FOR{$j=1,\ldots, \dim$}
\FOR{$i = 1,\ldots, \samplesize$}
\STATE $\bm \beta^{(i)} \gets Q \signal^{(i)}$.
\ENDFOR
\FOR{$h = 1, \ldots, \dim$}
\STATE $m_{h} \gets \big<\dict^{(t, j)}_{h},\dict^{(t, j)}_j\big>$.
\ENDFOR
\STATE Solve the convex optimization via BFGS:
\begin{align*}
\mathrm{minimize}_{\bm w}~ & \sum_{\substack{i=1..\samplesize,\\|\beta^{(i)}_j|<\tau}} |\big<\bm \beta^{(i)}, \bm w\big>| + \sum_{h=1,h\neq j}^\dim \sqrt{(w_{h} - m_{h})^2 + 1 - m_{h}^2}\cdot\sum_{\substack{i=1..\samplesize,\\|\beta^{(i)}_h|<\tau}} |\beta^{(i)}_h|.\\
\mathrm{subject~to~}& \bm w = [w_1,\ldots, w_\dim]\in \R^\dim, ~ w_j = 1.
\end{align*}
\ENDFOR
\STATE Update $j$-th row of $Q$: $Q[j,] \gets Q\bm w$.
\FOR{$h = 1, \ldots, \dim$, $h\neq j$}
\STATE $Q[h,] \gets Q[h, ]\cdot \sqrt{(w_{h} - m_{h})^2 + 1 - m_{h}^2}$.
\ENDFOR
\IF{$j = d$}
\STATE $\dict^{(t+1, 1)}\gets \inv{Q}$.
\ELSE
\STATE $\dict^{(t, j+1)}\gets \inv{Q}$.
\ENDIF
\STATE $t \gets t + 1$.
\ENDWHILE
\end{algorithmic}
\end{algorithm}

The following theorem guarantees that the proposed algorithm always decreases the objective function value. 


\begin{proposition}[Monotonicity]\label{prop:monotone}
Define 
\[f(\dict) = \sum_{i=1}^n \sum_{j=1}^\dim \min\left(\Big|\dict^{-1}[j,] \signal^{(i)} \Big|, \tau\right),\]
where $\tau$ is the threshold used in Algorithm \ref{Alg:bcd}.
Denote by $\dict^{(t,\dim)}$ the dictionary at the $t$-th iteration from Algorithm \ref{Alg:bcd}. $f(\dict^{(t, \dim)})$ decreases monotonically for $t\in \mathbb N$: $f(\dict^{(0,\dim)}) \geq f(\dict^{(1,\dim)}) \geq f(\dict^{(2,\dim)})\ldots$
\end{proposition}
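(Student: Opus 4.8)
\textbf{Proof proposal for Proposition \ref{prop:monotone}.}

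The plan is to show that each inner iteration of Algorithm \ref{Alg:bcd} — which updates the $j$-th row of $Q = \dict^{-1}$ together with the rescaling of all other rows — does not increase the truncated objective $f$. Since one full outer iteration ($j = 1, \ldots, d$) is a composition of $d$ such steps, monotonicity over $t \in \mathbb N$ follows immediately by chaining the inequalities. So the whole proof reduces to analyzing a single block update.

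First I would fix an iterate $\dict$ (playing the role of $\dict^{(t,j)}$) with $Q = \dict^{-1}$, fix the coordinate $j$, and set $\beta^{(i)} = Q\signal^{(i)}$ and $m_h = \langle \dict_h, \dict_j\rangle$ for all $h$. By Proposition \ref{prop:param}, for any $w$ with $w_j = 1$ the matrix $Q'$ defined by $Q'[j,] = w^T Q$ and $Q'[h,] = \sqrt{(w_h - m_h)^2 + 1 - m_h^2}\cdot Q[h,]$ for $h\neq j$ satisfies $(Q')^{-1} \in \BASIS{\R^\dim}$, so the new dictionary $\dict' = (Q')^{-1}$ is feasible — this is exactly why the admissible parameterization was introduced, and it is what lets the algorithm stay on the constraint manifold. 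The second step is to write $f(\dict')$ in terms of the new rows of $Q'$: since $\dict'^{-1}[j,]\signal^{(i)} = w^T Q\signal^{(i)} = \langle \beta^{(i)}, w\rangle$ and $\dict'^{-1}[h,]\signal^{(i)} = \sqrt{(w_h - m_h)^2 + 1 - m_h^2}\cdot \beta^{(i)}_h$ for $h\neq j$, we get
\[
f(\dict') = \sum_{i=1}^n \min\big(|\langle \beta^{(i)}, w\rangle|, \tau\big) + \sum_{h\neq j}\sqrt{(w_h - m_h)^2 + 1 - m_h^2}\sum_{i=1}^n \min\big(\sqrt{(w_h - m_h)^2 + 1 - m_h^2}\,|\beta^{(i)}_h|,\ \tau\big).
\]
Here the one subtlety is that the truncation threshold $\tau$ interacts with the rescaling factor in the $h\neq j$ terms; I would handle this by noting that the algorithm's convex surrogate uses the \emph{previous}-iterate truncation mask $\{i : |\beta^{(i)}_h| < \tau\}$, and one must check that replacing the true truncated sum by this surrogate sum still gives a valid upper bound — i.e. $\sum_i \min(c\,|\beta^{(i)}_h|, \tau) \le c\sum_{i: |\beta^{(i)}_h|<\tau}|\beta^{(i)}_h| + (\text{const independent of } w)$ for the relevant scaling $c$, or more cleanly, that the surrogate and $f$ agree up to terms not depending on the optimization variable $w$ at the current iterate and that the surrogate dominates $f$ elsewhere.

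The third step is the heart of the monotonicity argument: the current iterate corresponds to $w = \I_j$ (take $w_j = 1$, $w_h = 0$ for $h\neq j$), for which $\sqrt{(w_h - m_h)^2 + 1 - m_h^2} = \sqrt{m_h^2 + 1 - m_h^2} = 1$, so the rescaling factors are all $1$, $Q' = Q$, and the surrogate objective evaluated at $w = \I_j$ equals (up to the $w$-independent constant) $f(\dict)$. Since the algorithm solves the convex subproblem to global optimality, the surrogate value at the returned $w$ is no larger than at $\I_j$; combining with the fact that the surrogate upper-bounds $f(\dict')$ and is tight at $\I_j$ gives $f(\dict') \le f(\dict)$. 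Iterating over $j = 1,\ldots, d$ and then over $t$ yields the claim.

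The main obstacle I anticipate is precisely the bookkeeping around the truncation in step two: the convex subproblem does not minimize $f$ itself but a surrogate in which (a) the $\min(\cdot,\tau)$ is replaced by dropping terms whose \emph{current} coefficient exceeds $\tau$, and (b) inside the $h\neq j$ sum the truncation is applied to $|\beta^{(i)}_h|$ rather than to the rescaled quantity. One must verify that this surrogate is genuinely a majorant of $f(\dict')$ that coincides with $f(\dict)$ at the current point — a majorization–minimization structure — rather than something that only bounds $f$ from one side in a way that breaks the telescoping. Once that majorant property is nailed down, the rest is a routine composition argument. A secondary, minor point is checking that $\|\langle\beta^{(i)},w\rangle\|$ with the truncation on $|\beta^{(i)}_j|$ (not on $|\langle\beta^{(i)},w\rangle|$) is likewise handled consistently; I expect this to follow from the same majorization observation since at $w = \I_j$ one has $\langle\beta^{(i)},\I_j\rangle = \beta^{(i)}_j$.
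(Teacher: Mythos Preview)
Your approach is essentially the paper's: it is a majorization--minimization argument carried out block by block. The paper executes it more directly by defining the surrogate
\[
\tilde f(\dict) \;=\; \sum_{j=1}^{\dim}\Bigl(\sum_{i:\,|\beta^{(i)}_j|\le \tau}\bigl|\dict^{-1}[j,]\signal^{(i)}\bigr|\;+\;\sum_{i:\,|\beta^{(i)}_j|>\tau}\tau\Bigr),
\]
with the truncation mask fixed by the \emph{current} iterate $\beta^{(i)}=\inv{(\dict^{(t,j)})}\signal^{(i)}$, and then observing three one-line facts: (i) $\tilde f(\dict)\ge f(\dict)$ for every $\dict$, because $\min(|y|,\tau)\le |y|$ and $\min(|y|,\tau)\le \tau$; (ii) $\tilde f(\dict^{(t,j)})=f(\dict^{(t,j)})$ by construction of the mask; and (iii) over the parameterization of Proposition~\ref{prop:param}, $\tilde f$ agrees with the algorithm's convex subproblem up to a constant independent of $w$, so the BFGS step minimizes $\tilde f$ and hence $\tilde f(\dict^{(t,j+1)})\le \tilde f(\dict^{(t,j)})$. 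Chaining (i)--(iii) gives $f(\dict^{(t,j+1)})\le \tilde f(\dict^{(t,j+1)})\le \tilde f(\dict^{(t,j)})=f(\dict^{(t,j)})$.

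Two small remarks on your write-up. First, your displayed expression for $f(\dict')$ has a stray scaling factor in the $h\ne j$ terms; the correct contribution is $\sum_{i}\min\bigl(\sqrt{(w_h-m_h)^2+1-m_h^2}\,|\beta^{(i)}_h|,\ \tau\bigr)$, without an additional $\sqrt{\cdot}$ out front. Second, the ``main obstacle'' you anticipate is not an obstacle at all once you define $\tilde f$ as above: the majorant property follows trivially from $\min(a,\tau)\le a$ and $\min(a,\tau)\le\tau$, and tightness at the current iterate is immediate since there $\dict^{-1}[j,]\signal^{(i)}=\beta^{(i)}_j$.
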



\section{Numerical experiments}\label{Sec:5}

In this section, we evaluate the proposed algorithms with numerical simulations. The code of DL-BCD can be found in the github repository\footnote{https://github.com/shifwang/dl-bcd}. We will study the empirical running time of Algorithm \ref{Alg:test} in the first experiment and examine how the perturbation parameter $\rho$ affects its performance in the second. In the third experiment, we study the sample size requirement for successful recovery of the reference dictionary. In the fourth experiment, we generate the reference dictionary from Gaussian distribution and linear coefficients from sparse Gaussian. We then compare algorithm \ref{Alg:bcd} against other state-of-the-art dictionary learning algorithms \citep{Parker2014,Parker2014a,Parker2016}. 
The first two simulations are not computationally intensive and are carried out on an OpenSuSE OS with Intel(R) Core(TM) i5-5200U CPU 2.20GHz with 12GB memory, while the last two simulations are conducted in a cluster with ~20 cores. 


\subsection{Empirical running time of Algorithm \ref{Alg:test}}\label{Sec:5.1}
We evaluate the empirical computation complexity of Algorithm \ref{Alg:test}. Let the reference dictionary be a constant collinearity dictionary with coherence $\mu = 0.5$, i.e.,
\[
\refdict = (\mathbb{I} + 0.5 \mathit{1}\mathit{1}^T)^{1/2},
\]
The sparse linear coefficients are generated from the Bernoulli Gaussian distribution $BG(p)$ with $p = 0.7$. This specific parameter setting ensures that the reference dictionary is not a local minimum, thus making Algorithm \ref{Alg:test} converge slower. For a fixed dimension, the computation time scales roughly linearly with the sample size, while for fixed sample size, the computation time scales quadratically with dimension $\dim$ (Fig. \ref{Fig:sim1}). That shows the empirical computation complexity of Algorithm \ref{Alg:test} is of order $O(n\dim^2)$, which is consistent with the theoretical complexity. Simulation results remain stable for different parameter settings, see \nameref{Sec:Appendix_B}.

\begin{figure}[ht]
\centering
\includegraphics[scale=.5]{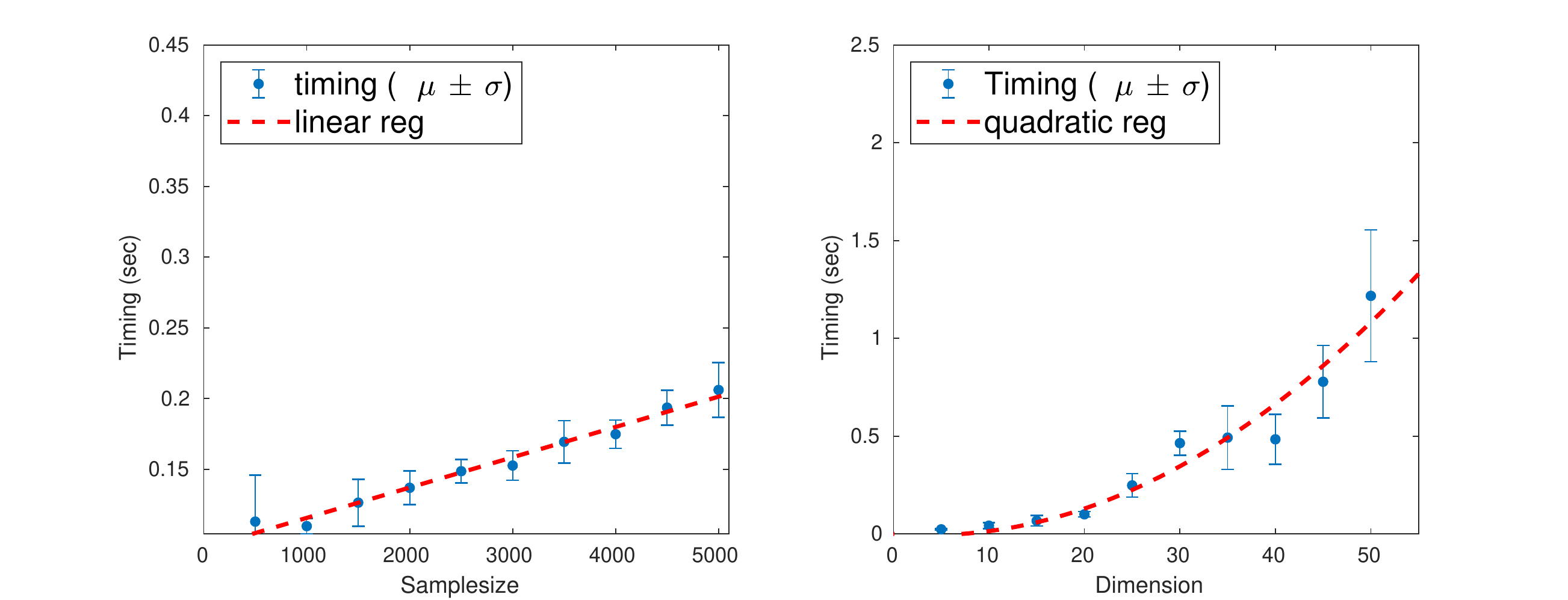}
\caption{Computation time of Algorithm 1. (Left) For $\dim = 20$ and $n=500,\ldots, 5000$. (Right) For $\dim = 5,\ldots,50$ and $n = 400$.}\label{Fig:sim1}
\end{figure}

\subsection{Sensitivity analysis of the perturbation parameter {$\rho$}}\label{Sec:5.2}
In this experiment, we test the sensitivity of Algorithm \ref{Alg:test} by varying the perturbation parameter $\rho$. We set dictionary dimension $\dim = 20$, sparsity parameter $s = 10$ and sample size $n = 1600$. Also, we consider constant collinearity dictionaries with coherence $\mu = \frac{1}{\sqrt{s}}(\frac{\dim - s}{\dim - 1} + 0.1)$ (Fig. \ref{Fig:rho_sensitivity} Left) and $\mu = \frac{1}{\sqrt{s}}(\frac{\dim - s}{\dim - 1} - 0.2)$ (Fig. \ref{Fig:rho_sensitivity} Right). For the first experiment, the reference dictionary is not a sharp local minimum of the objective function given large enough samples. Hence a small perturbation to  the dictionary will result in a large distance $r$ defined in Algorithm \ref{Alg:test}. In the second experiment, the reference dictionary is sharp, indicating the distance $r$ in Algorithm \ref{Alg:test} should be small with respect to perturbation. For each value of $\rho$ between 0.05 and 0.5, we repeat the algorithm 20 times to compute the resulting distances. When $\rho$ is small, the distance $r$ for the non-sharp case is very big (around 1.0) whereas for the sharp case it remains small (around ~$10^{-12}$). For the sharp case, once $\rho$ increases beyond $0.35$, $r$ increases dramatically to ~$10^{-3}$. This experiment shows for a wide range of parameter $\rho$ values ($0.05$ to $0.3$), Algorithm \ref{Alg:test} succeeds in distinguishing between the sharp and not-sharp local minima. Nonetheless, there are two caveats when using this algorithm. Firstly, the parameter $\rho$ depends on the data generation process, which is not known in practice. Thus, it is still an open question about how to select $\rho$. Secondly, this algorithm is only useful when the noise is very small. When the noise is high, the reference dictionary is no longer a sharp local minimum. In that case, instead of checking the sharpness, an alternative would be to check the smallest eigen-value of the Hessian matrix. This idea is not fully explored in this paper and will be studied in future work.
\begin{figure}[ht]
\centering
\includegraphics[width=.8\textwidth]{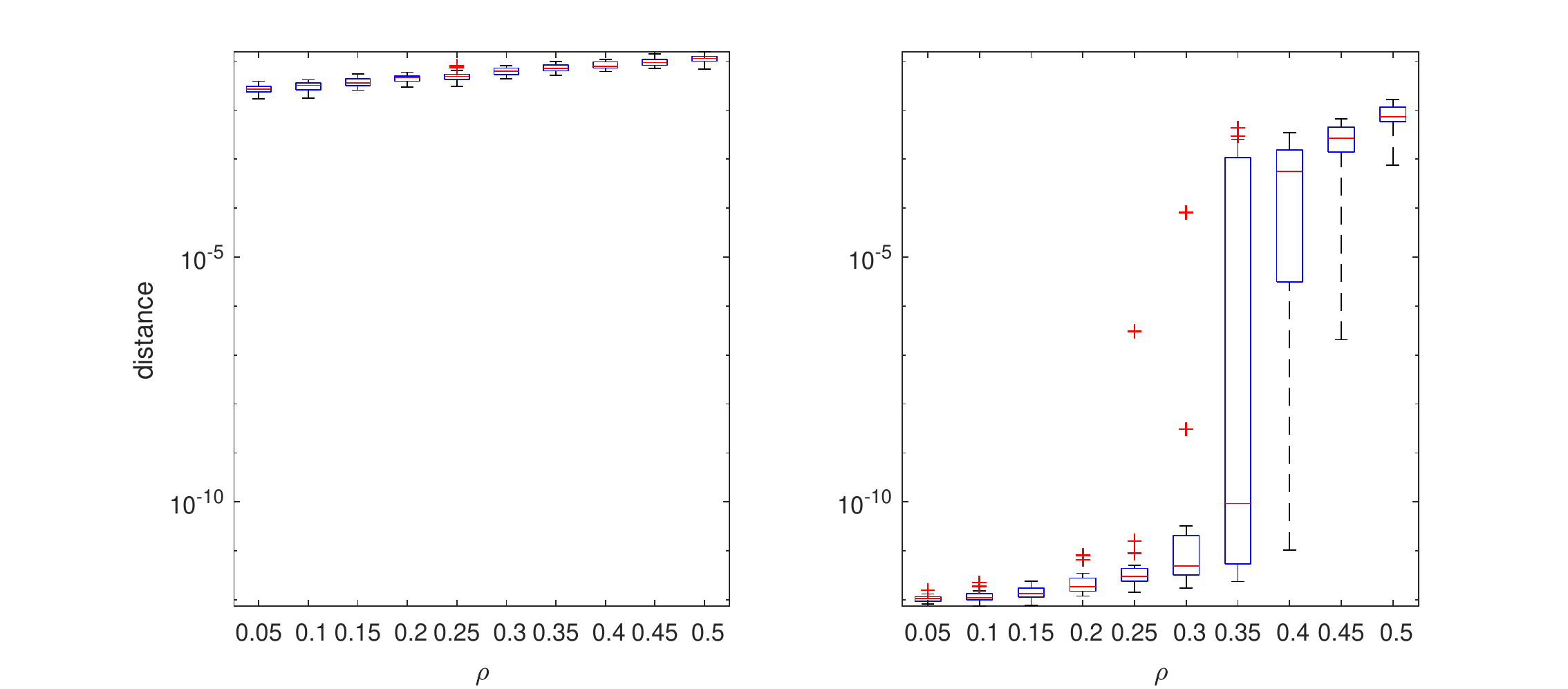}
\caption{Sensitivity analysis of perturbation parameter $\rho$ in Algorithm \ref{Alg:test}. Left: constant collinearity dictionary with coherence $\mu = \frac{1}{\sqrt{s}}(\frac{\dim - s}{\dim - 1} + 0.1)$; Right: constant collinearity dictionary with coherence $\mu = \frac{1}{\sqrt{s}}(\frac{\dim - s}{\dim - 1} - 0.2)$.}\label{Fig:rho_sensitivity}
\end{figure}

\subsection{Empirical sample size requirement for local identifiability}\label{Sec:5.3}
In our analysis, we show that if the sample size $n$ is of order $O(\dim\ln\dim)$, local identifiability will hold with high probability. However, we do not know the corresponding constants that ensure local identifiability. 
In this section, we will study the empirical sample size with the help of Algorithm 1. 

Suppose the reference dictionary has constant coherence $\mu = 0.5$ for various sizes $K=12,16,20$ and the coefficients are drawn from Sparse Gaussian distribution with sparsity $s = 5$. This specific parameter setting ensures the reference dictionary is a sharp local minimum given enough samples. Perturbation level is set at $\rho = 0.01$ and the threshold $T = 10^{-6}$. The experiment is repeated 20 times. The percentage that Algorithm 1 identifies $\refdict$ as a sharp local minimum for different sample size $n$ is shown in Fig. \ref{Fig:sim2}. 

\begin{figure}[h!]
\centering
\includegraphics[width=.8\textwidth]{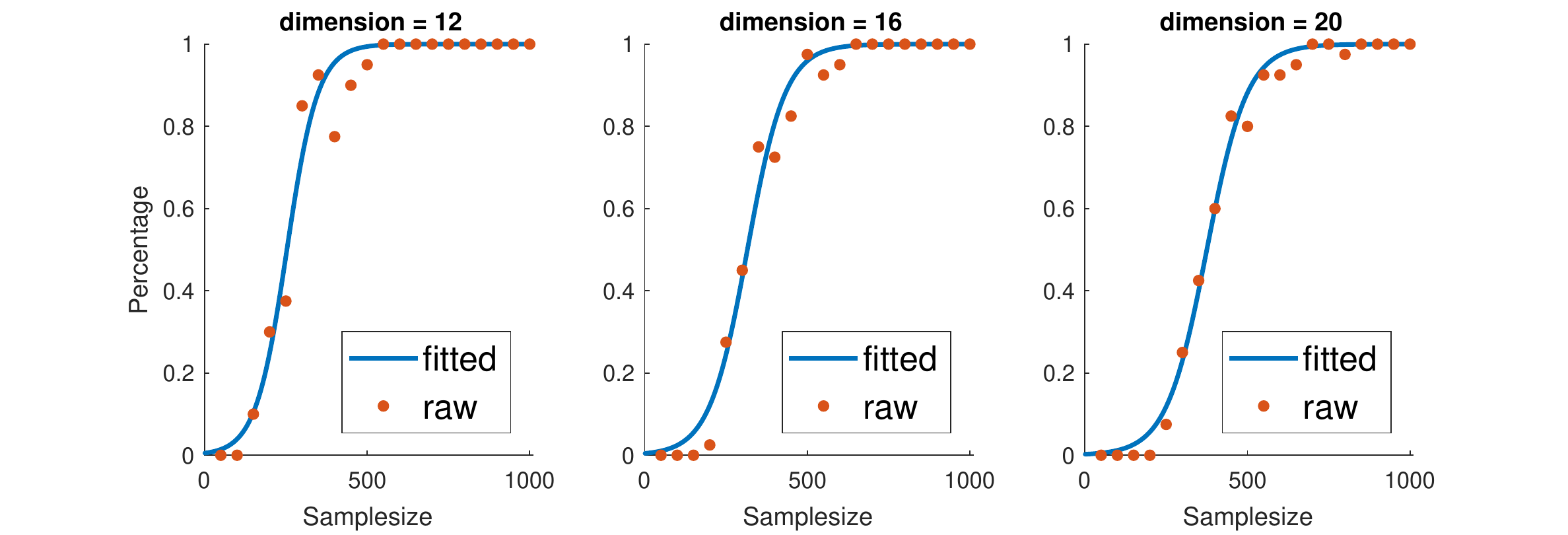}
\caption{The percentage of experiments in which the reference dictionary is a local minimum, for different dimensions $\dim = 12, 16, 20$ and different sample sizes. The fitted line is obtained using a logistic regression. The sample size ensuring 50\% chance is 253, 316, 375 respectively for $\dim = 12, 16, 20$, which is roughly $20\dim$.}\label{Fig:sim2}
\end{figure}
To further explore the required sample size for different dimensions $\dim$, we run simulations for $\dim = 25,...,70$ and estimate the sample complexity that achieves local identifiability with at least 50\% chance, i.e., minimum sample size with percentage $\geq50\%$ in Fig. \ref{Fig:sim2}. As shown in  Fig. \ref{Fig:samplesize_d}, sample complexity and dimension closely follow  a linear relation $16.5\dim + 63$. Note that it is smaller than the $O(\dim \ln \dim)$ because it ensures local identifiability with $50\%$ chance. 
\begin{figure}[!ht]
\centering
\includegraphics[width=.7\textwidth]{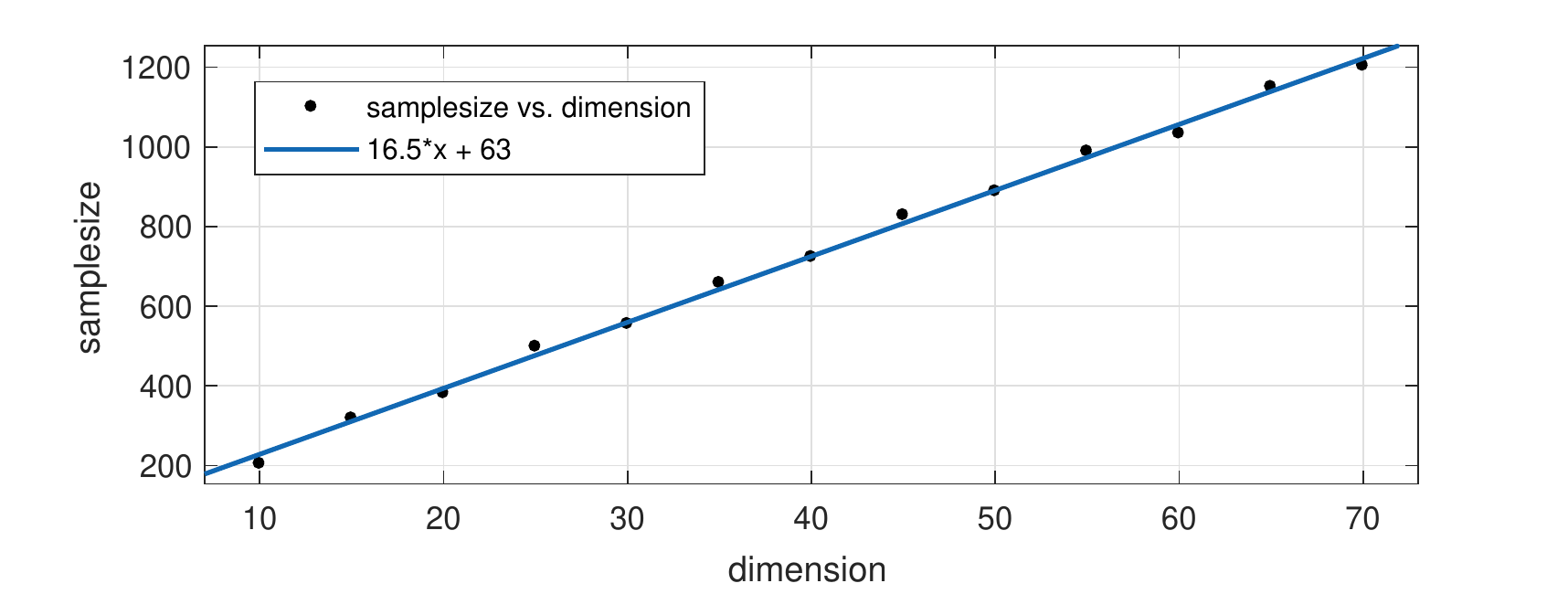}
\caption{The estimated sample size that achieves 50 percent chance to ensure local identifiability for different $\dim$ when the reference coefficient is generated from sparse Gaussian distribution and the reference dictionary has constant collinearity.}\label{Fig:samplesize_d}
\end{figure}
\subsection{Comparison with other algorithms}\label{Sec:5.4}
We compare the performance of DL-BCD with other state-of-the-art algorithms, including the greedy K-SVD algorithm \citep{Aharon2006}, SPAMS for online dictionary learning \citep{Mairal2009,Mairal2009a}, ER-SpUD(proj) for square dictionaries \citep{Spielman2013}, and EM-BiG-AMP algorithm \citep{Parker2014,Parker2014a}. The implementation of these algorithms is available in the MATLAB package BiG-AMP \citep{Parker2014,Parker2014a}.

First we will introduce the simulation setting. We generate $n = 100\dim$ samples using a noisy linear model:
\[
\signal^{(i)} = \refdict \coef^{(i)} + \epsilon^{(i)},\quad i = 1,\ldots, n.
\]
The reference dictionary $\refdict$, the reference coefficients $\coef^{(i)}$, and the noise $\epsilon^{(i)}$ are generated as follows.
\begin{itemize}
    \item Generation of $\refdict$: First, we randomly generate a random Gaussian matrix $X\in \R^{\dim\times\dim}$ where $X_{jk}\sim \N(0,1)$. We then normalize columns of $X$ to construct the columns of the reference dictionary $\refdict_j = X_j/\|X_j\|_2$ for $j = 1,\ldots, \dim$. 
    \item Generation of $\coef^{(i)}$: We generate the reference coefficient from sparse Gaussian distribution with sparsity $s$: $\coef^{(i)}\sim SG(s)$ for $i = 1,\ldots, n$. 
    \item Generation of $\epsilon^{(i)}$: We generate $\epsilon^{(i)}$ using a Gaussian distribution with mean zero. The variance of the distribution is set such that the signal-to--noise ratio is 100:
\[
\frac{\E \|\refdict\coef^{(1)}\|_2}{\E \|\epsilon^{(1)}\|_2} = 10^{2}.
\]
\end{itemize}
We choose the dimension $\dim$ between 2 and 20 and sparsity $s$ between 2 and $\dim$. For each $(s, \dim)$-pair, we repeat the experiment 100 times.
The accuracy of an estimated dictionary $\hat{\dict}$ is quantified using the relative normalized mean square error (NMSE):
\[
\mathrm{NMSE}(\hat{\dict}, \refdict) = \min_{J\in \mathcal{J}} \frac{\|\hat{\dict}J - \refdict\|_F^2}{\|\refdict\|_F^2},
\]
where $\mathcal{J} = \{\Gamma \cdot \Lambda \mid \Gamma \text{~is~a~permutation~matrix~and~}\Lambda\text{~is~a~diagonal~matrix~whose~diagonal~elements~are~}\pm 1.\}$ is a set introduced to resolve the permutation and scale ambiguities.
We say an algorithm has a successful recovery if the NMSE of $\hat{\dict}$ is smaller than the threshold 0.01. 
We compare different algorithms in terms of their recovery rate, defined as the proportion of simulations that an algorithm has a successful recovery.  

The algorithms being tested have several important parameters. For the purpose of comparison, we choose these parameters in a way such that they are consistent with other papers \citep{Parker2014, Parker2014a}. The details of parameter settings can be found in Appendix D. 

Figure \ref{Fig:phase_tran} shows the recovery rate for a variety of choices of dimension $\dim$ and sparsity $s$. For each algorithm, the blue region corresponds to $(s,\dim)$ configurations under which an algorithm has high recovery rate, whereas yellow region indicates low recovery rate. Our results demonstrate that  DL-BCD with $\tau = .5$ has the best recovery performance compared to other algorithms. Note that we select $\tau$ by hand but without much fine tuning. The algorithm EM-BiG-AMP has the second best performance. 

We also compare the algorithms in terms of their computation cost. We record the average computation times for $\dim = 20$ and $s = 10$ (Figure \ref{Fig:time}). It can be seen that the SPAMS package is the fastest. The speed of our DL-BCD is roughly the same as that of K-SVD. ER-SpUD is the slowest among all the algorithms. 

\begin{figure}[ht]
\centering
\includegraphics[width=.7\textwidth]{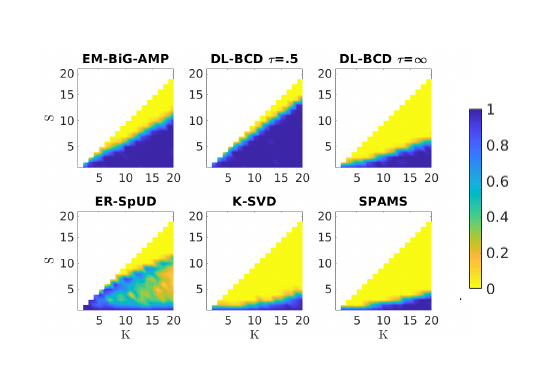}
\vspace{-10mm}
\caption{ Recovery rate of different algorithms for $\dim = 2,\ldots, 20$ and $s = 2,\ldots, \dim$.}\label{Fig:phase_tran}
\end{figure}


\begin{figure}[ht]
    \centering
    \includegraphics[scale=.5]{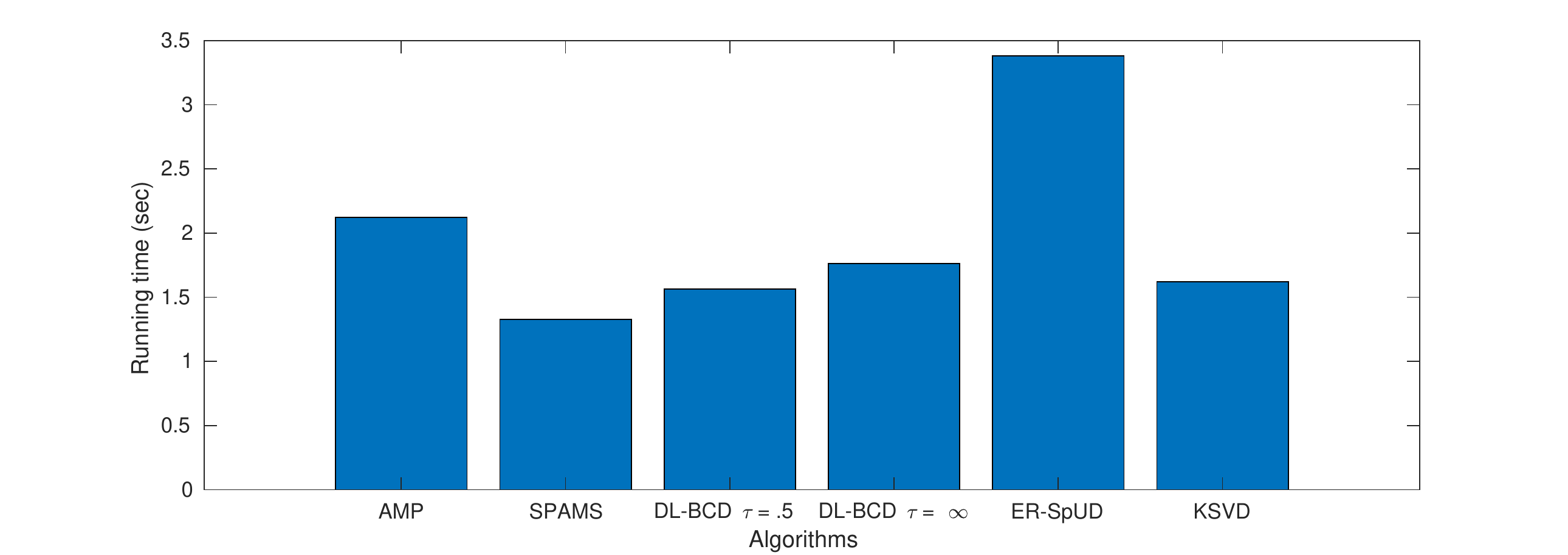}
    \caption{Average running time of different algorithms for $\dim=20$ and $s=10$.}
    \label{Fig:time}
\end{figure}

\section{Conclusions and future work}\label{Sec:6}
In this paper, we study the theoretical properties of $\ell_1$-minimization dictionary learning under complete reference dictionary and noiseless signal assumptions. First, we derive a sufficient and almost necessary condition of local identifiability {of $\ell_1$-minimization}. Our theorems not only extend previous local identifiability results to a much wider class of coefficient distributions, but also give an explicit bound on the region within which the objective value of the reference dictionary is minimal and characterize the sharpness of a local minimum. Secondly, we show that the reference dictionary is the unique sharp local minimum for $\ell_1$-minimization. Based on our theoretical results, we design an algorithm to check the sharpness of a local minimum numerically. Finally, We propose the DL-BCD algorithm and demonstrate its competitive performance over other state-of-the-art algorithms in noiseless complete dictionary learning. 
Future works include generalization of the results to the over-complete or noisy case and as well as other $\ell_1$ type objectives.

\section*{Acknowledgments}
The authors would like to thank Tanya Veeravalli, Simon Walter, Raaz Dwivedi, and Zihao Chen from University of California, Berkeley for their very helful comments of this paper that greatly improve its presentation. 
Partial supports are gratefully acknowledged from ARO grant
W911NF1710005, ONR grant N00014-16-1-2664, NSF grants
DMS-1613002 and IIS 1741340, and the Center for Science of
Information (CSoI), a US NSF Science and Technology Center,
under grant agreement CCF-0939370.
\bibliography{papers}

\begin{thebibliography}{46}
\providecommand{\natexlab}[1]{#1}
\providecommand{\url}[1]{\texttt{#1}}
\expandafter\ifx\csname urlstyle\endcsname\relax
  \providecommand{\doi}[1]{doi: #1}\else
  \providecommand{\doi}{doi: \begingroup \urlstyle{rm}\Url}\fi

\bibitem[Agarwal et~al.(2013)Agarwal, Anandkumar, and Netrapalli]{Agarwal2013}
Alekh Agarwal, Animashree Anandkumar, and Praneeth Netrapalli.
\newblock {A Clustering Approach to Learn Sparsely-Used Overcomplete
  Dictionaries}.
\newblock \emph{arXiv preprint}, arxiv:1309\penalty0 (1952):\penalty0 1--31,
  2013.
\newblock ISSN 0018-9448.
\newblock \doi{10.1109/TIT.2016.2614684}.
\newblock URL \url{http://arxiv.org/abs/1309.1952}.

\bibitem[Agarwal et~al.(2014)Agarwal, Anandkumar, Jain, and
  Netrapalli]{Agarwal2013a}
Alekh Agarwal, Animashree Anandkumar, Prateek Jain, and Praneeth Netrapalli.
\newblock {Learning Sparsely Used Overcomplete Dictionaries via Alternating
  Minimization}.
\newblock \emph{SIAM Journal on Optimization}, 26\penalty0 (4):\penalty0
  2775--2799, 2014.

\bibitem[Aharon et~al.(2005)Aharon, Elad, and Bruckstein]{Aharon2005}
Michal Aharon, Michael Elad, and Alfred~M Bruckstein.
\newblock {K-SVD and its Non-Negative Variant for Dictionary Design}.
\newblock \emph{International Society for Optics and Photonics}, 5914:\penalty0
  591411, 2005.
\newblock URL
  \url{http://www.ejercitodelaire.mde.es/EA/ejercitodelaire/es/aeronaves/avion/Airbus-A400M-T.23/}.

\bibitem[Aharon et~al.(2006)Aharon, Elad, and Bruckstein]{Aharon2006}
Michal Aharon, Michael Elad, and Alfred Bruckstein.
\newblock {K-SVD: An algorithm for designing overcomplete dictionaries for
  sparse representation}.
\newblock \emph{IEEE Transactions on Signal Processing}, 54\penalty0
  (11):\penalty0 4311--4322, 2006.
\newblock ISSN 1053587X.
\newblock \doi{10.1109/TSP.2006.881199}.

\bibitem[Arora et~al.(2014{\natexlab{a}})Arora, Bhaskara, Ge, and
  Ma]{Arora2014}
Sanjeev Arora, Aditya Bhaskara, Rong Ge, and Tengyu Ma.
\newblock {More algorithms for provable dictionary learning}.
\newblock \emph{arXiv preprint arXiv:1401.0579}, page~23, 2014{\natexlab{a}}.
\newblock URL \url{http://arxiv.org/abs/1401.0579}.

\bibitem[Arora et~al.(2014{\natexlab{b}})Arora, Ge, and Moitra]{Arora2014a}
Sanjeev Arora, Rong Ge, and Ankur Moitra.
\newblock {New algorithms for learning incoherent and overcomplete
  dictionaries}.
\newblock In \emph{Conference on Learning Theory}, pages 779--806,
  2014{\natexlab{b}}.
\newblock URL \url{https://arxiv.org/pdf/1308.6273.pdf}.

\bibitem[Arora et~al.(2015)Arora, Ge, Ma, and Moitra]{Arora2015}
Sanjeev Arora, Rong Ge, Tengyu Ma, and Ankur Moitra.
\newblock {Simple, Efficient, and Neural Algorithms for Sparse Coding}.
\newblock \emph{arXiv:1503.00778 [cs, stat]}, 2015.
\newblock ISSN 15337928.
\newblock URL
  \url{http://arxiv.org/abs/1503.00778{\%}5Cnhttp://www.arxiv.org/pdf/1503.00778.pdf}.

\bibitem[Barak et~al.(2014)Barak, Kelner, and Steurer]{Barak2014}
Boaz Barak, Jonathan~A Kelner, and David Steurer.
\newblock {Dictionary Learning and Tensor Decomposition via the Sum-of-Squares
  Method}.
\newblock \emph{Proceedings of the Forty-seventh Annual ACM Symposium on Theory
  of Computing}, pages 143--151, 2014.
\newblock ISSN 07378017.
\newblock \doi{10.1145/2746539.2746605}.
\newblock URL \url{http://arxiv.org/abs/1407.1543
  http://doi.acm.org/10.1145/2746539.2746605}.

\bibitem[Brunet et~al.(2004)Brunet, Tamayo, Golub, and Mesirov]{Brunet2004}
Jean-Philippe Brunet, Pablo Tamayo, Todd~R. Golub, and Jill~P. Mesirov.
\newblock {Metagenes and molecular pattern discovery using matrix
  factorization}.
\newblock \emph{Proceedings of the National Academy of Sciences}, 101\penalty0
  (12):\penalty0 4164--4169, 2004.
\newblock ISSN 0027-8424.
\newblock \doi{10.1073/pnas.0308531101}.
\newblock URL \url{http://www.ncbi.nlm.nih.gov/pubmed/15016911}.

\bibitem[Candes et~al.(2008)Candes, Wakin, and Boyd]{Candes2008}
Emmanuel~J. Candes, Michael~B. Wakin, and Stephen~P. Boyd.
\newblock {Enhancing sparsity by reweighted L1 minimization}.
\newblock \emph{Journal of Fourier analysis and applications}, 14\penalty0
  (5):\penalty0 877--905, 2008.

\bibitem[Chatterji and Bartlett(2017)]{Chatterji2017}
Niladri~S. Chatterji and Peter~L. Bartlett.
\newblock {Alternating minimization for dictionary learning with random
  initialization}.
\newblock In \emph{Advances in Neural Information Processing Systems}, pages
  1994--2003, 2017.
\newblock URL \url{http://arxiv.org/abs/1711.03634}.

\bibitem[Dhara and Dutta(2011)]{Dhara2011}
Anulekha Dhara and Joydeep Dutta.
\newblock \emph{{Optimality conditions in convex optimization: a
  finite-dimensional view}}.
\newblock CRC Press, 2011.

\bibitem[Elad and Aharon(2006)]{Elad2006}
Michael Elad and Michal Aharon.
\newblock {Image denoising via sparse and redundant representations over
  learned dictionaries}.
\newblock \emph{Image Processing, IEEE Transactions on}, 15\penalty0
  (12):\penalty0 3736--3745, 2006.

\bibitem[Ge et~al.(2016)Ge, Lee, and Ma]{Ge2017}
Rong Ge, Jason~D. Lee, and Tengyu Ma.
\newblock {Matrix Completion has No Spurious Local Minimum}.
\newblock \emph{Proceedings of the 30th International Conference on Neural
  Information Processing Systems}, pages 1--27, 2016.
\newblock ISSN 10495258.
\newblock URL \url{http://arxiv.org/abs/1605.07272
  http://dl.acm.org/citation.cfm?id=3157382.3157431}.

\bibitem[Geng et~al.(2014)Geng, Wright, and Wang]{Geng2014}
Quan Geng, John Wright, and Huan Wang.
\newblock {On the local correctness of L1-minimization for dictionary
  learning}.
\newblock In \emph{Information Theory (ISIT), 2014 IEEE International Symposium
  on}, pages 3180--3184. IEEE, 2014.

\bibitem[Gribonval and Schnass(2010)]{Gribonval2010}
R{\'{e}}mi Gribonval and Karin Schnass.
\newblock {Dictionary Identification - Sparse Matrix-Factorisation via
  L1-Minimisation}.
\newblock \emph{Information Theory, IEEE Transactions on}, 56\penalty0
  (7):\penalty0 3523--3539, 2010.

\bibitem[Hoyer(2002)]{Hoyer2002}
Patrik~O. Hoyer.
\newblock {Non-negative sparse coding}.
\newblock \emph{Neural Networks for Signal Processing - Proceedings of the IEEE
  Workshop}, 2002-Janua:\penalty0 557--565, 2002.
\newblock ISSN 0780376161.
\newblock \doi{10.1109/NNSP.2002.1030067}.

\bibitem[Jenatton et~al.(2014)Jenatton, Bach, and Gribonval]{Jenatton2014}
Rodolphe Jenatton, F~Bach, and R~Gribonval.
\newblock {Sparse and spurious: dictionary learning with noise and outliers}.
\newblock \emph{arXiv preprint arXiv:1407.5155}, 2014.

\bibitem[Kreutz-delgado et~al.(2003)Kreutz-delgado, Murray, Rao, Engan, Lee,
  and Sejnowski]{Kreutz-Delgado2003}
Kenneth Kreutz-delgado, Joseph~F. Murray, Bhaskar~D. Rao, Kjersti Engan, Te-Won
  Lee, and Terrence~J. Sejnowski.
\newblock {Dictionary learning algorithms for sparse representation.}
\newblock \emph{Neural computation}, 15\penalty0 (2):\penalty0 349--96, 2003.
\newblock ISSN 0899-7667.
\newblock \doi{10.1162/089976603762552951}.
\newblock URL
  \url{http://www.pubmedcentral.nih.gov/articlerender.fcgi?artid=2944020{\&}tool=pmcentrez{\&}rendertype=abstract}.

\bibitem[Lee and Seung(2001)]{Lee2001}
Daniel Lee and Sebastian Seung.
\newblock {Algorithms for Non-negative Matrix Factorization}.
\newblock In \emph{Advances in Neural Information Processing Systems 13}, 2001.
\newblock ISBN 9781424418206.
\newblock \doi{10.1109/IJCNN.2008.4634046}.

\bibitem[Lesage et~al.(2005)Lesage, Gribonval, Bimbot, and
  Benaroya]{Lesage2005}
Sylvain Lesage, R{\'{e}}mi Gribonval, Fr{\'{e}}d{\'{e}}ric Bimbot, and Laurent
  Benaroya.
\newblock {Learning unions of orthonormal bases with thresholded singular value
  decomposition}.
\newblock In \emph{Acoustics, Speech, and Signal Processing, 2005.
  Proceedings.(ICASSP'05). IEEE International Conference on}, volume~5, pages
  v----293. IEEE, IEEE, 2005.
\newblock \doi{10.1109/ICASSP.2005.1416298>}.
\newblock URL \url{https://hal.inria.fr/inria-00564483}.

\bibitem[Mairal et~al.(2009{\natexlab{a}})Mairal, Bach, Ponce, and
  Sapiro]{Mairal2009a}
Julien Mairal, Francis Bach, Jean Ponce, and Guillermo Sapiro.
\newblock {Online Dictionary Learning for Sparse Coding}.
\newblock \emph{Proceedings of the 26th Annual International Conference on
  Machine Learning}, pages 689--696, 2009{\natexlab{a}}.

\bibitem[Mairal et~al.(2009{\natexlab{b}})Mairal, Bach, Ponce, Sapiro, and
  Zisserman]{Mairal2009}
Julien Mairal, Francis~R. Bach, Jean Ponce, Guillermo Sapiro, and Andrew
  Zisserman.
\newblock {Supervised dictionary learning}.
\newblock In \emph{Advances in neural information processing systems}, pages
  1033--1040, 2009{\natexlab{b}}.

\bibitem[Mairal et~al.(2014)Mairal, Jenatton, Bach, Ponce, Obozinski, Yu,
  Sapiro, and Harchaoui]{Mairal2014SPAMSV2.5}
Julien Mairal, Rodolphe Jenatton, Francis Bach, Jean Ponce, Guillaume
  Obozinski, Bin Yu, Guillermo Sapiro, and Zaid Harchaoui.
\newblock {SPAMS : a SPArse Modeling Software , v2.5}, 2014.

\bibitem[Olshausen and Field(1996)]{Olshausen1996}
Bruno~A Olshausen and David~J Field.
\newblock {Emergence of simple-cell receptive field properties by learning a
  sparse code for natural images}.
\newblock \emph{Nature}, 381\penalty0 (6583):\penalty0 607--609, 1996.

\bibitem[Olshausen and Field(1997)]{Olshausen1997}
Bruno~A. Olshausen and David~J. Field.
\newblock {Sparse coding with an overcomplete basis set: A strategy employed by
  V1?}
\newblock \emph{Vision research}, 37\penalty0 (23):\penalty0 3311--3325, 1997.
\newblock ISSN 00426989.
\newblock \doi{10.1016/S0042-6989(97)00169-7}.

\bibitem[Parker and Schniter(2016)]{Parker2016}
Jason~T. Parker and Philip Schniter.
\newblock {Parametric Bilinear Generalized Approximate Message Passing}.
\newblock In \emph{IEEE Journal on Selected Topics in Signal Processing},
  volume~10, pages 795--808, 2016.
\newblock ISBN 1053-587X.
\newblock \doi{10.1109/JSTSP.2016.2539123}.

\bibitem[Parker et~al.(2014{\natexlab{a}})Parker, Schniter, and
  Cevher]{Parker2014}
Jason~T Parker, Philip Schniter, and Volkan Cevher.
\newblock {Bilinear generalized approximate message passing—Part I:
  Derivation}.
\newblock \emph{IEEE Transactions on Signal Processing}, 62\penalty0
  (22):\penalty0 5839--5853, 2014{\natexlab{a}}.
\newblock URL
  \url{http://www2.ece.ohio-state.edu/{~}schniter/BiGAMP/BiGAMP.html}.

\bibitem[Parker et~al.(2014{\natexlab{b}})Parker, Schniter, and
  Cevher]{Parker2014a}
Jason~T Parker, Philip Schniter, and Volkan Cevher.
\newblock {Bilinear generalized approximate message passing—Part II:
  Applications}.
\newblock \emph{IEEE Transactions on Signal Processing}, 62\penalty0
  (22):\penalty0 5854--5867, 2014{\natexlab{b}}.
\newblock URL
  \url{http://www2.ece.ohio-state.edu/{~}schniter/BiGAMP/BiGAMP.html}.

\bibitem[Peyr{\'e}(2009)]{Peyre2009}
Gabriel Peyr{\'e}.
\newblock Sparse modeling of textures.
\newblock \emph{Journal of Mathematical Imaging and Vision}, 34\penalty0
  (1):\penalty0 17--31, 2009.

\bibitem[Polyak(1979)]{polyak79}
B~T Polyak.
\newblock {Sharp Minima}.
\newblock In \emph{Institute of Control Sciences Lecture Notes, Moscow IIASA
  Workshop On Generalized Lagrangians and Their Applications, IIASA, Laxenburg,
  Austria}, 1979.

\bibitem[Qiu et~al.(2014)Qiu, Patel, and Chellappa]{Qiu2014}
Qiang Qiu, Vishal~M Patel, and Rama Chellappa.
\newblock {Information-theoretic Dictionary Learning for Image Classification}.
\newblock \emph{Pattern Analysis and Machine Intelligence, IEEE Transactions
  on}, 36\penalty0 (11):\penalty0 2173--2184, 2014.

\bibitem[Rodgers et~al.(1984)Rodgers, Nicewander, and Toothaker]{Rodgers1984}
Joseph~Lee Rodgers, W.~Alan Nicewander, and Larry Toothaker.
\newblock {Linearly independent, orthogonal, and uncorrelated variables}.
\newblock \emph{American Statistician}, 38\penalty0 (2):\penalty0 133--134,
  1984.
\newblock ISSN 15372731.
\newblock \doi{10.1080/00031305.1984.10483183}.

\bibitem[Rubinstein et~al.(2010)Rubinstein, Bruckstein, and
  Elad]{Rubinstein2010}
Ron Rubinstein, Alfred~M. Bruckstein, and Michael Elad.
\newblock {Dictionaries for sparse representation modeling}.
\newblock \emph{Proceedings of the IEEE}, 98\penalty0 (6):\penalty0 1045--1057,
  2010.
\newblock ISSN 00189219.
\newblock \doi{10.1109/JPROC.2010.2040551}.

\bibitem[Schnass(2014)]{Schnass2014b}
Karin Schnass.
\newblock {On the identifiability of overcomplete dictionaries via the
  minimisation principle underlying K-SVD}.
\newblock \emph{Applied and Computational Harmonic Analysis}, 37\penalty0
  (3):\penalty0 464--491, 2014.
\newblock ISSN 1096603X.
\newblock \doi{10.1016/j.acha.2014.01.005}.

\bibitem[Schnass(2015)]{Schnass2014a}
Karin Schnass.
\newblock {Local Identification of Overcomplete Dictionaries}.
\newblock \emph{Journal of Machine Learning Research}, 16:\penalty0 1211--1242,
  2015.
\newblock ISSN 15337928.
\newblock URL \url{http://arxiv.org/abs/1401.6354}.

\bibitem[Spielman et~al.(2013)Spielman, Wang, and Wright]{Spielman2013}
Daniel~A. Spielman, Huan Wang, and John Wright.
\newblock {Exact recovery of sparsely-used dictionaries}.
\newblock In \emph{Proceedings of the Twenty-Third international joint
  conference on Artificial Intelligence}, pages 3087--3090. AAAI Press, 2013.
\newblock URL \url{https://arxiv.org/pdf/1206.5882.pdf}.

\bibitem[Srebro and Jaakkola(2003)]{Srebro2003}
Nathan Srebro and Tommi Jaakkola.
\newblock {Weighted low-rank approximations}.
\newblock In \emph{Proceedings of the Twentieth International Conference on
  Machine Learning}, volume~3, pages 720--727, 2003.
\newblock ISBN 1-57735-189-4.
\newblock \doi{10.1.1.5.3301}.
\newblock URL
  \url{http://citeseerx.ist.psu.edu/viewdoc/summary?doi=10.1.1.5.3301}.

\bibitem[Sun et~al.(2017{\natexlab{a}})Sun, Qu, and Wright]{Sun2017a}
Ju~Sun, Qing Qu, and John Wright.
\newblock {Complete Dictionary Recovery Over the Sphere I: Overview and the
  Geometric Picture}.
\newblock \emph{IEEE Transactions on Information Theory}, 63\penalty0
  (2):\penalty0 853--884, 2017{\natexlab{a}}.
\newblock ISSN 00189448.
\newblock \doi{10.1109/TIT.2016.2632162}.

\bibitem[Sun et~al.(2017{\natexlab{b}})Sun, Qu, and Wright]{Sun2017b}
Ju~Sun, Qing Qu, and John Wright.
\newblock {Complete Dictionary Recovery Over the Sphere II: Recovery by
  Riemannian Trust-Region Method}.
\newblock \emph{IEEE Transactions on Information Theory}, 63\penalty0
  (2):\penalty0 885--914, 2017{\natexlab{b}}.
\newblock ISSN 00189448.
\newblock \doi{10.1109/TIT.2016.2632149}.

\bibitem[Tibshirani(1996)]{Tibshirani1991}
Robert Tibshirani.
\newblock {Regression shrinkage and selection via the Lasso}.
\newblock \emph{Journal of the Royal Statistical Society. Series B
  (Methodological)}, 1996.
\newblock ISSN 0035-9246.
\newblock \doi{10.2307/2346101}.

\bibitem[Wainwright(2019)]{Martin2017}
Martin Wainwright.
\newblock \emph{{High-dimensional statistics: A non-asymptotic viewpoint}}.
\newblock Cambridge University Press, 2019.
\newblock ISBN 978-1108498029.
\newblock URL
  \url{https://www.amazon.com/High-Dimensional-Statistics-Non-Asymptotic-Statistical-Probabilistic/dp/1108498027}.

\bibitem[Witzgall and Fletcher(1989)]{Witzgall1989}
Christoph Witzgall and R.~Fletcher.
\newblock {Practical Methods of Optimization.}
\newblock \emph{Mathematics of Computation}, 1989.
\newblock ISSN 00255718.
\newblock \doi{10.2307/2008742}.

\bibitem[Wu and Yu(2018)]{Wu2015}
Siqi Wu and Bin Yu.
\newblock {Local identifiability of l 1 -minimization dictionary learning: a
  sufficient and almost necessary condition}.
\newblock \emph{Journal of Machine Learning Research}, 18:\penalty0 1--56,
  2018.
\newblock URL \url{http://jmlr.org/papers/volume18/16-119/16-119.pdf}.

\bibitem[Wu et~al.(2016)Wu, Joseph, Hammonds, Celniker, Yu, and Frise]{Wu2016}
Siqi Wu, Antony Joseph, Ann~S. Hammonds, Susan~E. Celniker, Bin Yu, and Erwin
  Frise.
\newblock {Stability-driven nonnegative matrix factorization to interpret
  spatial gene expression and build local gene networks}.
\newblock \emph{Proceedings of the National Academy of Sciences}, 113\penalty0
  (16):\penalty0 201521171, 2016.
\newblock ISSN 0027-8424.
\newblock \doi{10.1073/pnas.1521171113}.
\newblock URL \url{http://www.pnas.org/content/113/16/4290.full}.

\bibitem[Zibulevsky and Pearlmutter(2001)]{Zibulevsky2001}
Michael Zibulevsky and Barak~A. Pearlmutter.
\newblock {Blind Source Separation by Sparse Decomposition in a Signal
  Dictionary}.
\newblock \emph{Neural computation}, 4\penalty0 (13):\penalty0 863--882, 2001.

\end{thebibliography}
\newpage
\section*{Appendix A: Additional Examples}
\label{Sec:Appendix_A}
\begin{corollary}\label{coro:appendix_1}
If the reference dictionary is a constant collinearity dictionary with coherence $\mu$ and the coefficients are generated from Bernoulli Gaussian distribution $BG(p)$, if $\frac{\mu\sqrt{p(\dim - 1)}}{1-p}< 1$, then the reference dictionary is a sharp local minimum with sharpness at least $\frac{p}{\sqrt{\pi}(1 + \mu(\dim - 1))}\left(1 - p - \mu\sqrt{p(K-1)}\right)$. For any $$\dict \in \left\{\dict\in \BASIS{\R^\dim}\Big| \norm{\dict}_2 \leq 2\sqrt{1 + \mu(\dim - 1)}, \norm{\dict - \refdict}_F^2 \leq \frac{1}{8\sqrt{2}(1 + \mu(\dim - 1))}\left(1 - p - \mu\sqrt{p(K-1)}\right) \right\},$$

$\E \|\inv{\dict}\signal\|_1 \geq \E \|\coef\|_1.$
\end{corollary}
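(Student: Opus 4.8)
The plan is to obtain Corollary~\ref{coro:appendix_1} as a direct specialization of Theorem~\ref{Thm:local} (for sharpness) and Theorem~\ref{Thm:region} (for the region), so the real work is to evaluate the two model-dependent objects in those theorems — the matrix $B(\coef, M^*)$ and its dual semi-norm $\norm{B(\coef, M^*)}_{\coef}^*$ — when $\coef\sim BG(p)$ and $\refdict=\refdict(\mu)$. Computing $B$ is routine: the coordinates of $\coef$ are independent with $\E z_j=0$, so for $k\neq j$ we get $\E\coef_j\sgn(\coef_k)=\E\coef_j\cdot\E\sgn(\coef_k)=0$, while $\E\coef_k\sgn(\coef_k)=\E|\coef_k|=p\sqrt{2/\pi}$, and $\E|\coef_j|=p\sqrt{2/\pi}$ for all $j$. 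Hence $B_1(\coef)=p\sqrt{2/\pi}\,\I$, $B_2(\coef,M^*)=p\sqrt{2/\pi}\,M^*$, and
\[
B(\coef, M^*) = p\sqrt{2/\pi}\,(\I - M^*) = -\mu\,p\sqrt{2/\pi}\,(\1\1^T - \I),
\]
which has zero diagonal (so lies in $H^\dim$) and constant off-diagonal entries $-\mu p\sqrt{2/\pi}$. I also record, for later substitution, $\E\|\coef\|_1=\dim\,p\sqrt{2/\pi}<\infty$ (the hypothesis of Theorem~\ref{Thm:local}), $\norm{\refdict}_2^2=\lambda_{\max}(M^*)=1+\mu(\dim-1)$, $\max_j\E|\coef_j|=p\sqrt{2/\pi}$, and $c_{\coef}=c_p=p(1-p)\sqrt{2/\pi}$ from Proposition~\ref{prop:specific_constants}.

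The main step is to bound $\norm{B(\coef, M^*)}_{\coef}^*$. I would exploit that, for the constant-collinearity model, both quantities defining the dual semi-norm separate over the rows of $A\in H^\dim$: with $a^{(k)}=(A_{k,j})_{j\neq k}$, and using that $\{\coef_k=0\}=\{\xi_k=0\}$ has probability $1-p$ and is independent of the remaining coordinates, one gets $\tr\!\big(B(\coef,M^*)^TA\big)=-\mu p\sqrt{2/\pi}\sum_k\langle\1,a^{(k)}\rangle$ and $\norm{A}_{\coef}=(1-p)\sum_k\E|\langle a^{(k)},\coef'\rangle|$ with $\coef'\sim BG(p)$ on $\R^{\dim-1}$. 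Since a matrix supported on a single row still lies in $H^\dim$, the supremum collapses to a scalar problem, $\norm{B(\coef,M^*)}_{\coef}^*=\frac{\mu p}{1-p}\sup_{b\neq 0}\frac{\langle\1,b\rangle}{\E\sqrt{\sum_j b_j^2\xi'_j}}$, where I used that $\langle b,\coef'\rangle$ is Gaussian given $\xi'$, so $\E|\langle b,\coef'\rangle|=\sqrt{2/\pi}\,\E\sqrt{\sum_j b_j^2\xi'_j}$. Bounding $\langle\1,b\rangle\le\sqrt{\dim-1}\,\|b\|_2$ by Cauchy--Schwarz and lower-bounding $\E\sqrt{\sum_j b_j^2\xi'_j}$ — by comparing the support sum to a Binomial and controlling $\E\sqrt{\mathrm{Bin}(\dim-1,p)}$, which concentrates near $\sqrt{p(\dim-1)}$ — then yields $\norm{B(\coef,M^*)}_{\coef}^*\le\frac{\mu\sqrt{p(\dim-1)}}{1-p}$.

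Finally I would substitute. If $\frac{\mu\sqrt{p(\dim-1)}}{1-p}<1$ then $\norm{B(\coef,M^*)}_{\coef}^*<1$, so Theorem~\ref{Thm:local} applies and $\refdict$ is a sharp local minimum; plugging $c_{\coef}$, $\norm{\refdict}_2^2$ and the dual-norm bound into its sharpness estimate $\frac{c_{\coef}}{\sqrt 2\,\norm{\refdict}_2^2}\big(1-\norm{B(\coef,M^*)}_{\coef}^*\big)$ and simplifying $\sqrt{2/\pi}/\sqrt 2=1/\sqrt\pi$ gives exactly $\frac{p}{\sqrt\pi(1+\mu(\dim-1))}\big(1-p-\mu\sqrt{p(\dim-1)}\big)$. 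Likewise, inserting the same quantities and $\max_j\E|\coef_j|=p\sqrt{2/\pi}$ into the radius of Theorem~\ref{Thm:region} reproduces the set $S$ of the statement (together with $\norm{\dict}_2\le 2\sqrt{1+\mu(\dim-1)}$), and since $\signal=\refdict\coef$ gives $\inv{(\refdict)}\signal=\coef$, we have $\E\|\inv{(\refdict)}\signal\|_1=\E\|\coef\|_1$, which turns the conclusion $\E\|\inv{\dict}\signal\|_1\ge\E\|\inv{(\refdict)}\signal\|_1$ of Theorem~\ref{Thm:region} into the displayed inequality. The technical heart — and the only place this differs from the analogous computation for $SG(s)$ in Corollary~\ref{coro:4}, where the fixed-size random support lets Cauchy--Schwarz pin the supremum down exactly — is that last estimate: the Bernoulli mask produces a random-cardinality support, so one must argue that a (near-)uniform choice of $b$ is extremal and quantitatively control $\E\sqrt{\mathrm{Bin}(\dim-1,p)}$.
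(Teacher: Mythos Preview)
Your approach is essentially the paper's: compute $B(\coef,M^*)=-\mu p\sqrt{2/\pi}\,(\1\1^T-\I)$, reduce the dual semi-norm via row-separation and permutation symmetry to the constant off-diagonal matrix (the paper does this by invoking an analogue of Lemma~\ref{lemma:symmetry}, which is exactly your ``uniform $b$ is extremal'' step), obtain the exact value
\[
\norm{B(\coef,M^*)}_{\coef}^*=\frac{\mu p(\dim-1)}{(1-p)\,\E\sqrt{\mathrm{Bin}(\dim-1,p)}},
\]
and then substitute $c_p$, $\norm{\refdict}_2^2$, $\max_j\E|\coef_j|$ into Theorems~\ref{Thm:local} and~\ref{Thm:region}.

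One remark on the step you flagged as the technical heart. The paper dispatches it in a single line, asserting that Jensen's inequality gives $\E\sqrt{\mathrm{Bin}(\dim-1,p)}\ge\sqrt{p(\dim-1)}$ and hence $\norm{B(\coef,M^*)}_{\coef}^*\le\frac{\mu\sqrt{p(\dim-1)}}{1-p}$. But since $\sqrt{\cdot}$ is concave, Jensen actually yields the \emph{reverse} inequality $\E\sqrt{X}\le\sqrt{\E X}$, so this step does not go through as written. Your instinct that this estimate ``needs quantitative control'' is therefore well-founded: both the paper's argument and your outlined one share this gap at exactly the point you identified. A correct nonasymptotic lower bound on $\E\sqrt{\mathrm{Bin}(\dim-1,p)}$ of the required strength would need a different tool than Jensen.
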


\begin{corollary}\label{coro:appendix_2}
If the reference dictionary is a constant collinearity dictionary with coherence $\mu$ and the coefficients are generated from sparse Laplacian distribution $SL(s)$, if 
\[
\frac{\mu s(\dim - 1)}{(\dim - s) \iint_{0}^\infty |y - x|(xy)^{s-1}\exp(-(x+y))\Gamma(s)^{-2}dxdy} < 1,
\]
then the reference dictionary is a sharp local minimum. 
\end{corollary}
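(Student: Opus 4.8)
The plan is to reduce everything to Theorem~\ref{Thm:local}. Since $SL(s)$ is an exact sparse model whose base vector $z$ has a density, Assumption~I holds (Proposition~\ref{prop:specific_constants}), and $\E\|\coef\|_1 = s\,\E|z_1| = s<\infty$; hence $\refdict$ is a sharp local minimum of \eqref{Eq:main2} if and only if $\norm{B(\coef,M^*)}_{\coef}^*<1$. The whole task is therefore to show that
\[
\norm{B(\coef,M^*)}_{\coef}^* \;=\; \frac{\mu\, s(\dim-1)}{(\dim-s)\,\iint_0^\infty |y-x|\,(xy)^{s-1}\exp(-(x+y))\,\Gamma(s)^{-2}\,dx\,dy},
\]
since then the inequality in the statement is exactly the condition $\norm{B(\coef,M^*)}_{\coef}^*<1$, and the conclusion follows.

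First I would compute $B := B(\coef,M^*)$ in closed form. For the constant-collinearity dictionary $M^* = (1-\mu)\I + \mu\1\1^T$, so $M^*_{k,k}=1$ and $M^*_{k,j}=\mu$ for $k\neq j$. Using that the coordinates of $z$ are i.i.d., symmetric, mean-zero, with $\E|z_j|=1$, and independent of $\bm{\xi}$, one gets $\E\,\coef_j\sgn(\coef_k)=0$ for $k\neq j$ (the $\E z_j=0$ factor annihilates the cross term; note no independence among the $\xi_j$ is needed) and $\E|\coef_j| = P(j\in\bm{S})=s/\dim$. Hence $B_{k,k}=0$ and $B_{k,j}=-\mu s/\dim$ for $k\neq j$, that is $B = -\tfrac{\mu s}{\dim}\,J$ with $J := \1\1^T-\I\in H^\dim$.

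Next I would evaluate $\norm{J}_{\coef}^*$, which gives $\norm{B}_{\coef}^* = \tfrac{\mu s}{\dim}\norm{J}_{\coef}^*$. Plugging $A=J$ into the definition of the dual seminorm gives the lower bound $\norm{J}_{\coef}^* \geq \tr(J^TJ)/\norm{J}_{\coef} = \dim(\dim-1)/\norm{J}_{\coef}$. For the matching upper bound I would symmetrize: the $SL(s)$ law of $\coef$ is exchangeable, so $\norm{P^TAP}_{\coef}=\norm{A}_{\coef}$ for every permutation matrix $P$; averaging $A\in H^\dim$ over all permutations produces $\bar A = cJ$ for some $c\in\R$, with $\tr(J^T\bar A)=\tr(J^TA)$ (because $PJP^T=J$) and $\norm{\bar A}_{\coef}\leq\norm{A}_{\coef}$ (triangle inequality for the seminorm); treating $c\geq 0$ and $c<0$ separately then yields $\tr(J^TA)\leq \dim(\dim-1)\norm{A}_{\coef}/\norm{J}_{\coef}$ for all $A\in H^\dim$. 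Hence $\norm{J}_{\coef}^* = \dim(\dim-1)/\norm{J}_{\coef}$ and $\norm{B}_{\coef}^* = \mu s(\dim-1)/\norm{J}_{\coef}$. Finally, $\norm{J}_{\coef} = \sum_{k=1}^\dim \E\big|\sum_{j\neq k}\coef_j\big|\,\1(\coef_k=0)$; conditioning on $k\notin\bm{S}$ (probability $(\dim-s)/\dim$), $\bm{S}$ is uniform among size-$s$ subsets of the other coordinates and $\sum_{j\neq k}\coef_j=\sum_{j\in\bm{S}}z_j$ has the law of $L_s:=\sum_{i=1}^s z_i$, a sum of $s$ i.i.d.\ standard Laplacians, so $\norm{J}_{\coef}=(\dim-s)\,\E|L_s|$. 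Writing each standard Laplacian as a difference of two independent $\mathrm{Exp}(1)$ variables, $L_s\overset{d}{=}\tilde U-\tilde V$ with $\tilde U,\tilde V$ i.i.d.\ $\mathrm{Gamma}(s,1)$, whence $\E|L_s| = \iint_0^\infty |y-x|\,(xy)^{s-1}e^{-(x+y)}\,\Gamma(s)^{-2}\,dx\,dy$, i.e.\ the denominator above, proving the displayed identity.

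The main obstacle is the \emph{exact} evaluation of $\norm{B}_{\coef}^*$: the crude bound $\norm{B}_{\coef}^*\leq c_{\coef}^{-1}\norm{B}_F$ available for general models would give a far more restrictive, essentially $s$-independent condition, so one genuinely needs the exchangeability/symmetrization reduction to the one-parameter family $\{cJ\}$ together with the sign analysis in the upper bound. A secondary, more routine point is recognizing that a sum of $s$ i.i.d.\ standard Laplacians is distributed as the difference of two independent $\mathrm{Gamma}(s,1)$ variables, which is precisely what converts $\E|L_s|$ into the Gamma integral appearing in the corollary.
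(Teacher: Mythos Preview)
Your proposal is correct and follows the same route as the paper: compute $B(\coef,M^*)$ explicitly (constant off-diagonal $-\mu s/\dim$, zero diagonal), invoke a permutation-symmetry argument in the spirit of Lemma~\ref{lemma:symmetry} to evaluate $\norm{B(\coef,M^*)}_{\coef}^*$, and then apply Theorem~\ref{Thm:local}. The paper's proof is terse and simply writes ``similar to Lemma~\ref{lemma:symmetry}'' before stating the formula; your version actually spells out the exchangeability-based symmetrization and identifies the Gamma integral as $\E|L_s|$ via the representation of a sum of $s$ i.i.d.\ Laplacians as the difference of two independent $\mathrm{Gamma}(s,1)$ variables, which is exactly what is needed to land on the displayed denominator.
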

When $\dim = 10$ and $20$, the phase transition curve ($\norm{B(\coef, M^*)}_{\coef}^* = 1$) for sparse Laplace distribution and sparse Gaussian distribution can be found in Fig. \ref{Fig:phase_trans_ex}. As can be seen in the figure, the phase transition curve for sparse Laplace distribution is slightly higher than that for sparse Gaussian distribution, which means Laplace distribution has less stringent local identifiability conditions. 
\begin{figure}[!hb]
\centering
\includegraphics[scale=.8]{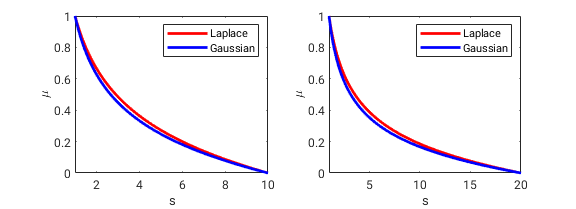}
\caption{The theoretical phase transition curve for constant collinearity dictionary with coherence $\mu$ and sparsity $s$ for $K = 10$ (Left) and $K = 20$ (Right). The phase transition curve corresponds to the asymptotic theoretical boundary that separates the region where local identifiability holds (the area below the curve) and the region where local identifiability fails (the area above the curve) according to Theorem \ref{Thm:local}.}\label{Fig:phase_trans_ex}
\end{figure}
That is consistent with our intuition: while the density function of a Gaussian distribution is rotation symmetric, which implies that it does not "prefer" any direction, the density function of the Laplace distribution is not. For example, let's consider a simple two-dimensional case. Let $\refdict$ be the identity matrix in $\mathbb R^{2\times 2}$. If the reference coefficient is from Gaussian distribution with no sparsity, then all the orthogonal dictionaries will have the same objective value $\sqrt{\frac{2}{\pi}}$. So local identifiability does not hold for Gaussian distribution with no sparsity. However, for Laplace distribution with no sparsity, for an orthogonal dictionary ($\theta \in [0, \pi/2]$)
$
\left(
\begin{array}{cc}
\cos \theta & \sin \theta \\
\sin \theta & - \cos \theta
\end{array}
\right)
$, its $\ell_1$ objective function value would be $2(\sin \theta + \cos \theta + \frac{1}{\sin \theta + \cos \theta})$, which attains its minimum when $\theta = 0$ or $\frac{\pi}{2}$. That means the local identifiability still holds. That shows Laplace distribution should have less stringent conditions for local identifiability .


\newpage
\section*{Appendix B: Additional Simulations}
\label{Sec:Appendix_B}
\subsection*{Running time complexity}
Following the simulation in Section \ref{Sec:5.1}, we carry out the same simulation for different values of $\mu$ and $p$. Let the reference dictionary be a constant collinearity dictionary with coherence $\mu = 0.1$ and $\mu = .9$. The sparse linear coefficients are generated from the Bernoulli Gaussian distribution $BG(p)$ where $p = .1$ and $p = .9$. The simulation results are shown in Fig. \ref{Appendix_Fig:sim1_1} and Fig. \ref{Appendix_Fig:sim1_2}. We find that for a fixed dimension, the computation time scales roughly linearly with sample size, and for fixed sample size, the computation time scales quadratically with dimension $\dim$. That reveals the same trend as the simulation in Section 5.1.
\begin{figure}[ht]
\centering
\includegraphics[scale=.5]{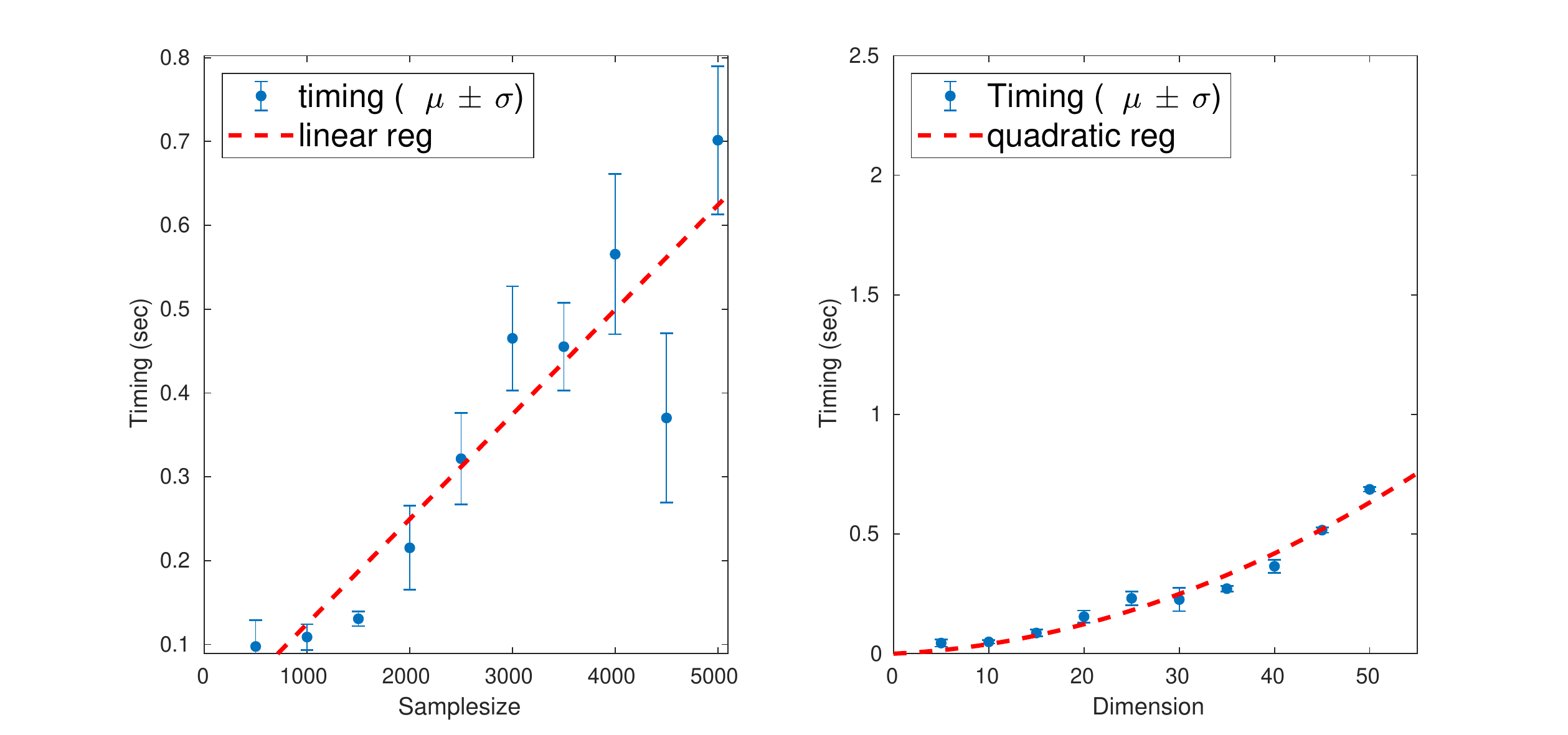}
\caption{Computation time of Algorithm 1. $p = .1$ and $\mu = .1$. (Left) For $\dim = 20$ and $n=500,\ldots, 5000$. (Right) For $\dim = 5,\ldots,50$ and $n = 400$.}\label{Appendix_Fig:sim1_1}
\end{figure}
\begin{figure}[ht]
\centering
\includegraphics[scale=.5]{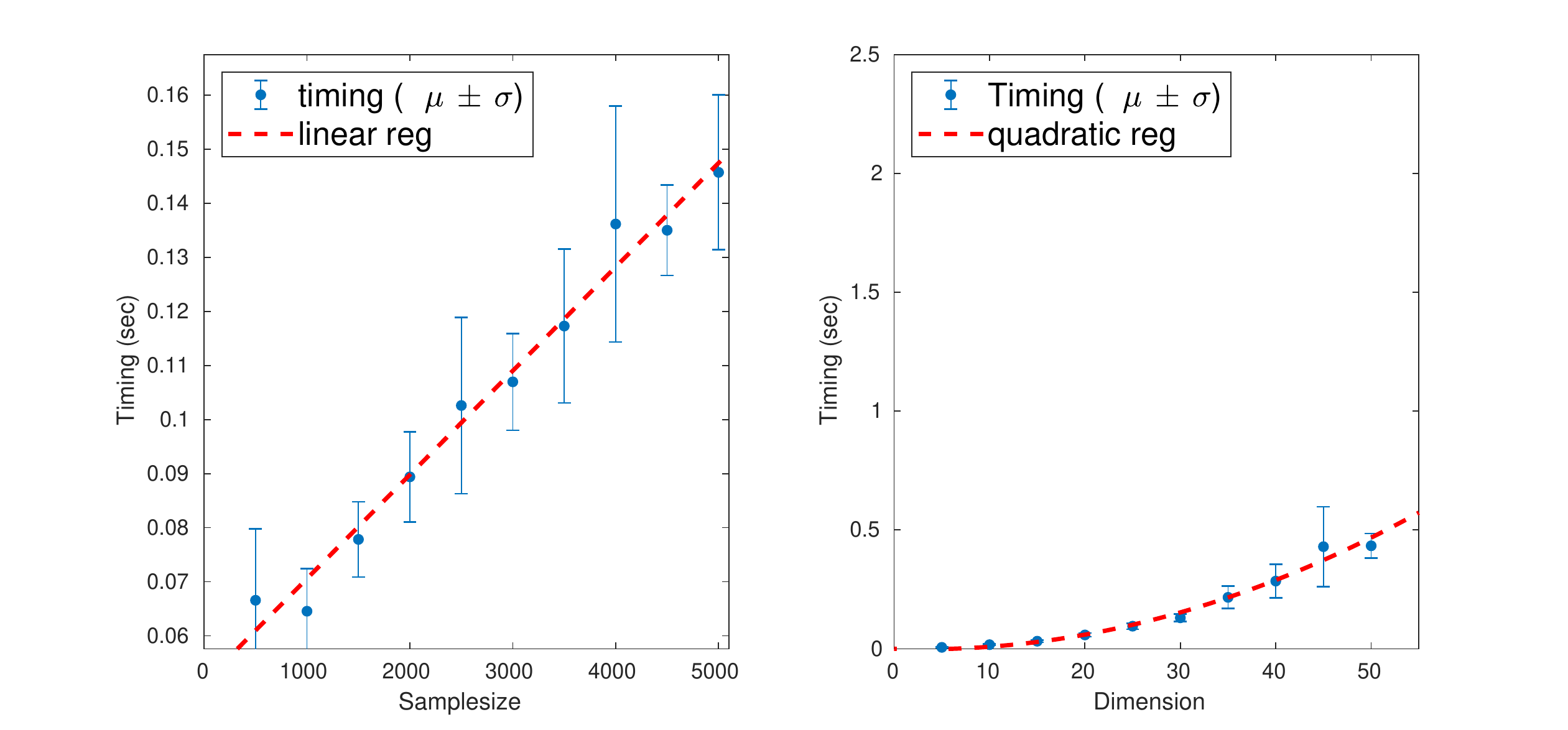}
\caption{Computation time of Algorithm 1. $p = .9$ and $\mu = .9$. (Left) For $\dim = 20$ and $n=500,\ldots, 5000$. (Right) For $\dim = 5,\ldots,50$ and $n = 400$.}\label{Appendix_Fig:sim1_2}
\end{figure}
\subsection*{Sensitivity analysis for $\rho$}
We test the sensitivity of Algorithm \ref{Alg:test} by varying the parameter $\rho$. Let dictionary dimension $\dim = 20$, sparsity parameter $s = 10$ and sample size $n = 1600$. We consider constant collinearity dictionaries with $\mu = \frac{1}{\sqrt{s}}(\frac{\dim - s}{\dim - 1} + 0.05)$ (Fig. \ref{Appendix_Fig:rho_sensitivity} Left) and $\mu = \frac{1}{\sqrt{s}}(\frac{\dim - s}{\dim - 1} - 0.1)$ (Fig. \ref{Appendix_Fig:rho_sensitivity} Right). For the first experiment, the reference dictionary is not a sharp local minimum of the objective function given large enough samples. Hence a small perturbation to Algorithm \ref{Alg:test} will result in a large distance $r$ defined in the algorithm. Similarly, in the second experiment, the reference dictionary is sharp, indicating the distance $r$ in the Algorithm \ref{Alg:test} should be small with respect to perturbation. The results are in Fig. \ref{Appendix_Fig:rho_sensitivity}. This experiment shows for parameter $\rho$ values ranging from $0.05$ to $0.1$, Algorithm \ref{Alg:test} succeeds in distinguishing between the sharp and not-sharp local minima. The smaller their difference is, the smaller $\rho$ we need to use.
\begin{figure}[ht]
\centering
\includegraphics[width=.8\textwidth]{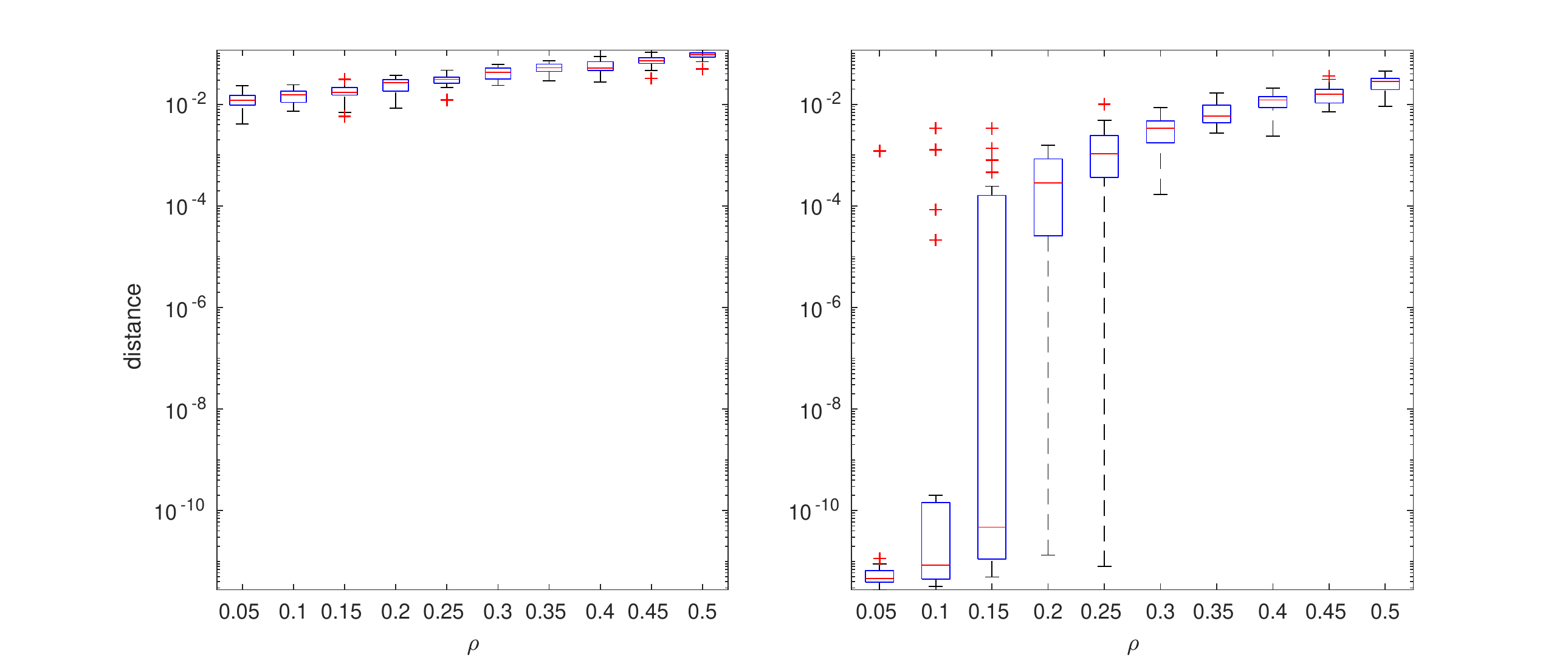}
\caption{Sensitivity analysis of perturbation parameter $\rho$ in Algorithm \ref{Alg:test}. Left: constant collinearity dictionary with $\mu = \frac{1}{\sqrt{s}}(\frac{\dim - s}{\dim - 1} + 0.05)$; Right: constant collinearity dictionary with coherence $\mu = \frac{1}{\sqrt{s}}(\frac{\dim - s}{\dim - 1} - 0.1)$.}\label{Appendix_Fig:rho_sensitivity}
\end{figure}

\newpage
\section*{Appendix C: Proofs}
\label{Sec:Appendix_C}
\subsection*{Proofs of Propositions}
\begin{proof}[Proof of Proposition \ref{prop:specific_constants}]
To prove $\norm{\cdot}_{\coef}$ is lower bounded by $\norm{\cdot}_F$ in the linear subspace $H^\dim = \{A\in \R^{\dim\times\dim}|A_{k,k} = 0\forall~k\}$, we only need to show that $\norm{\cdot}_{\coef}$ is a norm on $H^\dim$. If we can prove it is a norm, then we know it is equivalent to the Frobenious norm since $H^\dim$ is a finite dimensional space.

In order to show that $\norm{\cdot}_{\coef}$ is a norm, we need to prove three properties:
\begin{itemize}
    \item Sub-additivity: for any $A,B\in H^\dim$, $\norm{A + B}_{\coef} \leq \norm{A}_{\coef} + \norm{B}_{\coef} $.
    \item Absolutely homogeneity: for any $A\in H^\dim$ and $\lambda > 0$, $\norm{\lambda A}_{\coef} = \lambda \norm{A}_{\coef}$.
    \item Positive definiteness: If $\norm{A}_{\coef}  = 0$ and $A\in H^\dim$, we know $A = 0$.
\end{itemize}
The first two properties are quite straightforward to show so we omit them in the proof. We focus on proving the third property. Note that $\norm{A}_{\coef}$ is a sum of $\dim$ non-negative terms, if $\norm{A}_{\coef} = 0$, then for any $k \in \{1,\ldots, \dim\}$, each term should be zero, i.e. $\E |\sum_{j}A_{k,j}\coef_j|\1(\coef_k = 0) = 0$. If $\coef$ is from Bernoulli type models $\BG(p_1,\ldots, p_\dim)$, then we could further decompose $\E |\sum_{j}A_{k,j}\coef_j|\1(\coef_k = 0) = 0$ into:
\[
\E |\sum_{j}A_{k,j}\coef_j|\1(\coef_k = 0) = 0 \Leftrightarrow
P(\eta_k = 0) \E |\sum_{j}A_{k,j}\eta_jz_j| = 0 \Leftrightarrow
\E |\sum_{j}A_{k,j}\eta_jz_j| = 0.
\]
The second ``$\Leftrightarrow$" is because $p_k\neq 1$. Since $\E |\sum_{j}A_{k,j}\eta_jz_j| = 0 > P(\eta_1 = \ldots = \eta_\dim = 1)\E |\sum_j A_{k,j}z_j| \geq 0$, since $p_1,\ldots, p_\dim\neq 0$, we know $\E |\sum_j A_{k,j}z_j| = 0$. Define $X = \sum_j A_{k,j}z_j$, since $\E |X| = 0$, we know $X = 0$ almost surely. If $A_{j,k}$ are not all zeros, this means $z_1,\ldots, z_\dim$ are linearly dependent. However, that would contradict the fact that $z$ has density in $\R^\dim$. That completes the proof for Bernoulli type models. For exact sparse models, the approach would be essentially the same.

Now for sparse Gaussian and Bernoulli Gaussian distributions, we could get the specific constant $c_{\coef}$.
We first prove the constant for the sparse Gaussian distribution.
For $X\in H^\dim$, 
\begin{align*}
\norm{X}_{\coef} = &\sqrt{\frac{2}{\pi}}\sum_{k=1}^\dim {\dim \choose s}^{-1}\sum_{ \substack{S\subset \{1,\ldots, \dim\} \\ k\not\in S, |S|=s}}\sqrt{\sum_{j\in S} X_{k,j}^2}\\
= & \frac{s(\dim - s)}{\dim(\dim - 1)}\sqrt{\frac{2}{\pi}}\sum_{k=1}^\dim {\dim - 2 \choose s - 1}^{-1}\sum_{\substack{S\subset \{1,\ldots, \dim\} \\ k \not \in S, |S|=s}}\sqrt{\sum_{j\in S} X_{k,j}^2}\\
\text{(Lemma 6.5 in \cite{Wu2015})} \geq &\frac{s(\dim - s)}{\dim(\dim - 1)}\sqrt{\frac{2}{\pi}} \sum_{k=1}^\dim \sqrt{\sum_{j=1}^\dim X_{k,j}^2}\\
(\|x\|_1\geq \|x\|_2)~\geq & \frac{s(\dim - s)}{\dim(\dim - 1)}\sqrt{\frac{2}{\pi}}\norm{X}_F.
\end{align*}
Here, we need to use Lemma 6.5 in \cite{Wu2015}. For the completeness of this paper, we rewrite that lemma below:
\paragraph{Lemma 6.5 in \cite{Wu2015}} Let $z\in \R^{\dim - 1}$, then for $1\leq l \leq m \leq \dim - 1$,
\[
{\dim - 2 \choose l - 1}^{-1}\sum_{\substack{S\subset \{1,\ldots, \dim-1\}\\ |S|=l}}\sqrt{\sum_{j\in S} z_{j}^2} \geq {\dim - 2 \choose m - 1}^{-1}\sum_{\substack{S\subset \{1,\ldots, \dim-1\}\\ |S|=m}}\sqrt{\sum_{j\in S} z_{j}^2}.
\]
Then the first inequality holds by setting $l=s$ and $m = \dim - 1$.
In summary, we have shown that for $X\in H^\dim$, $\norm{X}_{\coef} \geq \frac{s}{\dim} \sqrt{\frac{2}{\pi}}\norm{X}_F$, which means $c_{\coef}$ is at least $\frac{s(\dim - s)}{\dim(\dim - 1)}\sqrt{\frac{2}{\pi}}$.

Now, let's compute the constant $c_{\coef}$ for Bernoulli Gaussian distribution. For $X\in H^\dim$, if we define $\tilde s = \lceil (\dim - 2)p + 1\rceil$,
\begin{align*}
\norm{X}_{\coef} = &\sqrt{\frac{2}{\pi}}\sum_{k=1}^\dim \sum_{s=0}^{\dim-1} \sum_{\substack{S\subset \{1,\ldots, \dim\} \\ |S|=s, k\not\in S}}p^s(1 - p)^{\dim - s}\sqrt{\sum_{j\in S} X_{k,j}^2}\\
\text{(Lemma 6.6 in \cite{Wu2015})} \geq & (1 - p)\sqrt{\frac{2}{\pi}}\sum_{k=1}^\dim {\dim - 1\choose \tilde s}^{-1}\sum_{\substack{S\subset \{1,\ldots, \dim\} \\ k\not\in S, |S|=\tilde s}}\sqrt{\sum_{j\in S} X_{k,j}^2}\\
\text{(By Example 1)} \geq & (1 - p)\frac{\lceil (\dim - 2)p + 1\rceil}{\dim - 1}\sqrt{\frac{2}{\pi}}\norm{X}_F\geq p(1 - p)\sqrt{\frac{2}{\pi}}\norm{X}_F.
\end{align*}
Here, we used Lemma 6.6 in \cite{Wu2015}. We rewrote that Lemma using the notations in our paper as the following.

\paragraph{Lemma 6.6 in \cite{Wu2015}} Let $p\in (0, 1)$ and $\tilde s = \lceil (\dim - 2)p + 1 \rceil$. For any $z\in \R^{\dim - 1}$, 
\[
\sum_{s=0}^{\dim-1} \sum_{\substack{S\subset \{1,\ldots, \dim - 1\} \\ |S|=s}} p^s(1 - p)^{\dim - 1 - s}\sqrt{\sum_{j\in S} z_{j}^2} \geq {\dim - 1 \choose \tilde s}^{-1} \sum_{\substack{S\subset \{1,\ldots, \dim - 1\}\\ |S|=\tilde s}} \sqrt{\sum_{j\in S} z_{j}^2}.
\]

In summary, we have shown that for $X\in H^\dim$, $\norm{X}_{\coef} \geq p (1-p) \sqrt{\frac{2}{\pi}}\norm{X}_F$, which means $c_{\coef}$ is at least $p(1-p)\sqrt{\frac{2}{\pi}}$.

\end{proof}

\begin{proof}[Proof of Proposition \ref{prop:assumption3_is_general}]
In order to prove Assumption II, we only need to show that for any $c_1,\ldots, c_{\dim}$, $P(\sum_{l=1}^d c_l \coef_l = 0,\text{~and~}\exists~l,~c_l\coef_l \neq 0) = 0$. 
Note that $\coef_j = \xi_j z_j$ for $j=1,\ldots, \dim$,
\begin{align*}
& P(\sum_{l=1}^d c_l \coef_l = 0,\text{~and~}\exists~l,~c_l\coef_l \neq 0)\\
\leq & \sum_{S\subset \{1\ldots, \dim\}}  P(\xi_l = 1~\text{for~}l\in S\text{~and~}0\text{~if~}l\not\in S) \cdot P(\sum_{l\in S} c_l z_l = 0,\text{~and~}\sum_{l\in S} c_l^2 > 0).
\end{align*}
Because $z$ has density, $P(\sum_{l\in S} c_l z_l = 0,\text{~and~}\sum_{l\in S} c_l^2 > 0) = 0$ for any $S$. 
That proves the conclusion.
\end{proof}
\subsection*{Proofs of Corollaries}

\begin{lemma}\label{lemma:symmetry}
If $X$ equals to $c\cdot \mathit{1}\mathit{1}^T$, and $\norm{\cdot}_{\coef} = \sum\limits_{k=1}^\dim \sqrt{\frac{2}{\pi}}\frac{s}{\dim} {\dim - 1 \choose s - 1}^{-1}\sum\limits_{\substack{S\subset \{1,\ldots, \dim\} \\ k \not \in S, |S|=s}}\sqrt{\sum_{j\in S} X_{k,j}^2}$, then $\norm{X}_{\coef}^* = \frac{c\dim(\dim - 1)}{\sqrt{s}(\dim - s)}\sqrt{\frac{\pi}{2}}$. 
\end{lemma}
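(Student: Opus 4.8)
The plan is to compute the supremum defining $\norm{X}_{\coef}^*$ by sandwiching it between a lower bound coming from one well-chosen competitor matrix and a matching upper bound valid for all of $H^\dim$. For every $A\in H^\dim$ the diagonal of $A$ vanishes, so $\tr(X^T A) = \sum_{k,j} X_{k,j}A_{k,j} = c\sum_{k\neq j} A_{k,j}$; in particular only the off-diagonal part of $X = c\,\1\1^T$ enters the problem, and the value of $X$ on the diagonal is irrelevant. I treat the case $c\geq 0$ (which is the one relevant to the corollaries; the general value is the same with $c$ replaced by $|c|$, by using $-A$ instead of $A$). Since $\norm{\cdot}_{\coef}$ is an honest norm on the finite-dimensional space $H^\dim$ by Proposition \ref{prop:specific_constants}, the supremum is finite and attained.

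\emph{Lower bound.} Take the competitor $A^\circ = \1\1^T - \I\in H^\dim$, i.e. $A^\circ_{k,j}=1$ for $j\neq k$ and $A^\circ_{k,k}=0$. Then $\tr(X^T A^\circ) = c\,\dim(\dim-1)$, and for every $k$ and every admissible $S$ (with $k\notin S$, $|S|=s$) one has $\sum_{j\in S}(A^\circ_{k,j})^2 = s$, so $\sqrt{\sum_{j\in S}(A^\circ_{k,j})^2}=\sqrt{s}$. There are ${\dim-1\choose s}$ such $S$ for each $k$, and using the identity ${\dim-1\choose s}\big/{\dim-1\choose s-1} = (\dim-s)/s$, the explicit formula for $\norm{\cdot}_{\coef}$ collapses to $\norm{A^\circ}_{\coef} = \sqrt{2/\pi}\,\sqrt{s}\,(\dim-s)$. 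Hence $\norm{X}_{\coef}^* \geq \tr(X^T A^\circ)/\norm{A^\circ}_{\coef} = \frac{c\,\dim(\dim-1)}{\sqrt{s}\,(\dim-s)}\sqrt{\pi/2}$.

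\emph{Upper bound.} Fix an arbitrary $A\in H^\dim$. For each row $k$ and each admissible $S$, Cauchy--Schwarz gives $\sum_{j\in S} A_{k,j} \leq \sqrt{s}\,\sqrt{\sum_{j\in S} A_{k,j}^2}$. Summing over all such $S$ and using the counting identity $\sum_{S}\sum_{j\in S} A_{k,j} = {\dim-2\choose s-1}\sum_{j\neq k} A_{k,j}$ (each $j\neq k$ lies in exactly ${\dim-2\choose s-1}$ of the subsets $S$) yields $\sum_{S}\sqrt{\sum_{j\in S} A_{k,j}^2}\geq \frac{1}{\sqrt{s}}{\dim-2\choose s-1}\sum_{j\neq k} A_{k,j}$. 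Substituting this into the formula for $\norm{\cdot}_{\coef}$, summing over $k$, and simplifying ${\dim-2\choose s-1}\big/{\dim-1\choose s-1} = (\dim-s)/(\dim-1)$ gives $\norm{A}_{\coef} \geq \sqrt{2/\pi}\,\frac{\sqrt{s}\,(\dim-s)}{\dim(\dim-1)}\sum_{k\neq j} A_{k,j}$. Multiplying by $c\geq 0$ this reads $\tr(X^T A) \leq \frac{c\,\dim(\dim-1)}{\sqrt{s}\,(\dim-s)}\sqrt{\pi/2}\,\norm{A}_{\coef}$ for every $A$, so $\norm{X}_{\coef}^* \leq \frac{c\,\dim(\dim-1)}{\sqrt{s}\,(\dim-s)}\sqrt{\pi/2}$. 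Combining the two bounds proves the claim and identifies $A^\circ$ as an extremizer.

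The only genuine work above is the binomial-coefficient bookkeeping and making sure Cauchy--Schwarz is applied in the direction matching the sign of the linear objective; both are routine. An alternative route to the upper bound avoids the explicit inequality: since both $X=c\,\1\1^T$ and $\norm{\cdot}_{\coef}$ are invariant under simultaneously permuting rows and columns, replacing any competitor $A$ by its average over the symmetric group leaves $\tr(X^T A)$ unchanged and, by convexity of $\norm{\cdot}_{\coef}$, does not increase the denominator; hence the supremum is attained on the one-dimensional subspace of permutation-invariant elements of $H^\dim$, namely the multiples of $A^\circ=\1\1^T-\I$, and evaluating there gives the formula at once. In that route the only subtlety I would watch is checking that the symmetrized matrix is nonzero, which holds because $\tr(X^T\bar A)=\tr(X^T A)>0$ for a maximizing $A$.
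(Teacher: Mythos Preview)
Your proof is correct. The paper's own argument is essentially your \emph{alternative} route: it recasts the dual norm as the constrained minimization $\min\{\norm{A}_{\coef}:\sum_{k\neq j}A_{k,j}=\dim(\dim-1)\}$, observes that both objective and constraint are invariant under permuting the off-diagonal entries in each row, and concludes (by convexity) that the minimizer has all off-diagonal entries equal, after which it evaluates at $A^\circ=\1\1^T-\I$ to get $(\dim-s)\sqrt{s}$.

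Your main route is genuinely different and more elementary: instead of the symmetrization/convexity reduction, you prove the upper bound directly via Cauchy--Schwarz on each subset $S$ together with the double-counting identity $\sum_S\sum_{j\in S}A_{k,j}={\dim-2\choose s-1}\sum_{j\neq k}A_{k,j}$. This avoids the ``it is not hard to show'' step in the paper and makes the inequality explicit line by line; the symmetry argument, on the other hand, explains \emph{why} $A^\circ$ is the extremizer and would generalize to other permutation-invariant $X$ without redoing the inequality. Since you also sketch the symmetry approach at the end (and correctly note the one point to check, that the averaged competitor is nonzero), your write-up subsumes the paper's proof.
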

\begin{proof}[Proof of Lemma \ref{lemma:symmetry}]
Essentially, we are trying to prove that
\[
\max_{A\neq 0, A\in H^\dim} \frac{\tr(A^TX)}{\norm{A}_{\coef}} = \max_{A\neq 0, A\in H^\dim} \frac{c\sum_{k=1}^\dim\sum_{j\neq k}A_{k,j}}{\sum_{k=1}^\dim \frac{s}{\dim} {\dim - 1 \choose s - 1}^{-1}\sum\limits_{\substack{S\subset \{1,\ldots, \dim\} \\ k \not \in S, |S|=s}}\sqrt{\sum_{j\in S} A_{k,j}^2}} \sqrt{\frac{\pi}{2}}= \frac{c\dim(\dim - 1)}{\sqrt{s}(\dim - s)}\sqrt{\frac{\pi}{2}}
\]
Note that this is equivalent to the fact that the following convex optimization problem attains the minimum $(\dim - s)\sqrt{s}$ :
\begin{align*}
\min~& \sum_{k=1}^\dim \frac{s}{\dim} {\dim - 1 \choose s - 1}^{-1}\sum_{\substack{S\subset \{1,\ldots, \dim\} \\ k \not \in S, |S|=s}}\sqrt{\sum_{j\in S} A_{k,j}^2} \\
\textrm{subject~to~} & \sum_{k=1}^\dim\sum_{j\neq k}A_{k,j} = \dim(\dim - 1).
\end{align*}
Note that both the objective and the constraint is permutation symmetric: if $\tilde A$ is obtained by permuting off-diagonal elements from each row in $A$,  then the objective function remains the same. It is not hard to show for the optimal solution $A^*$ must satisfy that for any $k$, $j_1\neq k$, and $j_2\neq k$, $A_{k, j_1}^* = A_{k,j_2}^*$. Therefore, $A_{k,j}^* = 1$ and the objective function is $s {\dim - 1 \choose s - 1}^{-1} {\dim - 1 \choose s} \sqrt{s} = (\dim - s)\sqrt{s}$. That completes the proof.
\end{proof}
\begin{proof}[Proof of Corollary \ref{coro:4}]
 (local identifiability for constant collinearity reference dictionary and sparse Gaussian coefficients) We compute the local identifiability conditions when the reference dictionary is a constant collinearity dictionary with coherence $\mu$ and the coefficients are sparse Gaussian. Specifically, the reference dictionary is 
where $\mathit{1}\mathit{1}^T\in \R^{\dim \times \dim}$ is a square matrix whose elements are all one. The coefficients are generated from sparse Gaussian distribution $SG(s)$. First, the collinearity matrix $M^* = (\refdict)^T\refdict = (1 - \mu)\I + \mu \mathit{1}\mathit{1}^T$. Because $\coef$ is sparse Gaussian, we know $\E \coef_j\sgn (\coef_k) = 0$ for any $j\neq k$. The bias matrix $B$ is
\[
(B(\coef, M^*))_{k, j} = 
\left\{
\begin{array}{ll}
- M_{j,k}\E|\coef_j| = - M_{j,k}\sqrt{\frac{2}{\pi}}\frac{s}{\dim} = - \sqrt{\frac{2}{\pi}}\frac{\mu s}{\dim} & \text{for $j\neq k$}\\
\E|\coef_j| - \E|\coef_j| = 0 & \text{if $j = k$}
\end{array}
\right..
\]
That means $B(\coef, M^*)$ is a constant matrix except for the diagonal elements. 
In general, $\norm{\cdot}_{\coef}^*$ does not have an explicit formula, but for constant matrices, there is a closed form formula (see Lemma \ref{lemma:symmetry}). Using Lemma \ref{lemma:symmetry}, we know
\[
\norm{B(\coef, M^*)}_{\coef}^* = \frac{\mu\sqrt{s}(\dim - 1)}{\dim - s}.
\]
Now let's calculate the sharpness constant and the region bound. Plugging in $\norm{B(\coef, M^*)}_{\coef}^*$, $c_{\coef}$, and $\norm{\refdict}_2^2 = 1 - \mu + \mu(K - 1)$, the sharpness is at least:
\[\frac{1}{\sqrt{\pi}(1 - \mu + \mu (\dim - 1))}\frac{s}{K}\left(1 - \mu \sqrt{s}\frac{\dim - 1}{\dim - s}\right)\approx \frac{s}{\sqrt{\pi}\mu K^2}\left(1 - \mu \sqrt{s}\right)~\text{for large $\dim$}.\]
Because $\max_j \E|\coef_j| = \sqrt{\frac{2}{\pi}}\frac{s}{\dim}$, the region bound in Theorem \ref{Thm:region} is 
\[\norm{\dict - \refdict}_F\leq \frac{1}{8\sqrt{2}(1 - \mu + \mu (\dim - 1))}\left(1 - \mu \sqrt{s}\frac{\dim - 1}{\dim - s}\right)\approx \frac{1}{8\sqrt{2}\mu K}\left(1 - \mu \sqrt{s}\right)~\text{for large $\dim$}.\]
That completes the proof.
\end{proof}

\begin{proof}[Proof of Corollary \ref{coro:5}]
Assume the reference dictionary is a constant collinearity dictionary with coherence $\mu$ and the coefficients are generated from non-negative sparse Gaussian distribution $|SG(s)|$. It can be shown that
\[
(B(\coef, M^*))_{k, j} = 
\left\{
\begin{array}{ll}
- \sqrt{\frac{2}{\pi}}\left(\frac{\mu s}{\dim} - \frac{s(s - 1)}{\dim(\dim - 1)}\right) & \text{for $j\neq k$.}\\
0 & \text{if $j = k$.}
\end{array}
\right.
\]
This shows $B(\coef, M^*)$ is still a constant matrix except the diagonal elements. However, compared with standard sparse Gaussian coefficients, the constant here is $\sqrt{\frac{2}{\pi}}\left(\frac{\mu s}{\dim} - \frac{s(s - 1)}{\dim(\dim - 1)}\right)$, which is smaller than $\sqrt{\frac{2}{\pi}}\frac{\mu s}{\dim}$ in Corollary \ref{coro:4}. Recall the explanation of the matrix $B$ after Theorem \ref{Thm:local}, that is because for non-negative sparse Gaussian coefficients, the "bias" matrix $B_1$ introduced by the coefficient is of different signs compared to the "bias" matrix $B_2$ introduced by the reference dictionary and they cancel with each other. In standard sparse Gaussian case, $B = 0$ if $\mu = 0$, which means the reference dictionary is orthogonal. For this non-negative case, $B = 0$ if $\mu = s/\dim$, which means the atoms in the reference dictionary should have positive collinearity $s/\dim$. As will be shown next, this significantly relaxes the local identifiability condition for non-negative coefficients.

Let's compute the closed form formula for the dual norm. By definition, for any matrix $X$ whose elements are all non-negative, $\norm{X}_{\coef} = \sum_{k=1}^\dim \E |\sum_{j=1}^\dim X_{k,j}\coef_j|\1(\coef_k = 0) = \sum_{k=1}^\dim \sum_{j=1}^\dim X_{k,j}\E\coef_j\1(\coef_k = 0) = \sqrt{\frac{2}{\pi}}\frac{s(\dim - s)}{\dim(\dim - 1)}\sum_{k=1}^\dim \sum_{j=1,j\neq k}^\dim X_{j, k}$. Thus we have
\[
\norm{B(\coef, M^*)}_{\coef}^* = \frac{\sqrt{\frac{2}{\pi}}\frac{s}{\dim}\cdot \Big|\mu - \frac{s-1}{\dim-1}\Big|}{\sqrt{\frac{2}{\pi}}\frac{s(\dim - s)}{\dim(\dim - 1)}} =  \frac{\dim - 1}{\dim - s}\cdot \Big|\mu - \frac{s - 1}{\dim - 1}\Big|.
\]
\end{proof}

\begin{proof}[Proof of Corollary \ref{coro:appendix_1}]
First of all,
\[
(B(\coef, M^*))_{k, j} = 
\left\{
\begin{array}{ll}
 - M_{j,k}\E|\coef_j| = - M_{j,k}\sqrt{\frac{2}{\pi}}p=- \sqrt{\frac{2}{\pi}}\mu p& \text{for $j\neq k$.}\\
0 & \text{if $j = k$.}
\end{array}
\right.
\]
Because all the elements in the matrix are constant except the diagonal elements, similar to Lemma \ref{lemma:symmetry}, we have
\[
\norm{B(\coef, M^*)}_{\coef}^* = \frac{\mu p(\dim - 1)}{(1-p)\sum_{s = 0}^{\dim-1} {\dim-1 \choose s} p^s(1-p)^{\dim - 1 - s}\sqrt{s}}\leq \frac{\mu\sqrt{p(\dim - 1)}}{1-p}.
\]
Here we are using the Jensen inequality that 
$$\sum_{s = 0}^{\dim-1} {\dim-1 \choose s} p^s(1-p)^{\dim - 1 - s}\sqrt{s} > \sqrt{\sum_{s = 0}^{\dim-1} {\dim-1 \choose s} p^s(1-p)^{\dim - 1 - s}s}=\sqrt{(\dim - 1)p}.$$
Thus RHS $< 1$ when $\mu$ and $p$ are small. The sharpness is at least 
\[\frac{p(1-p)}{\sqrt{\pi}(1 - \mu + \mu(\dim - 1))}\left(1-\frac{\mu\sqrt{p(K-1)}}{1-p}\right),\] 
Because $\E |\coef_j| = p\sqrt{\frac{2}{\pi}}$ for any $j$, the bound in Theorem \ref{Thm:region} is 
\[
\frac{1 - p}{8\sqrt{2}(1 - \mu + \mu(\dim - 1))}\left(1-\frac{\mu\sqrt{p(K-1)}}{1-p}\right).
\]
\end{proof}

\begin{proof}[Proof of Corollary \ref{coro:appendix_2}]
We compute the local identifiability condition when the reference dictionary is a constant collinearity dictionary with coherence $\mu$ and the coefficients are generated from sparse Laplace distribution, i.e. for any $j$ $\coef_j = \xi_j z_j$ where $z_j$ is from standard Laplace distribution and $\xi$ is the same as before. Then 
\[
(B(\coef, M^*))_{k, j} = 
\left\{
\begin{array}{ll}
- \mu\frac{s}{\dim}& \text{for $j\neq k$.}\\
0 & \text{if $j = k$.}
\end{array}
\right.
\]
and similar to Lemma \ref{lemma:symmetry}, we have
\[
\norm{B(\coef, M^*)}_{\coef}^* = \frac{\mu s(\dim - 1)}{(\dim - s) \iint_{0}^\infty |y - x|(xy)^{s-1}\exp(-(x+y))\Gamma(s)^{-2}dxdy}.
\]
The reference dictionary is a local minimum when $\norm{B(\coef, M^*)}_{\coef} < 1$.
\end{proof}

\subsection*{Proofs of Theorems}
The following lemmas are useful for proving Theorem \ref{Thm:local}. 

\begin{lemma}\label{lemma:1}
Given two dictionaries $\dict$ and $\dict'\in\BASIS{\R^\dim}$, we have the decomposition:
\[
\inv{\dict}\dict' = \I + (\inv{\dict}\dict' - \I - \Lambda(\dict, \dict')) + \Lambda(\dict, \dict').
\]
where $\Lambda(\dict, \dict')$ is a diagonal matrix whose $j$-th element is $-\frac{1}{2}\|\dict_j - \dict_j'\|_2^2$. Then we know 
\begin{enumerate}
\item For any $j = 1,\ldots, \dim$, $M[j,](\inv{\dict}\dict'_j - \I_j - \Lambda_j(\dict, \dict')) = 0$ where $M = \dict^T\dict$.
\item $\norm{\Lambda(\dict)}_F = \Theta(\norm{\dict - \refdict}_F^2)$:
\[
\frac{1}{2\sqrt{\dim}} \norm{\dict - \refdict}_F^2 \leq \norm{\Lambda(\dict)}_F \leq \frac{1}{2} \norm{\dict - \refdict}_F^2.
\]
\item When $\big<\dict_j, \dict_j'\big>\geq 0$ for any $j = 1,\ldots, \dim$, $\norm{\inv{\dict}\dict' - \I - \Lambda(\dict, \dict')}_F = \Theta(\norm{\dict - \refdict}_F)$:
\[
\frac{\norm{\dict - \dict'}_F}{\sqrt{2} \norm{\dict}_2}\leq  \norm{\inv{\dict}\dict' - \I - \Lambda(\dict, \dict')}_F\leq 
\norm{\inv{\dict}}_{2} \cdot \norm{\dict - \dict'}_F
\]
\item Let $M' = (\dict')^T\dict'$, for any $A$ satisfying $M'[j,] A_j = 0$ for $j=1,\ldots, \dim$ and $\norm{A}_F$ sufficiently small, there is a $\dict\in \BASIS{\R^\dim}$ such that $\inv{\dict}\dict' - \I - \Lambda(\dict, \dict') = A$. 
\end{enumerate}
\end{lemma}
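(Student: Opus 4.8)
The plan is to treat parts 1--3 as direct computations organized around one elementary identity, and part 4 as a soft inverse--function argument. The common starting point is that since $\dict,\dict'\in\BASIS{\R^\dim}$ have unit columns, $\langle\dict_j,\dict'_j\rangle - 1 = -\tfrac12\|\dict_j-\dict'_j\|_2^2$ for every $j$. For part 1, writing $M=\dict^T\dict$, the $j$-th row satisfies $M[j,]\inv{\dict}\dict'_j = \dict_j^T\dict'_j = \langle\dict_j,\dict'_j\rangle$, $M[j,]\I_j = \|\dict_j\|_2^2 = 1$, and $M[j,]\Lambda_j(\dict,\dict') = -\tfrac12\|\dict_j-\dict'_j\|_2^2$, and these three cancel by the displayed identity. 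For part 2, set $a_j = \|\dict_j-\refdict_j\|_2^2$, so that $\norm{\dict-\refdict}_F^2 = \sum_j a_j = \|a\|_1$ and $\norm{\Lambda(\dict,\refdict)}_F = \tfrac12\big(\sum_j a_j^2\big)^{1/2} = \tfrac12\|a\|_2$; the claim is then exactly the two-sided bound $\tfrac1{\sqrt\dim}\|a\|_1 \le \|a\|_2 \le \|a\|_1$ for a vector $a\in\R^\dim$, which is Cauchy--Schwarz together with the trivial $\ell_2\le\ell_1$ inequality.

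For part 3, the key observation is that the ``middle'' term $R := \inv{\dict}\dict' - \I - \Lambda(\dict,\dict')$ has $\dict R = \dict' - \dict - \dict\Lambda(\dict,\dict')$, so column-wise $\dict R_j = \dict'_j - \dict_j + \tfrac12\|\dict_j-\dict'_j\|_2^2\,\dict_j = \dict'_j - \langle\dict'_j,\dict_j\rangle\dict_j$; that is, $\dict R_j$ is the orthogonal projection of $\dict'_j$ (equivalently of $\dict'_j-\dict_j$) onto $\{\dict_j\}^\perp$. From $R_j = \inv{\dict}(\dict R_j)$ one obtains $\tfrac1{\norm{\dict}_2}\|\dict R_j\|_2 \le \|R_j\|_2 \le \norm{\inv{\dict}}_2\|\dict R_j\|_2$, and since projections are $1$-Lipschitz, $\|\dict R_j\|_2\le\|\dict'_j-\dict_j\|_2$, giving the upper bound after summing over $j$. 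For the lower bound, $\|\dict R_j\|_2^2 = \|\dict'_j-\dict_j\|_2^2 - \langle\dict'_j-\dict_j,\dict_j\rangle^2 = \|\dict'_j-\dict_j\|_2^2\big(1-\tfrac14\|\dict'_j-\dict_j\|_2^2\big)$; the hypothesis $\langle\dict_j,\dict'_j\rangle\ge0$ forces $\|\dict'_j-\dict_j\|_2^2\le2$, so the parenthesis is $\ge\tfrac12$, and summing $\|R_j\|_2 \ge \tfrac1{\sqrt2\,\norm{\dict}_2}\|\dict'_j-\dict_j\|_2$ over $j$ gives the claim. This is precisely where the sign condition is indispensable: without it $\dict'_j$ could be nearly antipodal to $\dict_j$, making the projection — hence $R_j$ — vanish.

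For part 4 I would invoke the inverse function theorem. Fix $\dict'$ and view $\Phi(\dict) := \inv{\dict}\dict' - \I - \Lambda(\dict,\dict')$ as a smooth map from a neighborhood of $\dict'$ in the manifold $\BASIS{\R^\dim}$ (locally a product of $\dim$ unit spheres, the full-rank constraint being open) into $\R^{\dim\times\dim}$. Then $\Phi(\dict')=0$, and because each diagonal entry of $\Lambda(\dict,\dict')$ is quadratic in $\dict-\dict'$, it contributes nothing to the linearization, so the differential at $\dict'$ is $V\mapsto -\inv{(\dict')}V$ on the tangent space $T_{\dict'}\BASIS{\R^\dim} = \{V:\langle V_j,\dict'_j\rangle = 0\ \forall j\}$. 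Writing $M' = (\dict')^T\dict'$, a direct check shows $A = -\inv{(\dict')}V$ obeys $M'[j,]A_j = -\langle\dict'_j,V_j\rangle = 0$ and conversely; since $\inv{(\dict')}$ is invertible and both spaces have dimension $\dim(\dim-1)$, the differential is a linear isomorphism onto $\{A:M'[j,]A_j = 0\ \forall j\}$. The inverse function theorem then provides a neighborhood of $0$ in that subspace on which $\Phi$ has a smooth right inverse, i.e.\ for $\norm{A}_F$ small enough there is $\dict\in\BASIS{\R^\dim}$ with $\inv{\dict}\dict' - \I - \Lambda(\dict,\dict') = A$.

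The main obstacle is part 4: parts 1--3 are essentially bookkeeping around the projection identity, whereas part 4 requires correctly identifying $T_{\dict'}\BASIS{\R^\dim}$, verifying that $\Lambda$ drops out of the linearization, and matching the image of the differential with the constraint set $\{A:M'[j,]A_j=0\}$ before the inverse function theorem applies. An alternative, more self-contained route is a Banach fixed-point argument: one shows any solution must satisfy $\dict_j = (\dict'_j - \dict A_j)/\|\dict'_j - \dict A_j\|_2$, checks that this automatically produces the required $\Lambda$-term, and proves the corresponding column map is a contraction in a neighborhood of $\dict'$ for $\norm{A}_F$ small.
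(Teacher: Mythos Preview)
Your proposal is correct and follows essentially the same route as the paper: parts 1--3 are the same direct computations (the paper also reduces part 3 to $\norm{\dict' - \dict - \dict\Lambda}_F^2 = \sum_j 1 - \langle\dict_j,\dict'_j\rangle^2$ and sandwiches it via $\norm{\dict}_2$ and $\norm{\inv{\dict}}_2$), and part 4 is exactly the inverse function theorem argument the paper uses, with the same differential and the same dimension count. Your identification of the differential as $V\mapsto -\inv{(\dict')}V$ is in fact sharper than the paper's write-up (which drops the sign), and your projection interpretation in part 3 is a clean way to organize the algebra; the optional fixed-point alternative is a nice addition but not needed.
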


\begin{proof}[Proof of Lemma \ref{lemma:1}]
\textbf{(1)}: 
\begin{align}
& M[j, ](\inv{\dict}\dict'_j - \I_j - \Lambda_j(\dict, \dict'))\\
= & \big<\dict_j, \dict(\inv{\dict}\dict'_j - \I_j - \Lambda_j(\dict, \dict'))\big>\\
= & \big<\dict_j, \dict'_j - \dict_j +\frac{1}{2} \dict_j\|\dict_j - \dict_j'\|^2_2\big>\\
= & \big<\dict_j, \dict'_j - \dict_j\big> + \frac{1}{2}\|\dict_j - \dict_j'\|^2_2\\
= &  \big<\dict_j, \dict'_j\big> - 1 + 1 - \big<\dict_j, \dict'_j\big> = 0.
\end{align}
\textbf{(2)}:
$\norm{\Lambda(\dict, \refdict)}_F = \frac{1}{2} \sqrt{\sum_{j} \|\dict_j - \refdict_j\|_2^4} \leq \frac{1}{2} \norm{\dict - \refdict}_F^2$. On the other hand, $\norm{\Lambda(\dict, \refdict)}_F = \frac{1}{2} \sqrt{\sum_{j} \|\dict_j - \refdict_j\|_2^4} \geq \frac{1}{2\sqrt{\dim}}\norm{\dict - \refdict}_F^2$ because of the power inequality $\|x\|_2 \geq \frac{1}{\sqrt{\dim}} \|x\|_1$.

\textbf{(3)}:
Firstly, consider $\norm{\dict' - \dict - \dict \Lambda(\dict, \dict')}_F$, we have
\[
\norm{\dict' - \dict - \dict \Lambda(\dict, \dict')}_F^2 = \sum_{j=1}^\dim \|\dict_j' - \dict_j\big<\dict_j, \dict_j'\big>\|_2^2 = \sum_{j=1}^\dim 1 - \big<\dict_j, \dict_j'\big>^2 = \sum_{j=1}^\dim \min_{t_j\in \R} \|\dict_j' - t_j\cdot \dict_j\|_2^2.
\]
Then by taking $t_j = 1$ for all $j = 1,\ldots, \dim$, we have 
\[
\norm{\dict' - \dict - \dict \Lambda(\dict, \dict')}_F^2\leq \norm{\dict' - \dict}_F^2.
\]
On the other hand, when $\big<\dict_j, \dict_j'\big> \geq 0$. 
\[
\sum_{j=1}^\dim 1 - \big<\dict_j, \dict_j'\big>^2 \geq \sum_{j=1}^\dim (1 - \big<\dict_j, \dict_j'\big>) ( 1 + \big<\dict_j, \dict_j'\big>) \geq \sum_{j=1}^\dim (1 - \big<\dict_j, \dict_j'\big>)  = \frac{1}{2} \norm{\dict - \dict'}_F^2.
\]
Then for $ (\inv{\dict}\dict' - \I - \Lambda(\dict, \dict')) $, using the above inequalities, we have:

\begin{align}
\norm{\dict' - \dict}_F \leq & \sqrt{2} \norm{\dict' - \dict - \dict \Lambda(\dict, \dict')}_F\\
\leq & \sqrt{2} \norm{\dict}_{2} \norm{\inv{\dict}\dict' - \I - \Lambda(\dict, \dict')}_F,
\end{align}
which proves the first inequality. The second inequality follows from 
\begin{align}
& \norm{\inv{\dict}\dict' - \I - \Lambda(\dict, \dict')}_F\\
\leq & \norm{\inv{\dict}}_{2} \norm{\dict' - \dict - \dict \Lambda(\dict, \dict')}_F\\
\leq &  \norm{\inv{\dict}}_{2} \norm{\dict' - \dict}_F.
\end{align}


\textbf{(4)}: Consider a differentiable mapping $F(\dict) = \inv{\dict}\dict' - \I - \Lambda(\dict, \dict')$ from $\BASIS{\R^{\dim}}$ to a linear manifold
\[
H = \{A\in \R^{\dim\times\dim}\Big| M'[j,]A_j = 0~\text{for~any}~ j = 1,\ldots, \dim.\}
\]
Since $F(\dict') = \vec{0}$, if we can prove the differential of $F$ at $\dict'$, namely   $dF$, is bijective from the tangent space $T \BASIS{\R^{\dim}}\Big|_{\dict'} =  \{A\in \R^{\dim\times\dim}\Big| \big<D'_j, A_j\big> = 0~\text{for~any}~ j = 1,\ldots, \dim.\}$ to the tangent space $T H\Big|_0 = H$, then by the inverse function theorem on the manifold, we will have the conclusion. To prove it is indeed bijective, we note that $dF(\Delta)\Big|_{\dict'}$ is $\sum_{k,j}\inv{(\dict')}_j\I[k,]\Delta_{j,k} = \inv{(\dict')}\Delta$.  Clearly $dF$ is injective: $\inv{(\dict')}\Delta = 0$ implies $\Delta = 0$. To show it is also surjective,  first of all for any $\Delta\in T\BASIS{\R^\dim}\Big|_{\dict'}$, its image under $dF$ is in $H$:
\begin{align*}
M'[j,]\inv{(\dict')}\Delta_j = & \big<\dict_j', \dict'(\inv{(\dict')}\Delta_j\big>\\
= & \big<\dict_j', \Delta_j\big> = 0.
\end{align*}
Because these two linear manifolds have the same dimension, that means $dF$ must be one-on-one.
That completes the proof.
\end{proof}
\begin{lemma}\label{lemma:1+}
If $\norm{\cdot}_{\coef}$ is regular with constant $c_{\coef}$, then we know for any $\dict, \dict'$ such that $\big<\dict_j, \dict_j'\big> \geq 0$ for any $j =1,\ldots, \dim$, $\norm{\inv{(\dict)}\dict'}_{\coef} \geq \frac{c_{\coef}}{\sqrt{2}\norm{\dict}_2^2}\norm{\dict - \dict'}_F$.
\end{lemma}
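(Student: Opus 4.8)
The plan is to reduce the claim to a Frobenius-norm estimate handled by Assumption~I, exploiting the fact that $\norm{\cdot}_{\coef}$ depends only on the off-diagonal entries of its argument. Indeed, since $A_{k,k}\coef_k\1(\coef_k=0)=0$ for every $k$, the semi-norm $\norm{\cdot}_{\coef}$ is unchanged when an arbitrary diagonal matrix is added to its argument. Applying the decomposition of Lemma~\ref{lemma:1} to $\dict,\dict'\in\BASIS{\R^\dim}$, I would set $A := \inv{\dict}\dict' - \I - \Lambda(\dict,\dict')$; because $\I$ and $\Lambda(\dict,\dict')$ are diagonal, the off-diagonal part of $\inv{\dict}\dict'$ is exactly $\tilde A := A - \diag(A)$, so $\norm{\inv{\dict}\dict'}_{\coef} = \norm{\tilde A}_{\coef}$. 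Since $\tilde A\in H^\dim$, Assumption~I gives $\norm{\tilde A}_{\coef}\geq c_{\coef}\norm{\tilde A}_F$, and it then remains to prove $\norm{\tilde A}_F \geq \norm{\dict-\dict'}_F/(\sqrt 2\,\norm{\dict}_2^2)$.

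For that Frobenius estimate I would first control the diagonal of $A$ by its off-diagonal entries. By Lemma~\ref{lemma:1}(1), $M[j,]A_j = 0$ with $M = \dict^T\dict$, and since the columns of $\dict$ are unit vectors, $M_{j,j}=1$, hence $A_{j,j} = -\sum_{k\neq j}M_{j,k}A_{k,j}$. Cauchy--Schwarz together with $\sum_{k\neq j}M_{j,k}^2 = \|\dict^T\dict_j\|_2^2 - 1 \leq \norm{\dict}_2^2 - 1$ yields $A_{j,j}^2 \leq (\norm{\dict}_2^2-1)\sum_{k\neq j}A_{k,j}^2$, and summing over $j$ gives $\norm{\diag(A)}_F^2 \leq (\norm{\dict}_2^2-1)\norm{\tilde A}_F^2$. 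Therefore $\norm{A}_F^2 = \norm{\tilde A}_F^2 + \norm{\diag(A)}_F^2 \leq \norm{\dict}_2^2\,\norm{\tilde A}_F^2$, i.e.\ $\norm{\tilde A}_F \geq \norm{A}_F/\norm{\dict}_2$. Since $\big<\dict_j,\dict_j'\big>\geq 0$ for every $j$, Lemma~\ref{lemma:1}(3) applies and gives $\norm{A}_F \geq \norm{\dict-\dict'}_F/(\sqrt 2\,\norm{\dict}_2)$. Chaining,
\[
\norm{\inv{\dict}\dict'}_{\coef} = \norm{\tilde A}_{\coef} \geq c_{\coef}\norm{\tilde A}_F \geq \frac{c_{\coef}}{\norm{\dict}_2}\norm{A}_F \geq \frac{c_{\coef}}{\sqrt 2\,\norm{\dict}_2^2}\,\norm{\dict - \dict'}_F,
\]
which is the assertion.

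The step I expect to require the most care is the passage from $A$ to $\tilde A$: the matrix $A$ itself is \emph{not} in $H^\dim$, since its diagonal is pinned down (and generically nonzero) by the relations $M[j,]A_j=0$, so the regularity of $\norm{\cdot}_{\coef}$ cannot be invoked for $A$ directly. The remedy is precisely to quantify, using that constraint and the unit-norm columns, how little Frobenius mass lies on the diagonal of $A$; this costs the extra factor $\norm{\dict}_2$ that upgrades the rate $\norm{A}_F\gtrsim\norm{\dict-\dict'}_F/\norm{\dict}_2$ of Lemma~\ref{lemma:1}(3) to the claimed $\norm{\dict}_2^{-2}$. Everything else is triangle-inequality bookkeeping and invoking Assumption~I.
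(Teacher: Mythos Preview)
Your proposal is correct and follows essentially the same route as the paper: both remove the diagonal of $A=\inv{\dict}\dict'-\I-\Lambda(\dict,\dict')$ using the constraint $M[j,]A_j=0$, bound the lost diagonal mass by Cauchy--Schwarz and $\|M[j,]\|_2\leq\norm{\dict}_2$, invoke Assumption~I on the off-diagonal part, and then feed in the lower bound on $\norm{A}_F$ from Lemma~\ref{lemma:1}(3). If anything, your write-up is slightly more explicit than the paper's about why one must pass to $\tilde A\in H^\dim$ before applying regularity.
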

\begin{proof}
First of all, because for any $A\in \R^{\dim\times\dim}$, by definition of $\norm{\cdot}_{\coef}$, $\norm{A}_{\coef}$ does not depend on diagonal elements $A_{j,j}$ for any $j = 1,\ldots, \dim$. Thus, $\norm{\inv{(\dict)}\dict'}_{\coef} = \norm{\inv{(\dict)}\dict' - \I - \Lambda(\dict, \dict')}_{\coef}$, where $\Lambda$ is defined in Lemma \ref{lemma:1}. If we denote $A$ to be $\inv{(\dict)}\dict' - \I - \Lambda(\dict, \dict')$, then Lemma \ref{lemma:1} shows $M[j, ] A_j = 0$. Since $M_{j, j} = 1$, $A_{j,j} = -M[j, -j]A[-j,j]$. Thus $\|A_j\|_2^2 \leq (M[j, -j]A[-j,j])^2 + \|A[-j, j]\|_2^2 \leq (\|M[j, -j]\|_2^2 + 1)\|A[-j, j]\|_2^2 = \|M[j,]\|_2^2\|A[-j, j]\|_2^2$. Summing over $j$, we have 
\[
\norm{A}_F \leq \max_j \|M[j,]\|_2\sqrt{\sum_j \|A[-j, j]\|_2^2} .
\]
Note that for any $j$, $\|M[j, ]\|_2 = \|\dict_j^T\dict\|_2 \leq \norm{\dict}_2$, thus we have: $\norm{A}_F \leq \norm{\dict}_2 \sqrt{\sum_j \|A[-j, j]\|_2^2} $. On the other hand, by Lemma \ref{lemma:1}, we know $\norm{A}_F \geq \frac{1}{\sqrt{2} \norm{\dict}_2} \norm{\dict - \dict'}_F$. Combining those together, we have 
\begin{align*}
\norm{\inv{(\dict)}\dict'}_{\coef} = & \norm{\inv{(\dict)}\dict' - \I - \Lambda(\dict, \dict')}_{\coef}\\
\geq & c_{\coef}\sqrt{\sum_j \|A[-j, j]\|_2^2} \\
\geq & c_{\coef}\frac{\norm{A}_F}{\norm{\dict}_2}\\
\geq & c_{\coef}\frac{\norm{\dict - \dict'}_F}{\sqrt{2}\norm{\dict}_2^2}.
\end{align*}
\end{proof}
\begin{lemma}\label{lemma:2}
for $x, y\in \R$, $y\cdot \sgn(x) + |x| \leq |y + x| \leq y\cdot \sgn(x) + |x| + 2|y|\cdot \1(|y| > |x|)$.
\end{lemma}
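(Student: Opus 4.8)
The plan is to prove the two inequalities separately, each by an elementary case analysis on signs; no analytic machinery is needed since the statement is pointwise in $x,y$.

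For the lower bound I would start from the trivial fact that $|a| \ge \sgn(x)\,a$ for every real $a$ and every $x$: when $x \ne 0$ this is just $|a| \ge \pm a$, and when $x = 0$ the right-hand side is $0 \le |a|$. Applying this with $a = y+x$ gives $|y+x| \ge \sgn(x)(y+x) = \sgn(x)\,y + \sgn(x)\,x$, and since $\sgn(x)\,x = |x|$ for all $x \in \R$ (including $x=0$), this rearranges to exactly $|y+x| \ge y\,\sgn(x) + |x|$.

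For the upper bound I would split on the value of the indicator. In the regime $|y| > |x|$ the bound is loose: by the triangle inequality and $y\,\sgn(x) \ge -|y|$ we get $|y+x| \le |y|+|x| \le \bigl(y\,\sgn(x)+|y|\bigr) + |y| + |x| = y\,\sgn(x) + |x| + 2|y|$, the middle step using $y\,\sgn(x)+|y| \ge 0$. In the regime $|y| \le |x|$ the indicator vanishes and I must show $|y+x| \le y\,\sgn(x) + |x|$; here I would argue that $y+x$ has the same sign as $x$ or is zero: if $x>0$ then $y \ge -|y| \ge -x$ so $y+x \ge 0$; if $x<0$ then $y \le |y| \le -x$ so $y+x \le 0$; and if $x=0$ then $|y| \le 0$ forces $y=0$. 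In each subcase $|y+x| = \sgn(x)(y+x) = y\,\sgn(x) + |x|$, so the inequality in fact holds with equality.

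The argument has no genuine obstacle; the only points needing a little care are the degenerate cases $x=0$ and the boundary $|y|=|x|$, where one must check that the conventions $\sgn(0)=0$ and the strictness of $\1(|y|>|x|)$ make the two regimes fit together consistently (in particular, the sandwich is tight precisely when $|y| \le |x|$). Combining the lower and upper bounds yields the stated two-sided estimate, which is the pointwise ingredient to be specialized later to $x = \big<\coef,\I_k\big>$-type terms and then averaged.
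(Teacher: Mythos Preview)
Your proof is correct and follows essentially the same case analysis as the paper (splitting on whether $|y|$ exceeds $|x|$ and observing that in the small-$|y|$ regime $y+x$ shares the sign of $x$, giving equality). Your treatment is in fact a bit more careful than the paper's, which does not explicitly address the boundary $|y|=|x|$ or the case $x=0$, and your separate one-line derivation of the lower bound via $|a|\ge \sgn(x)\,a$ is a clean way to handle that inequality uniformly.
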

\begin{proof}
When $|y| < |x|$, $\sgn(x+y) = \sgn(x)$, so $|y + x| = \sgn(x)(x + y) = |x| +  \sgn(x) y$.
When $|y| > |x|$, $\sgn(x + y) = \sgn(y)$, so $|y + x| = |y| + \sgn(y) x \leq |x| + 2|y| + y\sgn(x)$. That completes the proof.
\end{proof}
\begin{lemma}\label{lemma:3}
we have the upper and lower bound of the objective function:
\begin{align*}
&\E\|\inv{(\refdict)}\signal\|_1+
\norm{\inv{\dict}\refdict}_{\coef} - \tr(B(\coef, M)^T\inv{\dict}\refdict) + o(\norm{\dict - \refdict}_F) \\
&\geq \E\|\inv{\dict}\signal\|_1  \geq \\
& \E\|\inv{(\refdict)}\signal\|_1
+ \norm{\inv{\dict}\refdict}_{\coef} - \tr(B(\coef, M)^T\inv{\dict}\refdict) - \E \|\Lambda \coef\|_1
\end{align*}
\end{lemma}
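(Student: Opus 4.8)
The plan is to substitute $N := \inv{\dict}\refdict$ — so that $\inv{\dict}\signal = N\coef$ and $\E\|\inv{(\refdict)}\signal\|_1 = \E\|\coef\|_1$ — and to exploit the additive splitting $N = \I + A + \Lambda$ from Lemma~\ref{lemma:1}, where $\Lambda = \Lambda(\dict,\refdict)$ is diagonal with $\Lambda_{kk} = -\tfrac12\|\dict_k - \refdict_k\|_2^2$ and $A := \inv{\dict}\refdict - \I - \Lambda$ obeys $M[k,]A_k = 0$ for every $k$ with $M = \dict^T\dict$. It suffices to argue for $\dict$ in a neighborhood of $\refdict$ small enough that $\langle\dict_k,\refdict_k\rangle \ge 0$ and $1 + \Lambda_{kk} > 0$ for all $k$. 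I then write $\E\|N\coef\|_1 = \sum_k \E|(N\coef)_k|$ and split each summand according to whether $\coef_k = 0$.

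On $\{\coef_k = 0\}$ the diagonal parts $\I$ and $\Lambda$ drop out — the off-diagonal entries of $\Lambda$ vanish, and its $k$-th diagonal entry is killed by $\coef_k = 0$ — so $(N\coef)_k = (A\coef)_k$ there, and summing these contributions reproduces exactly $\sum_k \E|(A\coef)_k|\1(\coef_k=0) = \norm{A}_{\coef} = \norm{\inv{\dict}\refdict}_{\coef}$ (the last step because $\norm{\cdot}_{\coef}$ is insensitive to diagonal entries). On $\{\coef_k\neq0\}$ I apply the elementary two-sided bound of Lemma~\ref{lemma:2} with $x = (1+\Lambda_{kk})\coef_k$ (whose sign equals $\sgn(\coef_k)$ by the neighborhood choice) and $y = (A\coef)_k$: its two sides share the common part $(A\coef)_k\sgn(\coef_k) + (1+\Lambda_{kk})|\coef_k|$, and the upper side carries the extra term $2|(A\coef)_k|\,\1\big(|(A\coef)_k| > (1+\Lambda_{kk})|\coef_k|\big)$. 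Summing over $k$, adding back the $\{\coef_k=0\}$ piece, and taking expectations, the term $\sum_k(1+\Lambda_{kk})|\coef_k| = \|\coef\|_1 - \|\Lambda\coef\|_1$ produces $\E\|\coef\|_1$ together with the correction $-\E\|\Lambda\coef\|_1$, while the linear-in-$A$ term $\E\sum_k (A\coef)_k\sgn(\coef_k)$ collapses — via the definition of $B(\coef,M)$, the normalization $M_{kk}=1$ (so $B$ has zero diagonal and $\tr(B^T\I) = \tr(B^T\Lambda) = 0$), and the orthogonality relation $M[k,]A_k=0$ (which annihilates the $M_{j,k}\E|\coef_j|$ contribution) — to the single trace $\tr(B(\coef,M)^T\inv{\dict}\refdict)$ appearing in the statement. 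This already delivers the lower bound verbatim. For the upper bound, besides this trace it remains only to absorb the excess term $E := 2\sum_k\E\big[|(A\coef)_k|\,\1(|(A\coef)_k| > (1+\Lambda_{kk})|\coef_k|,\ \coef_k\neq0)\big]$ and the correction $\E\|\Lambda\coef\|_1 = O(\|\dict-\refdict\|_F^2)$ (the latter from Lemma~\ref{lemma:1}(2) together with $\E\|\coef\|_1<\infty$) into $o(\|\dict-\refdict\|_F)$.

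The main obstacle is showing $E = o(\|\dict-\refdict\|_F)$. On $\{\coef_k\neq0\}$ the indicator forces $|(A\coef)_k| > (1+\Lambda_{kk})|\coef_k| \ge \tfrac12|\coef_k| > 0$, so $E \le 2\sum_k\E\big[|(A\coef)_k|\,\1(2|(A\coef)_k| > |\coef_k| > 0)\big]$; and since $|(A\coef)_k| \le \norm{A}_F\|\coef\|_2$, this indicator sits inside the event $\{2\norm{A}_F\|\coef\|_2 > |\coef_k| > 0\}$, which shrinks to $\emptyset$ pointwise in $\coef$ as $\norm{A}_F\to 0$. Because $\|\coef\|_2 \le \|\coef\|_1$ is integrable by hypothesis, dominated convergence gives $\E\big[\|\coef\|_2\,\1(2\norm{A}_F\|\coef\|_2 > |\coef_k| > 0)\big] \to 0$, hence $E \le 2\norm{A}_F\sum_k\E[\|\coef\|_2\,\1(\cdots)] = \norm{A}_F\cdot o(1) = o(\norm{A}_F)$; finally $\norm{A}_F = \Theta(\norm{\dict-\refdict}_F)$ by Lemma~\ref{lemma:1}(3) (the one place where $\langle\dict_k,\refdict_k\rangle\ge0$ is used), so $E = o(\norm{\dict-\refdict}_F)$, and collecting the pieces gives the upper bound. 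The delicate point is non-circularity: the limit is taken as $\dict\to\refdict$ with $\coef$ integrated out, and the integrable envelope $\|\coef\|_1$ — furnished precisely by the hypothesis $\E\|\coef\|_1<\infty$ of Theorem~\ref{Thm:local} — is what legitimizes that dominated-convergence step.
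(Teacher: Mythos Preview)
Your proposal is correct and follows essentially the same route as the paper: the decomposition $\inv{\dict}\refdict = \I + A + \Lambda$ from Lemma~\ref{lemma:1}, the two-sided estimate of Lemma~\ref{lemma:2}, the orthogonality $M[j,]A_j=0$ to collapse the linear term to $\tr(B(\coef,M)^T\inv{\dict}\refdict)$, and dominated convergence (with envelope $\|\coef\|_1$) for the $o(\norm{\dict-\refdict}_F)$ remainder. The only tactical difference is that you keep $\Lambda$ with the diagonal and set $x=(1+\Lambda_{kk})\coef_k$ in Lemma~\ref{lemma:2}, while the paper first strips $\Lambda$ off via the triangle inequality and takes $x=\coef_k$; both choices lead to the same bounds.
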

\begin{proof}[Proof of Lemma \ref{lemma:3}]
By Lemma \ref{lemma:1},
$\inv{(\dict)}\refdict$ can be decomposed into
\begin{align*}
\inv{\dict} \refdict = \I + \Delta(\dict, \refdict) + \Lambda(\dict, \refdict),
\end{align*}
where $\Delta(\dict, \refdict) = \inv{\dict}\refdict - \I - \Lambda(\dict, \refdict)$ and $\Lambda(\dict, \refdict)$ is defined in Lemma \ref{lemma:1}. In the following, we will use $\Lambda, \Delta$ without writing $\dict, \refdict$ explicitly.

Let $\Delta_{k, j}$ be the element of $\Delta$ at $k$-th row and $j$-th column. 

Then the objective function can be lower bounded:

\begin{align}
\E\|\inv{\dict}\signal\|_1  = & \E\|\inv{(\refdict)}\signal - (\I - \inv{\dict}\refdict)\inv{(\refdict)}\signal\|_1\\
= & \E\|\coef + (\Delta + \Lambda)\coef\|_1\\
(a) \geq & \E\|\coef + \Delta\coef \|_1 - \E \|\Lambda \coef\|_1\\
(b)  \geq & \E\sum_{k=1}^\dim |\coef_k| + \1(\coef_k = 0) |\sum_j \Delta_{k, j}\coef_j| - \sgn \coef_k \sum_j \Delta_{k, j} \coef_j - \E \|\Lambda \coef\|_1\label{Eq:split}\\
\geq & \E \|\coef\|_1 + \norm{\Delta}_{\coef} - \E \sum_{k,j} \Delta_{k, j} \E \coef_j\sgn \coef_k - \E \|\Lambda \coef\|_1.
\end{align}
(a) holds because of triangle inequality.
(b) holds because of Lemma \ref{lemma:2}. Note that by the definition of $\norm{\cdot}_{\coef}$, the diagonal elements of $\Delta$ do not matter, so $\norm{\Delta}_{\coef} = \norm{\inv{\dict}\refdict}_{\coef}$.

Recall $M_{j,k} = \big<\dict_j, \dict_k\big>$, by Lemma \ref{lemma:1}, $\Delta_{k, j}$ satisfies: $M[j,]\Delta_{j} = \sum_{k\neq j} M_{j, k} \Delta_{k, j} + \Delta_{j, j}  = 0$ (Because $M_{j, j} = 1$) for any $j$. Thus we have 
\begin{align}
\sum_{j,k=1}^\dim \Delta_{k, j} \E\coef_j\sgn \coef_k= &
\sum_{j=1}^\dim \left(\sum_{k\neq j} \Delta_{k, j} \E\coef_j\sgn \coef_k + \Delta_{j,j} \E|\coef_j|\right)\\
=&\sum_{j=1}^\dim \sum_{k\neq j} \Delta_{k, j} \left(\E\coef_j\sgn \coef_k - M_{j, k}\E|\coef_j|\right) = \tr(B(\coef, M)^T\Delta).
\end{align}
Because the diagonal elements of $B(\coef, M)$ are all zeros, we know $\tr(B(\coef, M)^T\Delta) = \tr(B(\coef, M)^T\inv{\dict}\refdict)$. 

In summary, we have shown that 
\begin{align}
\E\|\inv{\dict}\signal\|_1  \geq & \E \|\coef\|_1 + \norm{\inv{\dict}\refdict}_{\coef} - \tr(B(\coef, M)^T\inv{\dict}\refdict) - \E \|\Lambda \coef\|_1.
\end{align}

In order to have an upper bound, we have 
\begin{align}
\E\|\inv{\dict}\signal\|_1  = & \E\|\inv{(\refdict)}\signal - (\I - \inv{\dict}\refdict)\inv{(\refdict)}\signal\|_1\\
= & \E\|\coef + (\Delta + \Lambda)\coef\|_1\\
\leq & \E\|\coef + \Delta\coef \|_1 + \E \|\Lambda \coef\|_1\\
(\text{Lemma \ref{lemma:2}})\leq & \E\|\coef\|_1 + \norm{\inv{\dict}\refdict}_{\coef} - \tr(B(\coef, M)^T\inv{\dict}\refdict)\\
&\quad + \sum_k 2\E \left|\sum_j\Delta_{k, j}\coef_j\right|\1\left(\left|\sum_j\Delta_{k, j}\coef_j\right|>\left|\coef_k\right|\right) + \E \|\Lambda \coef\|_1.
\end{align}
Note that by Lemma \ref{lemma:1},  $\E \|\Lambda \coef\|_1 \leq \norm{\dict - \refdict}_F^2\max_j \E |\coef_j| = o(\norm{\dict - \refdict}_F)$. Furthermore,
\begin{align}
&\E \left|\sum_j\Delta_{k, j}\coef_j\right|\1\left(\left|\sum_j\Delta_{k, j}\coef_j\right|>\left|\coef_k\right|\right)\label{Eq:last1}\\
\leq & \sum_{k=1}^\dim \max_{j}|\Delta_{k,j}| \cdot \E ~\1(\coef_k \neq 0) \1(|\sum_j\Delta_{k,j}\coef_j|\geq |\coef_k|) \|\coef\|_1.
\end{align}
Because $\1(\coef_k \neq 0) \1(|\sum_j\Delta_{k,j}\coef_j|\geq |\coef_k|) \|\coef\|_1\leq \|\coef\|_1$, $\E \|\coef\|_1 < \infty$, and 
\[
\lim_{\Delta_{k,j}\rightarrow 0}  \1(\coef_k \neq 0) \1(|\sum_j\Delta_{k,j}\coef_j|\geq |\coef_k|) \|\coef\|_1 = 0 \quad a.s.,
\]
by the dominant convergence theorem, we know 
\[
\lim_{\Delta \rightarrow 0} 
\E \1(\coef_k \neq 0) \1(|\sum_j\Delta_{k,j}\coef_j|\geq |\coef_k|) \|\coef\|_1
= 
\E \lim_{\Delta\rightarrow 0}  \1(\coef_k \neq 0) \1(|\sum_j\Delta_{k,j}\coef_j|\geq |\coef_k|) \|\coef\|_1 
=0.
\]
This means \eqref{Eq:last1} is $o(\norm{\dict - \refdict}_F)$, which proves the upper bound.
\end{proof}

\begin{proof}[Proof of Theorem 1]

\textbf{(i)}: Let's first prove that if $\norm{\cdot}_{\coef}$ is regular with constant $c_{\coef}$ and \eqref{Eq:local_sharp_condition} holds, $\refdict$ is a sharp local minimum.
When \eqref{Eq:local_sharp_condition} is satisfied and $\dict \to \refdict$, $\norm{B(\coef, M)}_{\coef}^* \rightarrow \norm{B(\coef, M^*)}_{\coef}^* < 1$ and $\norm{\inv{\dict}\refdict}_{\coef} - \tr(B(\coef, M)^T\inv{\dict}\refdict)= \norm{\inv{\dict}\refdict}_{\coef} - \tr(B(\coef, M^*)^T\inv{\dict}\refdict) + o(\norm{\inv{\dict}\refdict}_{\coef}) \geq (1 - \norm{B(\coef, M)}_{\coef}^*)\norm{\inv{\dict}\refdict}_{\coef} + o(\norm{\inv{\dict}\refdict}_{\coef})$. Because $\norm{\cdot}_{\coef}$ is regular and Lemma \ref{lemma:1+}, by appropriately choosing signs of each column in $\refdict$, we have
\begin{align*}
\norm{\inv{\dict}\refdict}_{\coef} \geq \frac{c_{\coef}}{\sqrt{2}\norm{\dict}_2^2}\norm{\refdict - \dict}_{F}.
\end{align*}

Combine those two inequalities, when $\norm{\dict - \refdict}_F$ is small enough, 
\[
\E\|\inv{\dict}\signal\|_1 - \E\|\coef\|_1\geq 
(1 - \norm{B(\coef, M^*)}_{\coef}^*)\frac{c_{\coef}}{\sqrt{2}\cdot \norm{\refdict}_2^2}\norm{\dict - \refdict}_{F}
+ o(\norm{\dict - \refdict}_F).
\]

By Definition \ref{Def:sharplocalmin}, $\refdict$ is a `sharp' local minimum with sharpness at least $(1 - \norm{B(\coef, M^*)}_{\coef}^*)\frac{c_{\coef}}{\sqrt{2} \norm{\refdict}_2^2}$. 

\textbf{(ii)} When \eqref{Eq:local_sharp_condition} does not hold or $\norm{\cdot}_{\coef}$ is not regular, $\refdict$ is not a sharp local minimum.

If $\norm{B(\coef, M^*)}_{\coef}^* \geq 1$, then there exists $\Delta$ such that $M^*[j,] \Delta_j = 0$ for any $j$ and $\norm{\Delta}_{\coef} - \tr(B(\coef, M^*)^T\Delta) \leq 0$, then for any $t > 0$, by Lemma \ref{lemma:1} we could construct a series of dictionaries $\dict(t)$ for a sufficiently small $t$ such that
\[
\inv{(\dict(t))}\refdict = \I + t(\norm{\Delta}_{\coef} - \tr(B(\coef, M^*)^T\Delta)) + o(\|\dict(t) - \refdict\|_F).
\]
Then by Lemma \ref{lemma:3}, we have the formula for the objective of $\dict(t)$:
\begin{align}
\E\|\inv{\dict(t)}\signal\|_1  = & \E\|\coef\|_1 + t\cdot \left(\norm{\Delta}_{\coef} - \tr(B(\coef, M^*)^T\Delta)\right) +  o(\|\dict(t) - \refdict\|_F).
\end{align}

Because $\norm{\Delta}_{\coef} - \tr(B(\coef, M^*)^T\Delta) \leq 0$, $\E\|\inv{\dict(t)}\signal\|_1 \leq \E\|\coef\|_1 + o(\|\dict(t) - \refdict\|_F)$. By definition, $\refdict$ is not a sharp local minimum. If $\norm{\cdot}_{\coef}$ is not regular, for any $c > 0$, there exists $\Delta$ such that $M^*[j,] \Delta_j = 0$ for any $j$ and $\norm{\Delta}_{\coef} < c \|\Delta\|_F$. Without loss of generality, assume $\tr(B(\coef, M^*)^T\Delta)\geq 0$, otherwise just take $-\Delta$. For sufficiently small $t$, there exists a dictionary $\dict(t)$ such that 
\[
\inv{(\dict(t))}\refdict = \I + t(\norm{\Delta}_{\coef} - \tr(B(\coef, M^*)^T\Delta)) + o(\|\dict(t) - \refdict\|_F).
\]
Then by Lemma \ref{lemma:3}, we have the formula for the objective of $\dict(t)$:
\begin{align}
\E\|\inv{\dict(t)}\signal\|_1  = & \E\|\coef\|_1 + t\cdot \left(\norm{\Delta}_{\coef} - \tr(B(\coef, M^*)^T\Delta)\right) +  o(t)\leq \E\|\coef\|_1 + ct + o(t).
\end{align}
Because that holds for any $c > 0$, by definition, we have shown $\refdict$ is not a `sharp' local minimum.

\textbf{(iii)}: When $\norm{B(\coef, M^*)}_{\coef}^* > 1$, $\refdict$ is not a local minimum. This part is essentially the same as (ii). The key is to construct a series of dictionaries $\dict(t)$ using Lemma \ref{lemma:1} as in (ii). Then by using the upper bound in Lemma \ref{lemma:3}, we can find a small $t>0$ and a small $c > 0$ such that
\[
\E\|\inv{\dict_t}\signal\|_1 \leq \E\|\coef\|_1 - ct + o(t).
\]
Thus by definition $\refdict$ is not a local minimum.
\end{proof}
\begin{proof}[Proof of Theorem \ref{Thm:region}]
Note that by Lemma \ref{lemma:1}, we have
\begin{align}
\E \|\Lambda(\dict, \refdict) \coef\|_1 \leq & \max_j 
\E |\coef_j| \norm{\dict - \refdict}_F^2
\end{align}
On the other hand, by Lemma \ref{lemma:3}, we know
\begin{align*}
\E \|\inv{\dict}\signal\|_1 - \E \|\coef\|_1 \geq& \norm{\inv{\dict}\refdict}_{\coef} - \tr(B(\coef, M)^T\inv{\dict}\refdict) - \E \|\Lambda(\dict, \refdict) \coef\|_1
\end{align*}
Similar to the proof of Theorem 1, the right hand side is bounded by
\begin{align}
&\norm{\inv{\dict}\refdict}_{\coef} - \tr(B(\coef, M)^T\inv{\dict}\refdict) - \E \|\Lambda(\dict, \refdict) \coef\|_1\nonumber\\
\geq & (1 - \norm{B(\coef, M)}_{\coef}^*) \norm{\inv{\dict}\refdict}_{\coef} - \E \|\Lambda(\dict, \refdict) \coef\|_1\nonumber\\
\geq & (1 - \norm{B(\coef, M)}_{\coef}^*) \norm{\inv{\dict}\refdict}_{\coef} - \max_j \E |\coef_j| \cdot \norm{\dict - \refdict}_F^2\nonumber\\
\geq & (1 - \norm{B(\coef, M)}_{\coef}^*)\frac{c_{\coef}}{\sqrt{2}\norm{\dict}_2^2} \norm{\dict - \refdict}_F - \max_j \E |\coef_j| \cdot \norm{\dict - \refdict}_F^2\\
\geq & (1 - \norm{B(\coef, M)}_{\coef}^*)\frac{c_{\coef}}{4\sqrt{2}\norm{\refdict}_2^2} \norm{\dict - \refdict}_F - \max_j \E |\coef_j| \cdot \norm{\dict - \refdict}_F^2.\label{Eq:to_continue}
\end{align}

Because $\norm{M - M^*}_F \leq (\norm{\dict}_2 + \norm{\refdict}_2)\cdot \norm{\dict - \refdict}_F\leq 3\norm{\refdict}_2\cdot \norm{\dict - \refdict}_F$ and $\norm{\dict - \refdict}_F\leq \frac{c_{\coef}(1 - \norm{B(\coef, M^*)}_{\coef}^*) }{8\sqrt{2} \max_j \E |\coef_j|\norm{\refdict}_2}$
we know $\norm{M - M^*}_F \leq \frac{c_{\coef}(1 - \norm{B(\coef, M^*)}_{\coef}^*)}{2\max_j \E|\coef_j|\norm{\refdict}_2}\leq \frac{c_{\coef}(1 - \norm{B(\coef, M^*)}_{\coef}^*)}{2\max_j \E|\coef_j|}$. Based on this inequality, we have:
\begin{align*}
1 - \norm{B(\coef, M)}_{\coef}^* \geq & 1 - \norm{B(\coef, M^*)}_{\coef}^* - |\norm{B(\coef, M)}_{\coef}^* - \norm{B(\coef, M^*)}_{\coef}^*|\\
\geq & 1 - \norm{B(\coef, M^*)}_{\coef}^* - \norm{B(\coef, M) - B(\coef, M^*)}_{\coef}^*\\
\geq & 1 - \norm{B(\coef, M^*)}_{\coef}^* - \frac{1}{c_{\coef}}\norm{B(\coef, M) - B(\coef, M^*)}_F\\
\geq & 1 - \norm{B(\coef, M^*)}_{\coef}^* - \frac{1}{c_{\coef}}\max_j \E |\coef_j|\cdot \norm{M - M^*}_F\\
\geq & \frac{1}{2}(1 - \norm{B(\coef, M^*)}_{\coef}^*).
\end{align*}
Based on this, \eqref{Eq:to_continue} is bounded by:
\begin{align}
&\frac{1}{2}(1 - \norm{B(\coef, M^*)}_{\coef}^*)\frac{c_{\coef}}{4\sqrt{2}\norm{\refdict}_2^2} \norm{\dict - \refdict}_F - \max_j \E |\coef_j| \cdot \norm{\dict - \refdict}_F^2\\
\geq & \left(\frac{c_{\coef}(1 - \norm{B(\coef, M^*)}_{\coef}^*) }{8\sqrt{2} \max_j \E |\coef_j|\norm{\refdict}_2^2}- \norm{\dict - \refdict}_F\right)  \norm{\dict - \refdict}_F \max_j \E |\coef_j|\geq 0.
\end{align}
This shows the LHS is positive when $\dict \neq \refdict$ and we have completed the proof.
\end{proof}
\begin{proof}[Proof of Theorem \ref{Thm:unique}]
In order to prove Theorem \ref{Thm:unique}, it suffices to prove any dictionary $\dict$ in $\BASIS{\R^\dim}$ other than $\refdict$ will not be a `sharp' local minimum. Recall $\bm \beta(\dict)$ is the coefficient of the samples under dictionary $\dict$, i.e., $\bm \beta(\dict) = \inv{\dict} \signal$. For notation ease, we will omit $\dict$ and simply write $\bm \beta$. 

The following lemma provides a necessary condition for a dictionary to be a `sharp' local minimum. 
\begin{lemma}\label{lemma:full_rank}
For any dictionary $\dict$, if $\dict$ is a `sharp' local minimum of optimization form \eqref{Eq:main2}, then for any $k=1,\ldots, \dim$, $\bm \beta \cdot \1(\bm \beta_k = 0)$ does not lie in any linear subspace of dimension $\dim - 2$. 
\end{lemma}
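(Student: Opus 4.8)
The plan is to prove the contrapositive. Assume $\dict$ is a sharp local minimum and suppose, for contradiction, that for some $k$ the random vector $\bm\beta\cdot\1(\bm\beta_k=0)$ lies almost surely in a linear subspace of dimension $\dim-2$. Since this vector always has vanishing $k$-th coordinate, its support is contained in the coordinate hyperplane $H_k=\{w\in\R^\dim:\ w_k=0\}$, which has dimension $\dim-1$; being moreover contained in a $(\dim-2)$-dimensional subspace, it spans a proper subspace of $H_k$, so there is a nonzero vector $u$ with $u_k=0$ and $\langle u,\bm\beta\rangle\,\1(\bm\beta_k=0)=0$ almost surely. The idea is to use $u$ to build a perturbation of $\dict$ along which the objective grows by only $o(\norm{\dict_t-\dict}_F)$, contradicting sharpness.

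\textbf{Constructing the perturbation.} Write $M=\dict^T\dict$ and define the matrix $\Delta$ by $\Delta_{k,j}=u_j$ for $j\neq k$, $\Delta_{j,j}=-M_{j,k}u_j$ for $j\neq k$, and all remaining entries (including $\Delta_{k,k}$ and the rest of column $k$) equal to zero. Using $M_{j,j}=1$ one checks directly that $M[j,]\Delta_j=0$ for every $j$ (for $j\ne k$ the off-diagonal and diagonal contributions cancel, for $j=k$ the column vanishes), and that $\norm{\Delta}_F^2=\sum_{j\neq k}(1+M_{j,k}^2)u_j^2$ is of order $\|u\|_2^2$. By the fourth statement of Lemma \ref{lemma:1} (applied with base dictionary $\dict$), for all sufficiently small $t>0$ there is $\dict_t\in\BASIS{\R^\dim}$ with $\inv{\dict_t}\dict-\I-\Lambda(\dict_t,\dict)=t\Delta$; since $\dict_t\to\dict$ as $t\to 0$, the third statement of Lemma \ref{lemma:1} gives $\norm{\dict_t-\dict}_F=\Theta(t)$.

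\textbf{Estimating the objective and concluding.} The sandwich bound of Lemma \ref{lemma:3} is derived using only Lemmas \ref{lemma:1} and \ref{lemma:2} and the dominated convergence theorem, so it holds verbatim with any base dictionary in place of $\refdict$; applied with base $\dict$ and coefficient $\bm\beta=\inv{\dict}\signal$ (whose first absolute moment is finite, since $\E\|\bm\beta\|_1$ is the finite objective value at the sharp local minimum $\dict$), its upper bound yields
\[
\E\|\inv{\dict_t}\signal\|_1\ \le\ \E\|\bm\beta\|_1 + t\,\norm{\Delta}_{\bm\beta} - t\,\tr\!\big(B(\bm\beta,M)^T\Delta\big) + o(t).
\]
Now $\norm{\Delta}_{\bm\beta}=0$: for $k'\neq k$ the $k'$-th row of $\Delta$ is purely diagonal, contributing $\Delta_{k',k'}\bm\beta_{k'}\1(\bm\beta_{k'}=0)=0$; and the $k$-th row contributes $\E|\sum_{j\neq k}u_j\bm\beta_j|\,\1(\bm\beta_k=0)=\E|\langle u,\bm\beta\rangle|\,\1(\bm\beta_k=0)=0$ by the choice of $u$. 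Writing $c=\tr(B(\bm\beta,M)^T\Delta)$ and replacing $u$ by $-u$ if needed so that $c\ge 0$, we obtain $\E\|\inv{\dict_t}\signal\|_1\le\E\|\bm\beta\|_1-c\,t+o(t)$. If $c>0$ this contradicts $\dict$ being a local minimum; if $c=0$ the objective grows by only $o(t)=o(\norm{\dict_t-\dict}_F)$, contradicting Definition \ref{Def:sharplocalmin} for every $\epsilon>0$. Either way we reach a contradiction, which proves the lemma.

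\textbf{Main obstacle.} The delicate point is the choice of perturbation: $\Delta$ must lie in the admissible set $\{\Delta:\ M[j,]\Delta_j=0\ \forall j\}$ dictated by Lemma \ref{lemma:1} while simultaneously annihilating the semi-norm term $\norm{\Delta}_{\bm\beta}$, so that only the linear term $\tr(B(\bm\beta,M)^T\Delta)$ survives and can be rendered non-positive by a sign flip. The dimension-$(\dim-2)$ degeneracy of $\bm\beta\cdot\1(\bm\beta_k=0)$ is exactly what makes such a $\Delta$ available; the accompanying checks — that the diagonal corrections $\Delta_{j,j}=-M_{j,k}u_j$ keep $\Delta$ admissible and never reach any indicator $\1(\bm\beta_{k'}=0)$, and that Lemma \ref{lemma:3} uses no special property of $\refdict$ — are routine but must be done carefully.
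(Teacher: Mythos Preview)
Your proof is correct and follows essentially the same route as the paper. The paper's version appeals to ``the proof of Theorem~1'' to extract the first-order sharpness inequality $\tr(B(\bm\beta,M)^T\Delta)<\norm{\Delta}_{\bm\beta}$ for all admissible $\Delta$, flips the sign of $\Delta$ to conclude $\norm{\Delta}_{\bm\beta}>0$, and then argues that a $(\dim-2)$-dimensional degeneracy would furnish a nonzero $\Delta$ violating this; you do the same thing but make the construction of the admissible $\Delta$ (row-$k$ entries $u_j$ with diagonal corrections $-M_{j,k}u_j$) and the invocation of Lemmas~\ref{lemma:1} and~\ref{lemma:3} at a generic base dictionary fully explicit, which the paper leaves implicit.
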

\begin{proof}[Proof of Lemma \ref{lemma:full_rank}]
If $\dict$ is a sharp local minimum, by the proof of Theorem 1, it should satisfy \eqref{Eq:local_sharp_condition_general}.

\begin{equation}\label{Eq:local_sharp_condition_general}
\sum_{j, k} \Delta_{k, j} (\E \bm \beta_j \sgn(\bm \beta_k) - M_{j, k} \E |\bm \beta_j|) < \sum_{k}\E |\sum_{j} \Delta_{k, j} \bm \beta_j|\1(\bm \beta_k = 0).
\end{equation}
For any $\Delta_{k, j}$, let $\Delta'_{k, j} \triangleq -\Delta_{k, j}$, it should also satisfy \eqref{Eq:local_sharp_condition_general}. That makes
\[
-\sum_{j, k} \Delta_{k, j} (\E \bm \beta_j \sgn(\bm \beta_k) - M_{j, k} \E |\bm \beta_j|) < \sum_{k}\E |\sum_{j} \Delta_{k, j} \bm \beta_j|\1(\bm \beta_k = 0).
\]
Thus we have 
\begin{equation}\label{Eq:strict_positive}
\E |\sum_{j=1,j\neq k}^\dim \Delta_{k, j} \bm \beta_j|\1(\bm \beta_k = 0) > 0.
\end{equation}
If $\bm \beta \1(\beta_k = 0)$ lies in a linear subspace of dimension $\dim - 2$, because there are $\dim - 1$ free parameters in $\Delta_{j,k}$ for $j \neq k$, we can find a set of nonzero $\Delta_{j,k}$ such that $\sum_{j=1,j\neq k}^\dim \Delta_{k, j} \bm \beta_j\cdot \1(\bm \beta_k = 0) = 0$. That contradicts \eqref{Eq:strict_positive}. Therefore, $\bm \beta \1(\beta_k = 0)$ does not lie in any linear subspace of dimension $\dim - 2$.
\end{proof}

In order to show $\dict\neq \refdict$ up to sign-permutation is not a `sharp' local minimum, by Lemma \ref{lemma:full_rank}, it suffices to find a $k$ such that almost surely the random vector $\bm \beta \cdot \1(\bm \beta_k = 0)$ lies in a linear manifold of dimension at most $\dim - 2$. 

Note that $\bm \beta = \inv{\dict}\refdict \coef$ is linear transform of $\coef$. For $\dict\neq \refdict$ up to the sign-permutation sense, $\inv{\dict}\refdict\neq \I$, which means there exists $k$ such that $\bm \beta_k \neq \coef_{k'}$ for any $k' = 1,\ldots, \dim$. This means $\bm \beta_k$ is the linear combination of at least two elements in $\coef$. Without loss of generality, $\bm \beta_k = \sum_{l=1}^T c_l \coef_l$ such that $c_1,\ldots, c_T\neq 0$ and $T \geq 2$. Because of Assumption I, $\bm \beta_k = 0$ implies $\coef_1 = \ldots = \coef_T = 0$. Thus,
$\bm \beta \1(\beta_k = 0) = \inv{\dict}\refdict \coef \1(\coef_1=\ldots=\coef_T = 0)$, we know $\bm \beta \1(\beta_1 = 0)$ lies in a linear manifold of dimension $\dim - T$ almost surely. That completes the proof.
\end{proof}

\begin{proof}[Proof of Theorem \ref{Thm:global}]
The most important step is to show that as $\frac{\dim \ln \dim}{n} \rightarrow 0$, the finite population satisfies the assumption I asymptotically:
\begin{align}
\sup_{c_1,\ldots, c_\dim}\frac{1}{n}\sum_{i=1}^n \1(\sum_{j} c_j \coef_j^{(i)} = 0~\text{and}~ \sum_{j=1}^\dim \left(c_j\coef_j^{(i)}\right)^2 > 0) \rightarrow 0\quad \text{a.s..}\label{Eq:finite_sample_essense}
\end{align}
Then following essentially the same steps in the proof of Theorem 3, it is easy to show that the sharpness of any local minimum other than the reference dictionary will go to zero. 

In order to prove \eqref{Eq:finite_sample_essense}, let's define $f_c(\coef) \triangleq \1(\sum_{j=1}^\dim c_j \coef_j = 0\text{~and~}\sum_{j=1}^\dim \left(c_j\coef_j^{(i)}\right)^2 > 0)$, $\mathcal{F}(\coef) \triangleq \{f_c(\cdot)|c\in \R^{\dim}\} $ and consider its VC dimension. We are going to prove the VC dimension of $\mathcal{F}$ is no bigger than $2\dim$, namely, for any $\coef^{(1)}, \ldots, \coef^{(2\dim)}$, define a set
\[
\mathcal{F}^{(2\dim)}(\coef^{(1)},\ldots, \coef^{(2\dim)}) \triangleq \{(f_c(\coef^{(1)}),\ldots, f_c(\coef^{(2\dim)}))|c\in \R^\dim\},
\]
The cardinality of $\mathcal{F}^{(2\dim)}$ is not $2^{2\dim}$.
If $\underbrace{(1,\ldots, 1)}_\text{$2\dim$}$ is not in $\mathcal{F}^{(2\dim)}$, then we are done. Otherwise, there exists $c$ s.t. $f_c(\coef^{(i)}) = 1$ for any $i = 1,\ldots, 2\dim$. That means $\sum_j c_j \coef_j^{(i)} = 0$ for any $i = 1,\ldots, 2\dim$. Therefore, the dimension of the linear space spanned by $\coef^{(1)},\ldots, \coef^{(2\dim)}$ is at most $\dim - 1$. So we could find $\dim - 1$ coefficients such that all other coefficients are their linear combinations. Without loss of generality, let's assume those coefficients are $\coef^{(1)}, \ldots, \coef^{(\dim - 1)}$. Define the support of a vector to be the entries where it is nonzero. For $\coef^{(\dim)},\ldots, \coef^{(2\dim)}$, there will be one coefficient whose support is contained in the union of all the other coefficients. If that's not the case, each coefficient can be mapped to one entry which is only contained in its own support but not any support of other coefficients. But there are $\dim + 1$ coefficient and only $\dim$ entries, which will make a contradiction. Without loss of generality, let's assume 
that coefficient is $\coef^{(\dim)}$. Then let's show that $(\underbrace{1,\ldots, 1}_{\dim},0,\ldots, 0)\not\in \mathcal{F}^{(2\dim)}(\coef^{(1)},\ldots, \coef^{(2\dim)})$. Since $f_c(\coef^{(i)}) = 1$ for $i = 1,\ldots, \dim - 1$, we have 
\[
\sum_j c_j \coef_j^{(i)} = 0\quad \forall~i=1,\ldots, \dim - 1.
\]
Because $\coef^{(\dim)},\ldots, \coef^{(2\dim)}$ are linear combinations of $\coef^{(1)},\ldots, \coef^{(\dim - 1)}$, we know 
\[
\sum_j c_j \coef_j^{(i)} = 0\quad \forall~i=\dim ,\ldots, 2\dim.
\]
If $f_c(\coef^{(i)}) = 0$ for $i = \dim + 1, \ldots, 2\dim$, it means 
\[
\sum_j \left(c_j \coef_j^{(i)}\right)^2 = 0\quad \forall~i = \dim + 1, \ldots, 2\dim,
\]
which means the support of $c$ does not overlap with the support of $\coef^{(\dim+1)},\ldots, \coef^{(2\dim)}$. However, the support of $\coef^{(\dim)}$ is contained in the union of the supports of $\coef^{(\dim+1)},\ldots, \coef^{(2\dim)}$. That means $f_c(\coef^{(\dim)} = 0$ not 1, a contradictory. 

Then by classic results in statistical learning, for example, see \cite{Martin2017}, we know $\mathcal{F}$ is Glivenko–Cantelli and \eqref{Eq:finite_sample_essense} holds as long as $\frac{\dim \ln \dim}{n} \rightarrow 0$.

If we can prove:
\begin{align*}
& P(\text{there exists a local minimum $\dict\neq \refdict$ with sharpness at least $\epsilon$  and $\norm{\dict^{-1}}_2\leq \rho$}) \\
\leq &  P(\sup_{c_1,\ldots, c_\dim}\frac{1}{n}\sum_{i=1}^n f_c(\coef^{(i)}) > \epsilon') .
\end{align*}
Then, using \eqref{Eq:finite_sample_essense}, we will get the desired conclusion.

Now we are going to prove that for any $\epsilon, \rho > 0$, there exists $\epsilon' > 0$ and if $\dict\neq \refdict$ is a local min with sharpness at least $\epsilon$ and $\norm{\dict^{-1}}_2 \leq \rho$, then $\sup_{c_1,\ldots, c_\dim}\frac{1}{n}\sum_{i=1}^n f_c(\coef^{(i)})> \epsilon'.$ If $\dict$ is a local minimum with sharpness at least $\epsilon$, by definition, we have
\[
\frac{1}{n}\sum_{i=1}^n \|\inv{(\dict')}\signal^{(i)}\|_1 - \frac{1}{n}\sum_{i=1}^n \|\inv{\dict}\signal^{(i)}\|_1 \geq \epsilon \norm{\dict' - \dict}_F + o(\norm{\dict' - \dict}_F).
\]
Denote $\bm \beta^{(i)} = \inv{\dict}\signal^{(i)}$ for $i = 1,\ldots, n$, by Lemma \ref{lemma:1}, we have 
\[
\norm{\inv{(\dict')}\dict}_{\bm \beta} - \tr((\inv{(\dict')}\dict)^TB(\bm \beta, \dict^T\dict) \geq \epsilon \norm{\dict' - \dict}_F + o(\norm{\dict' - \dict}_F).
\]
Without loss of generality, we could select $\dict'$ (or $-\dict'$) such that $\tr((\inv{(\dict')}\dict)^TB(\beta, \dict^T\dict) \geq 0$, this leads to
\[
\norm{\inv{(\dict')}\dict}_\beta \geq \epsilon \norm{\dict' - \dict}_F + o(\norm{\dict' - \dict}_F).
\]
Now let's find a $w\neq 0$, $\|w\|_2 = 1$, and a dictionary $\dict'\neq \dict$ such that $\norm{\inv{(\dict')}\dict}_\beta = \sum_{k}\frac{1}{n}\sum_i |\sum_j w_j \beta^{(i)}_j|\1(\beta_k^{(i)} = 0) > \epsilon/2 \norm{\dict' - \dict}_F > 0$. Since $\dict \neq \refdict$ up to sign-permutation ambiguity, that means at least one row of $\inv{\dict}\refdict$ contains two nonzero elements. Without loss of generality, assume the $k$-th row of $\inv{\dict}\refdict$, denoted as $c^{(k)}$, has at least two nonzero entries. We are going to prove it satisfies the desired condition: 
\[
\frac{1}{n}\sum_{i=1}^n f_{c^{(k)}}(\coef^{(i)}) > \epsilon'.
\]
Intuitively we would like to show there are many $\coef^{(i)}$'s  that satisfy $f_{c^{(k)}}(\coef^{(i)}) = 1$, it means $\sum_{j} c_j^{(k)} \coef_j^{(i)} = \bm \beta^{(i)} = 0$ and $\sum_{j} (c_j^{(k)} \coef_j^{(i)})^2 > 0$.  For any $i$ such that $f_{c^{(k)}}(\coef^{(i)}) = 0$ and $\mathbf \beta^{(i)}_k = 0$, similar to the proof of Theorem 3, $\beta^{(i)}$ lie in a manifold of dimension at most $\dim - 2$. Therefore, we could find a $w$ such that $w_k = 0$, $\|w\|_2 = 1$, and $\sum_j w_j \mathbf \beta^{(i)}_j = 0$ for any $\mathbf \beta$ satisfying $f_{c^{(k)}}(\coef^{(i)}) = 0$ and $\mathbf \beta^{(i)}_k = 0$. By Proposition \ref{prop:param}, this $w$ corresponds to a $\dict'$ and because $\dict$ is sharp local min with sharpness at least $\epsilon$, we know $\norm{\inv{\dict'}\dict}_{\beta} > \epsilon \|w\|_2 = \epsilon$. That shows $\frac{1}{n}\sum_i |\sum_j w_j \beta^{(i)}_j|f_{c^{(k)}}(\coef^{(i)}) \geq \frac{1}{n}\sum_i |\sum_j w_j \beta^{(i)}_j|\1(\beta_k^{(i)} = 0)  > \epsilon'$. Because $\norm{\dict^{-1}}_2 \leq \rho$ and $\|\coef^{(i)}\|_1$ is bounded, $|\sum_j w_j \beta^{(i)}_j|$ is bounded, too. That shows  $\max \{|\sum_j w_j \beta^{(i)}_j| \}\frac{1}{n}\sum_i f_{c^{(k)}}(\coef^{(i)} > \frac{1}{n}\sum_i |\sum_j w_j \beta^{(i)}_j|f_{c^{(k)}}(\coef^{(i)} > \epsilon'$, which shows $\frac{1}{n}\sum_i f_{c^{(k)}}(\coef^{(i)} > \epsilon'/\max \{|\sum_j w_j \beta^{(i)}_j| \}$. That completes the proof.
\end{proof}

\begin{proof}[Proof of Proposition \ref{prop:sharp_is_stable}]
$1) \leftrightarrow 2)$: by Theorem \ref{Thm:local}, we know 1) is equivalent to 
\[
\norm{B(\bm \beta, M)}_{\coef}^* < 1.
\]
Because of the definition of $\norm{\cdot}_{\coef}$, this condition is equivalent to for any $k=1,\ldots, \dim$,
and $\delta_{k,j}\in \R$ for $j = 1,\ldots, \dim$, $j \neq k$ such that $\sum_{j\neq k}\delta_{k,j}^2 > 0$, 
\[
\Big| \sum_{j} \delta_{k,j} \beta_j\Big|\1(\beta_k = 0) + \sum_{k,j}\delta_{j}\beta_{j}\sgn(\beta_k) + M_{k,j}\E |\beta_k| > 0.
\]
On the other hand, the left hand side is exactly the directional derivative of the optimization \ref{Eq:coordinatewise_sharp} at $\I_j$ along direction $(\delta_1,\ldots, \delta_\dim)$. Because every directional derivative is strictly positive, it is equivalent to 2).

$2)\leftrightarrow 3)$. We have already shown that 2) is equivalent to 
\[
\Big| \sum_{j} \delta_{k,j} \beta_j\Big|\1(\beta_k = 0) + \sum_{k,j}\delta_{j}\beta_{j}\sgn(\beta_k) + M_{k,j}\E |\beta_k| > 0.
\]
3) is equivalent to 
\[
\Big| \sum_{j} \delta_{k,j} \beta_j\Big|\1(\beta_k = 0) + \sum_{k,j}\delta_{j}\beta_{j}\sgn(\beta_k) + \tilde M_{k,j}\E |\beta_k| \geq 0.
\]
for any $|\tilde M_{k,h} - M_{k,h}|\leq \rho$. These two are clearly equivalent.
\end{proof}
\begin{proof}[Proof of Proposition \ref{prop:param}]
Without loss of generality, we only need to show $\|\inv{Q}_j\|_2 = 1$ for any $j = 1,\ldots, \dim$ when $k = 1$. We can write $Q = \Gamma \inv{\dict}$, where $\Gamma_{h, j} = w_h$ when $h = 1$ and $\Gamma_{h,j} = \sqrt{(w_h - M_{1,h})^2 + 1 - m_h^2}$ when $h\neq 1, j  = h$ and $0$ otherwise. Then $\inv{Q} =\dict \inv{\Gamma}$ and $$\inv{\Gamma}_{h,j} = \left\{
\begin{array}{cc} 1 & h = 1, j = 1\\
-w_j/(\sqrt{(w_h - M_{1,h})^2 + 1 - M_{1, h}^2}) & h = 1, j > 1\\
1/(\sqrt{(w_h - M_{1,h})^2 + 1 - M_{1, h}^2}) & h > 1, j = h\\
0 & h > 1, j \neq h
\end{array}\right.$$
For $h = 1$, $\|\inv{Q}_h\|_2^2 = \|\dict_1\|_2^2 = 1$. For any $h > 1$, $\|\inv{Q}_h\|_2^2 = \|w_h\dict_1 - \dict_h\|^2_2/((w_h - M_{1,h})^2 + 1 - M_{1, h}^2)=1$. That completes the proof.
\end{proof}
\begin{proof}[Proof of Proposition \ref{prop:monotone}]
Recall $f(\dict) = \sum_{i=1}^n \sum_{j = 1}^\dim \min(|\dict^{-1}[j,]\signal^{(i)}|, \tau)$. Denote $\beta^{(i)} = \inv{(\dict^{(t,j)})}\signal^{(i)}$ and define a new function
$\tilde{f}(\dict) =  \sum_{j = 1}^\dim \sum_{i=1, |\beta^{(i)}_j|\leq \tau }^n|\dict^{-1}[j,]\signal^{(i)}| + \sum_{i=1, |\beta^{(i)}_j|>\tau }^n\tau.$ Note that for any $\dict$, $\tilde f(\dict)$ is always no smaller than $f(\dict)$, that is, $\tilde f(\dict) \geq f(\dict)$. 
Because of Proposition \ref{prop:param}, we know the iterate $\dict^{(t, j+1)}$ in Algorithm \ref{Alg:bcd} is the optimal solution of the following optimization:
\begin{align*}
\textrm{argmin}_Q~ & \tilde{f}(Q^{-1})\\
\textrm{subject~to} ~& Q ~ \text{is parameterized as in Proposition \ref{prop:param}}.
\end{align*}
That means $\tilde f(\dict^{(t, j+1)}) \leq \tilde f(\dict^{(t, j)})$.
Note that $f(\dict^{(t, j)}) = \tilde{f}(\dict^{(t, j)})$ and $f(\dict^{(t, j+1)}) \leq \tilde{f}(\dict^{(t, j+1)})$. Thus, $f(\dict^{(t, j+1)}) \leq \tilde{f}(\dict^{(t, j+1)}) \leq \tilde{f}(\dict^{(t, j)}) = f(\dict^{(t, j)})$. That completes the proof.
\end{proof}
\section*{Appendix D: Parameter settings of dictionary learning algorithms in Section \ref{Sec:5.4}}
 \begin{itemize}
    \item EM-BiG-AMP: In EM-BiG-AMP, there is an outer loop that performs EM iterations and there is an inner loop. The outer loop is allowed up to 20 iterations. The inner loop is allowed a minimum of 30 and a maximum of 1500 iterations. 
    \item K-SVD: K-SVD has two parameters: number of iterations and the enforced sparsity. The number of iterations is set to be 1000. The enforced sparsity is set to be the same as the true sparsity of the underlying model $s$. 
    \item SPAMS: SPAMS optimizes an LASSO type objective iteratively. The number of iterations is set to be 1000 and the penalty parameter in front of the L1 norm is $\lambda = .1/\sqrt{N}$. 
    \item DL-BCD: Our algorithm has an outer loop and an inner loop. The outer loop is set to be at most 3. The inner loop is allowed a maximum of $100$ iterations.  $\tau$ is either $\infty$ or $0.5$. 
    \item ER-SpUD: We use the default settings in the package developed by the authors of ER-SpUD.
\end{itemize}
\end{document}